\newcommand*{\rom}[1]{\expandafter\@slowromancap\romannumeral #1@}
\newcommand{\iiddistr}{{\stackrel{\text{\iid}}{\sim}}}
\newcommand{\reals}{{\mathbb{R}}}
\newcommand{\red}[1]{\textcolor{red}{{#1}}}
\newcommand{\nbr}[1]{\red{\sf[#1]}}
\newcommand{\nbo}[1]{{\color{orange}{\sf[#1]}}}
\newcommand{\ls}[1]{\nbo{LS: #1}}
\newcommand{\var}{\mathrm{Var}}
\newcommand{\Cov}{\text{Cov}}
\newcommand{\ie}{i.e.\xspace}
\newcommand{\iid}{i.i.d.\xspace}
\newcommand{\expect}[1]{\mathbb{E}\left[ #1 \right]}
\newcommand{\expects}[2]{\mathbb{E}_{#2}\left[ #1 \right]}
\newcommand{\diag}[1]{\mathsf{diag} \left( {#1} \right) }
\newcommand{\RN}[1]{%
  \textup{\uppercase\expandafter{\romannumeral#1}}%
}
\DeclareFontFamily{U}{mathx}{}
\DeclareFontShape{U}{mathx}{m}{n}{<-> mathx10}{}
\DeclareSymbolFont{mathx}{U}{mathx}{m}{n}
\DeclareMathAccent{\widehat}{0}{mathx}{"70}
\DeclareMathAccent{\widecheck}{0}{mathx}{"71}
\newif\ifqedplaced
  \renewenvironment{proof}[1][Proof]{%
    \par\noindent{\textbf{#1.}\ }%
    \qedplacedfalse 
  }{%
    \ifqedplaced\else \hfill\qed\fi
    \\[2mm]%
  }
  \newenvironment{proof}[1][Proof]{%
    \par\noindent{\textbf{#1.}\ }%
    \qedplacedfalse 
  }{%
    \ifqedplaced\else \hfill\qed\fi
    \\[2mm]%
  }
\newtheorem{theorem}{Theorem}[section]
\newtheorem{lemma}{Lemma}[section]
\newtheorem{proposition}{Proposition}[section]
\newtheorem{corollary}{Corollary}[section]
\theoremstyle{definition}
\newtheorem{definition}{Definition}[section]
\newtheorem{assumption}{Assumption}[section]
\newtheorem{example}{Example}[section]
\title{
Learning with Shared Representations: Statistical Rates and Efficient Algorithms}
\author{Xiaochun Niu \and Lili Su \and Jiaming Xu \and Pengkun Yang \thanks{X. Niu and J. Xu are with
 the Fuqua School of Business, Duke University, \texttt{\{xiaochun.niu,jx77\}@duke.edu}. L. Su is with the Department of Electrical and Computer Engineering, 
Northeastern University,
\texttt{l.su@northeastern.edu}. P. Yang is with the Department of Statistics and Data Science,
Tsinghua University,
\texttt{yangpengkun@tsinghua.edu.cn}.}}
\date{}
\begin{document}

\maketitle

\begin{abstract}
Collaborative learning through latent shared feature representations enables heterogeneous clients to train personalized models with improved performance and reduced sample complexity. Despite empirical success and extensive study, the theoretical understanding of such methods remains incomplete, even for representations restricted to low-dimensional linear subspaces.
In this work, we establish new upper and lower bounds on the statistical error in learning low-dimensional shared representations across clients. Our analysis captures both statistical heterogeneity (including covariate and concept shifts) and variation in local dataset sizes, aspects often overlooked in prior work. We further extend these results to nonlinear models including logistic regression and one-hidden-layer ReLU networks.
 
Specifically, we design a spectral estimator that leverages independent replicas of 
local averages to approximate the non-convex least-squares solution and derive a nearly matching minimax lower bound. Our estimator achieves the optimal statistical rate when the shared representation is well covered across clients---i.e., when no direction is severely underrepresented. Our results reveal two distinct phases of the optimal rate: a standard parameter-counting regime and a penalized regime when the number of clients is large or local datasets are small. These findings precisely characterize when collaboration benefits the overall system or individual clients in transfer learning and private fine-tuning.   
\end{abstract}

\section{Introduction}\label{sect:intro}

Modern machine learning and data science often involve heterogeneous datasets collected from multiple related sources, such as different devices, organizations, or tasks. 
This has motivated extensive research on learning shared feature representations that capture the underlying common structure across these sources. 
By leveraging such shared representations, individual models can be trained with far fewer samples than if each were learned independently from scratch. This framework has many applications, including multi-task and transfer learning \citep{caruana1997multitask,duan2023adaptive}, private fine-tuning with public knowledge \citep{thaker2023leveraging}, and personalized federated learning \citep{fallah2020personalized,collins2021exploiting,even2022sample}.

Despite its success, the theoretical understanding of learning shared representations---particularly regarding statistical error rates---remains incomplete. Fundamental questions persist: How can we design optimal methods to learn shared representations from data, and under what conditions do they outperform independently trained models? These challenges remain open even for low-dimensional linear subspace models. 

We consider a widely studied model with $M$ clients (or tasks). Each client $i$ observes $n_i$ independent data samples $\{(x_{ij}, y_{ij})\}_{j=1}^{n_i}$, where $x_{ij} \in \R^{d}$ is the covariate and $y_{ij}\in\R$ is the response, generated from a linear model with local parameter $\theta^\star_i \in \R^d$:
\#\label{eq:model-sup}
\E[y_{ij}\given x_{ij}] = x_{ij}^\intercal \theta_{i}^\star \,.
\#
In many modern settings, the dimension $d$ far exceeds the local data size $n_i$. Such high dimensionality makes it impossible to reliably estimate $\theta_i^\star$ from local data alone.
To address this, 
we assume a shared low-dimensional structure. Let $\Gamma_i = \E[x_{ij}x_{ij}^\intercal]$ be the \emph{unknown} covariance matrix. 
Suppose there exists an orthonormal matrix $B^\star \in \cO^{d\times k}$ with $k\le d$, representing the shared low-dimensional subspace,
and client-specific $\alpha_i^\star \in \R^k$ such that
\#\label{eq:low-d-structure}
\Gamma_i\theta_i^\star = B^\star\alpha_i^\star.
\#
Here different $\Gamma_i$'s and $\theta_i^\star$ allow for potential covariate and concept shifts~\citep
{kairouz2021advances}. This is also closely related to the classic factor model in statistics and econometrics~\citep{bai2008large}. 
The clients aim to collaboratively learn the shared low-dimensional representation $B^\star$ from their datasets $\{\{(x_{ij}, y_{ij})\}_{j=1}^{n_i}\}_{i=1}^M$. 
If $B^\star$ is learned accurately, each $x_{ij}$ can be projected onto the shared subspace, which reduces the estimation problem from the $d$-dimensional $\theta_i^\star$ to the $k$-dimensional $\alpha_i^\star$; thus addresses high-dimensional challenges. 

It is worth noting that the learnability of $B^\star$ depends on the information contained in $\{(x_{ij}, y_{ij})\}$ for each of its $k$ columns. Intuitively, the data provide information about $B^\star$ along the directions $\alpha_i^\star$, with each direction supported by $n_i$ samples. Hence, a key factor determining the difficulty of learning $B^\star$ is the spectrum of the matrix 
\#\label{eq:diversity-matrix}
D = \frac{1}{N}\sum_{i=1}^M n_i \alpha_i^\star (\alpha_i^\star)^\intercal,
\# 
where $N=\sum_{i=1}^M n_i$ is the total number of samples. This matrix $D$, known as the \emph{client diversity matrix} \citep{du2021fewshot,tripuraneni2020theory,tripuraneni2021provable,collins2021exploiting,thekumparampil2021statistically,tian2023learning,zhang2024sample}, captures the diversity of the weighted client-specific parameters $\sqrt{n_i}\alpha_i^\star$ across clients. Let $\lambda_r$ be the $r$-th largest eigenvalue of $D$ for $r\in [k]$. 
Under the standard normalization~\citep{tripuraneni2021provable,duchi2022subspace,tian2023learning} $\|\alpha_i^\star\|=O(1)$ for all $i$, 
we have  $\sum_{r=1}^k\lambda_r = O(1)$. 
The problem of estimating $B^\star$ is considered \emph{well-represented} if the condition number $\lambda_1/\lambda_k=\Theta(1)$, meaning that ${\sqrt{n_i}\alpha_i^\star}$ are evenly distributed and the data provides sufficient information along all $k$ columns of $B^\star$. 
In contrast, the problem is ill-represented when $\sqrt{n_i}\alpha_i^\star$ are concentrated in a few directions and $D$ has a large condition number. Then the data provides limited information in other directions, making it hard to estimate all columns of $B^\star$.

Prior studies \citep{tripuraneni2021provable,du2021fewshot,collins2021exploiting,thekumparampil2021statistically,chua2021fine,duchi2022subspace,duan2023adaptive,tian2023learning,zhang2024sample} have analyzed the statistical error rates for estimating $B^\star$, measured by the principal angle distance (Definition~\ref{def:principal_angle}).
Except for \cite{tripuraneni2021provable}, these works focus on equal data partitions $n_i\equiv n$.
Table \ref{tab:bounds_comparison} summarizes the state-of-the-art (SOTA) bounds in both general and well-represented cases.\footnote{We focus on the specific well-represented case $\lambda_1$$=\Theta(\lambda_k)=\Theta(1/k)$, which naturally arises when $\|\alpha_i\| = \Theta(1)$ for all $i$, implying $\sum_{r=1}^k \lambda_r = \tr(D) = \Theta(1)$. For clarity, we ignore polylogarithmic factors and use big-$O$ notation in Section \ref{sect:intro}, while later sections use $\widetilde{O}$ to emphasize the hidden polylogarithmic factors.}   
However, substantial gaps remain between the best-known upper and lower bounds. 
And even in well-represented cases, 
existing results have suboptimal dependence on the subspace dimension $k$.  These gaps have been acknowledged in many works~\citep{tripuraneni2021provable,thekumparampil2021statistically,thaker2023leveraging,tian2023learning,zhang2024sample} as a challenging open problem: What is the optimal statistical rate to learn the low-dimensional representation $B^\star$?

\begin{table}[!ht]
\centering
\renewcommand{\arraystretch}{1.4} 
\setlength{\tabcolsep}{10pt} 
\begin{tabular}{|c|c|c|c|}
\hline
&  &  \textbf{\small SOTA}
 & \textbf{\small This Work} \\
\hline
\multirow{2}{*}{\textbf{\small General Cases}} 
&  \small Upper Bound & \small
$\sqrt{\frac{d}{N\lambda_k^2}}$ 
& \small $\sqrt{\frac{d\lambda_1}{N\lambda_k^2}} + \sqrt{\frac{Md}{N^2\lambda_k^2}}$ \\ 
\cline{2-4}
& \small Lower Bound & 
\small $\sqrt{\frac{1}{N\lambda_k}} + \sqrt{\frac{dk}{N}}$ 
&\small  $\sqrt{\frac{d}{N\lambda_k}} + \sqrt{\frac{Md}{N^2\lambda_k^2}}$ \\
\hline
\multirow{2}{*}{
 \small \makecell{\textbf{Well-Represented}\\
 $\lambda_1=\Theta(\lambda_k)=\Theta(1/k)$
 }
} 
& \small Upper Bound & \small $\sqrt{\frac{dk^2}{N}}$ & \small \multirow{2}{*}{$\sqrt{\frac{dk}{N}} + \sqrt{\frac{Mdk^2}{N^2}}$}  \\
\cline{2-3}
 & \small Lower Bound & 
 \small $\sqrt{\frac{dk}{N}}$
& {}  \\ 
\hline
\end{tabular}
\vspace{0.8em}
\caption{\small Comparison of estimation error bounds: the SOTA upper and lower bounds are established by \cite{tripuraneni2021provable} under the additional assumption that $x_{ij} \iiddistr N(0, I_d)$.}
\label{tab:bounds_comparison}
\end{table}

\subsection{Main Results}
Our work tackles this challenging open problem by improving both the upper and lower bounds on the error rate for estimating $B^\star$. 
Table \ref{tab:bounds_comparison} highlights our new results. 
The derived upper bound improves over the SOTA by $\sqrt{\min\{\lambda_1, M/N\}}$. Recall that $\lambda_1\le \sum_{r=1}^k \lambda_r = O(1)$ due to normalization. Compared with the SOTA, our lower bound achieves a gain of $\sqrt{d}$ in the first term and introduces a new second term. Notably, the lower and upper bounds differ only by a condition number $\sqrt{\lambda_1/\lambda_k}$ in the first term. Hence, our results successfully identify the optimal statistical rate in well-represented cases. Importantly, the well-represented condition is not overly restrictive and holds in many scenarios with heterogeneous data partitions, as shown in Example \ref{lem:well-cond}. These optimal rates also characterize when collaboration benefits the overall system or individual clients in transfer learning and private fine-tuning.   

Moreover, our results apply to heterogeneous $n_i$, relaxing the equal-partition assumption $n_i\equiv n$ commonly made in prior work. Our analysis requires only $n_i\ge 2$ for each client,\footnote{Note that in the extreme case where every client has only a single data point, \ie, $n_i\equiv 1$, the existing error bound $O(\sqrt{d/(N\lambda_k^2)})$ in~\cite{tripuraneni2021provable,duchi2022subspace} already matches our improved lower bound in Theorem \ref{cor:lb-sup}.  Hence, assuming 
$n_i\ge 2$ does not result in a significant loss of generality.}
highlighting the advantage of learning shared representations particularly when individual clients have limited data. Even with small $n_i$, clients can still collaboratively estimate $B^\star$ despite lacking sufficient samples to accurately learn $\theta_i^\star$ individually. This significantly relaxes the strict assumption $n_i\gg d$ required in \cite{du2021fewshot,chua2021fine,duan2023adaptive,tian2023learning,zhang2024sample}, where each client can already learn $\theta_i^\star$ independently. 
Finally, we extend our results to general non-linear models, such as logistic regression and one-hidden-layer ReLU networks, and show that the same error upper bound in Table~\ref{tab:bounds_comparison} holds for such models with standard Gaussian covariates.

To further illustrate the sharpness of our results, 
Figure \ref{fig:diag} plots the phase diagram for well-represented cases under a specific scaling parameterization: $n_i \equiv n=k^\beta$,  
$M=k^{\gamma+1}$, and $d=k^{\delta+1}$,
where $\beta,\gamma,\delta>0$ are fixed constants. We require $M \geq k$ to ensure that the client diversity matrix $D$ is full rank. 
Each region in the diagram indicates whether consistent estimation of $B^\star$ with vanishing error is achievable as $k\to\infty$.
\footnote{For the existing error upper bound $\sqrt{dk^2/N}$ to be $\Theta(1)$, we need $N=\Theta(dk^2)$.
This translates to $\beta + \gamma = \delta + 2$ with the specific parameterization. Similarly, the existing lower bound $\sqrt{dk/N}$ corresponds to $\beta + \gamma = \delta + 1$, and the second term in our derived error rate $\sqrt{Mdk^2/N^2}$ corresponds to $2\beta + \gamma = \delta + 2$.} Consistent estimation is impossible in light red Region \rom{1} and possible in light blue Region \rom{2}, as established in prior work. 
However, a wide gap remains between these two regions. Our contribution bridges this gap by identifying the optimal statistical rate, which delineates the boundary between Region \rom{4} and Regions \rom{1} and \rom{3}. Specifically, we prove that consistent estimation is impossible in dark red Region \rom{3} and possible in dark blue Region \rom{4}.

\begin{figure}[ht]
\centering\includegraphics[width=0.45\textwidth]{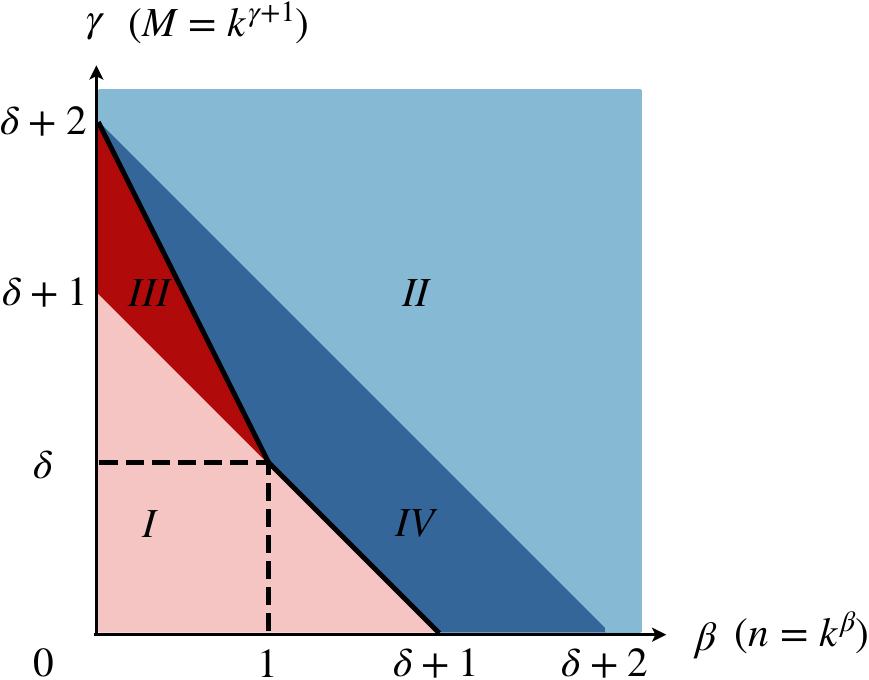}
       \caption{\small Phase diagram for estimating $B^\star$ in well-represented cases with a specific parameterization. 
   Regions \rom{1} and \rom{2} denote impossibility and possibility from prior work. Regions \rom{3} and \rom{4} represent our new results. 
The boundaries are given by
$\beta+\gamma=\delta + 1$ (between Region \rom{1} and Regions \rom{3} and \rom{4}),  $2\beta+\gamma = \delta+2$ (Regions \rom{3} and \rom{4}), and $\beta+\gamma = \delta+2$ (Regions \rom{2} and \rom{4}). 
       }
       \label{fig:diag}
   \end{figure}

The identified optimal rate $\Theta(\sqrt{dk/N} + \sqrt{Mdk^2/N^2})$ exhibits two phases. 
It matches the standard parameter-counting rate $\Theta(\sqrt{dk/N})$ (since $B^\star$ has $dk$ entries and $N$ samples are observed) along the boundary between Regions \rom{1} and \rom{4}. When $M=\Omega(d)$ or $n = O(k)$, the second term $\sqrt{Mdk^2/N^2}$ dominates, and the optimal rate defines the boundary between Regions \rom{3} and \rom{4}. 
 This second phase reveals a statistical penalty when there are too many clients or local datasets are too small. For example, fixing $d$, $k$ and $N$, as $M$ increases: 
if $M = O(N/k)$ (or $n =\Omega(k)$), the first term dominates and the error remains unchanged; if $M = \Omega(N/k)$ or $n =O(k)$, the second term dominates and the error grows with $M$. 

\subsection{Algorithmic and Analytical Innovations}

Several methods have achieved the SOTA upper bound in Table \ref{tab:bounds_comparison}, including nonconvex empirical risk minimization~\citep{du2021fewshot}, the
method-of-moments (MoM) estimator~\citep{tripuraneni2021provable} and its variant \citep{duchi2022subspace}, 
and 
alternating minimization initialized via MoM~\citep{collins2021exploiting,thekumparampil2021statistically,zhang2024sample}. 

We propose a new spectral estimator for $B^\star$ that achieves the improved error upper bound in Table \ref{tab:bounds_comparison}, making it the first to reach the optimal statistical rate in well-represented cases. 
The estimator is derived as an optimal solution to an approximate non-convex least-squares problem, 
which otherwise lacks a closed-form solution. It leverages two independent replicas of locally averaged cross-correlation vectors $\{y_{ij}x_{ij}\}_{j=1}^{n_i}$ at each client. Here local averaging reduces noise for a tighter bound and the replicas enable handling sub-Gaussian covariates with unknown covariances, 
unlike \cite{tripuraneni2021provable}, which assumes standard Gaussian covariates. Moreover, the estimator preserves privacy in federated learning, since clients share only local averages rather than any raw data to the server.

For the lower bound, we achieve a $\sqrt{d}$-improvement in the first term, using a packing set on the $d$-dimensional unit sphere to construct multiple hard-to-distinguish problem instances, rather than using only two instances as in \cite{tripuraneni2021provable}. 
We further derive a new second term by considering randomly generated $\alpha_i$'s instead of deterministic ones and carefully bounding the mutual information. This new term is critical for capturing  the statistical penalty that arises when the number of clients is large. 

\subsection{Notation and Organization} 
Let $\mathbb{S}^{d-1}$ be the unit sphere in $\R^d$. For matrix $M$, let $\lambda_r(M)$ be the $r$-th largest eigenvalue value of $M$ and $\|M\|$ be the spectral norm. 
We use notations $O$, $\Omega$, and $\Theta$ to hide absolute constants independent of model parameters, 
and use $\widetilde{O}$ to hide polylogarithmic factors.  %

The rest of the paper is organized as follows. Section \ref{sect:related-works} reviews further related works. Section \ref{sec:model} introduces the main model and assumptions. Section \ref{sec:up} presents our estimator and the error upper bound, and Section \ref{sec:low} establishes the minimax lower bound. Section \ref{sect:applications} discusses applications, while Section \ref{sect:non} extends the results to general non-linear models. Sections \ref{sec:proof-up} and \ref{sec:proof-low} contain the proofs of the upper and lower bounds, respectively. 
Section \ref{sect:exp} provides numerical studies. 
Section  \ref{sect:conclusion} concludes with future directions.

\section{Related Work}\label{sect:related-works}

\paragraph{Statistical Rates for Learning Shared Representations.}

Most relevant recent works \citep{du2021fewshot,collins2021exploiting,thekumparampil2021statistically,duchi2022subspace}, except \cite{tripuraneni2021provable}, focus on studying the statistical rate for learning $B^\star$ under model \eqref{eq:model-sup}  with equal data partition $n_i\equiv n$. The SOTA upper and lower bounds in these works are summarized in Table \ref{tab:bounds_comparison}. None of them identifies the optimal rate even in well-represented cases. In particular, \cite{tripuraneni2021provable} introduces a method-of-moments estimator (MoM) but only obtains the suboptimal error rate in Table \ref{tab:bounds_comparison} for standard Gaussian covariates. They also derive the suboptimal lower bound 
by applying Le Cam's two-point method to two hard-to-distinguish problem instances constructed given a deterministic ill-conditioned client diversity matrix. Concurrently, \cite{du2021fewshot} provides purely statistical guarantees by directly analyzing the optimal solution to the nonconvex empirical risk minimization, yet still obtains the same suboptimal upper bound, 
under the additional assumption $n \gg d.$
More recently, \cite{duchi2022subspace} presents a variant of MoM by excluding the diagonal terms. Their estimator achieves an optimal rate for simpler mean estimation problems, but its error upper bound for linear regression settings remains suboptimal, 
under the additional assumption that $n_i\equiv n.$
Furthermore, \cite{collins2021exploiting,thekumparampil2021statistically,zhang2024sample} study alternating minimization methods, which effectively reduce the error
rates when the noise level of $\xi_{ij}$, $\sigma^2$, is either zero or diminishes rapidly.
However, since the empirical
risk minimization for $B$ is non-convex, the algorithms require initialization sufficiently close to an optimal solution, typically starting with MoM. So their sample complexity requirements remain the same as that for MoM. 

In addition, several studies have explored variants of the model in \eqref{eq:model-sup}. For example, \cite{thaker2023leveraging} studies differentially private fine-tuning given the method-of-moments estimator. Moreover, \cite{duan2023adaptive} studies adaptive and robust multi-task learning, where, in a specific low-rank scenario, they assume $\theta_i^\star = B^\star \alpha_i^\star + v_i^\star$ with a bounded offset term $v_i^\star$. In addition, \cite{tian2023learning} considers the case when clients share similar representations such that $\theta_i^\star = B^\star_i \alpha_i^\star$, with the subspaces $B^\star_i$ constrained to certain angles, while also allowing outliers. However, \cite{thaker2023leveraging} and \cite{duan2023adaptive} achieve the suboptimal error rate in Table \ref{tab:bounds_comparison}. In addition, both \cite{duan2023adaptive} and \cite{tian2023learning} impose much stricter assumptions that $n_i\ge cd$, where clients can already independently learn $\theta_i^\star$ accurately. 

\paragraph{Multi-task and Transfer Learning.} As an important application for learning shared representations, the study of multi-task and transfer learning problems dates back to \cite{caruana1997multitask,thrun1998learning,baxter2000model,ando2005framework,maurer2016benefit}. Beyond the low-dimensional model introduced in \eqref{eq:model-sup}, other related models for theoretical studies include gradient-based meta-learning \citep{lee2018gradient,khodak2019adaptive} and non-parametric transfer learning \citep{hanneke2019value,cai2021transfer}.

\paragraph{Personalized Federated Learning.} Recent works have studied personalization in federated learning to handle statistical heterogeneity across clients. These approaches include multi-task learning \citep{smith2017federated}, gradient-based meta-learning \citep{fallah2020personalized}, and local fine-tuning \citep{singhal2021federated}. The shared low-dimensional representation model \eqref{eq:low-d-structure} provides an effective framework for personalization
\citep{collins2021exploiting}, and includes clustered federated learning~\citep{ghosh2020efficient,su2024global} as a special case.

\section{Model and Assumptions}\label{sec:model}
The shared linear subspace model in \eqref{eq:low-d-structure} generalizes those in \cite{tripuraneni2021provable,collins2021exploiting,thekumparampil2021statistically,duchi2022subspace} by allowing heterogeneous covariance matrices $\Gamma_i$ 
instead of assuming $\Gamma_i \equiv  I_d$ for all $i$. Our model setup also extends those in \cite{du2021fewshot,thekumparampil2021statistically,tian2023learning} by allowing heterogeneous data partitions $n_i$ rather than assuming $n_i \equiv n$ for all $i$.
In this section, we formally introduce our assumptions on the noise variables $
\xi_{ij} \triangleq y_{ij}-\E[y_{ij}\given x_{ij}]$, the covariate vectors $x_{ij}$, and the parameters $\theta_i^\star$. 
\begin{assumption}[Sub-gaussian noises]\label{ass:sub-gaussian-noise}
    The noise variables $\xi_{ij}$ are independent, 
    zero-mean, sub-gaussian with 
    variance proxy $\sigma^2=\Theta(1)$ and are independent of covariates 
    $x_{ij}$.\footnote{A random variable $\xi\in\R$ is sub-gaussian with variance proxy $\sigma^2$, denoted by $\xi\sim \text{subG}(\sigma^2)$, if 
    $\E[\exp(t(\xi-\E\xi))]\le \exp(\sigma^2t^2/2)$ for any $t\in\R$. A random vector $\xi\in\R^d$ is sub-gaussian with variance proxy $\sigma^2$, denoted by $\xi\sim \text{subG}_d(\sigma^2)$, if $u^\intercal \xi\sim \text{subG}(\sigma^2)$ for any $u\in \mathbb{S}^{d-1}$.} 
    \end{assumption}

\begin{assumption}[Sub-gaussian covariates]\label{ass:covariates}
    The covariate vectors $x_{ij}$ are independent, zero-mean, 
    sub-gaussian  with variance proxy $\gamma^2=\Theta(1)$. At each client $i$, let $\E[x_{ij}x_{ij}^\intercal]=\Gamma_i$
    be the same but unknown covariance matrix of $x_{ij}$ for all $j$, 
     satisfying $\lambda_1(\Gamma_i)/\lambda_d(\Gamma_i) =\Theta(1)$. 
    \end{assumption}
These sub-gaussian assumptions are standard in statistical learning for deriving tail bounds \citep{du2021fewshot,tian2023learning,duan2023adaptive,zhang2024sample}. In particular, Assumption \ref{ass:covariates} generalizes those in \cite{tripuraneni2021provable,collins2021exploiting,thekumparampil2021statistically,duchi2022subspace} by allowing non-identity covariance $\Gamma_i$. 

To capture the shared feature representations across clients, we assume that the weighted parameters $\Gamma_i\theta_i^\star$ have a common low-dimensional structure.
    \begin{assumption}[Low-dimensional structure]\label{ass:sup-low-rank} There exists a constant $k\le d$, an orthonormal matrix $B^\star \in \cO^{d\times k}$,
    and vectors $\alpha_i^\star \in \R^k$ such that $\Gamma_i\theta_i^\star = B^\star\alpha_i^\star$ for all $i$. 
    \end{assumption}
Here the columns of $B^\star \in \cO^{d\times k}$ form the shared low-dimensional subspace and $\alpha_i^\star \in \R^k$ is the client-specific parameter for client $i$. When $\Gamma_i \equiv I_d$ for all $i$, Assumption \ref{ass:sup-low-rank} reduces to the standard assumption $\theta_i^\star = B^\star\alpha_i^\star$, imposed by previous works such as \cite{tripuraneni2021provable,collins2021exploiting,thekumparampil2021statistically,duchi2022subspace}. We generalize this standard assumption to the case with non-identity covariance $\Gamma_i$, by requiring the cross-correlation vector $\E[y_{ij}x_{ij}]=\Gamma_i\theta_i^\star$ to share a common subspace.\footnote{Note that since the covariance matrices $\Gamma_i$ are unknown, one cannot apply the whitening procedure by writing $x_{ij}=\Gamma_i^{1/2} \widetilde{x}_{ij}$ so that $\widetilde{x}_{ij}$ have an identity covariance matrix. Moreover, it is difficult to accurately estimating $\Gamma_i$, as the size of the local dataset $n_i \ll d$.}



\begin{assumption}[Client 
normalization]\label{ass:client-diverse} 
Each $\alpha_i^\star$ satisfies $\|\alpha_i^\star\| = O(1)$ for all $i\in[M]$. 
\end{assumption}

The normalization is standard in the literature \citep{du2021fewshot,tripuraneni2021provable,duchi2022subspace,tian2023learning}. 
Recall that $\lambda_r$ is the $r$-th largest eigenvalue of 
$D$ in \eqref{eq:diversity-matrix} for $r\in[k]$. The normalization gives that $\sum_{r=1}^k \lambda_r= \tr(D) = \tr(\sum_{i=1}^M n_i \alpha_i^\star(\alpha_i^\star)^\intercal)/N = \sum_{i=1}^M n_i \|\alpha_i^\star\|^2/N = O(1)$, which further implies $k\lambda_k\le O(1)$ and $\lambda_1=O(1)$.

Given the model described in \eqref{eq:model-sup} and the assumptions introduced, the goal of the clients in these problems is to collectively estimate the shared representation $B^\star$. 
In particular, we define the following metric to measure the distance between 
two orthonormal matrices.
\begin{definition}[Principal angle distance] \label{def:principal_angle}
Let $B, B^\star\in\cO^{d\times k}$ be orthonormal matrices. 
Then the principal angle distance between $B$ and $B^\star$ is
    \begin{align}
    \|\sin\Theta(B, B^\star)\|  = \|BB^\intercal  - B^\star (B^\star)^\intercal\|. \label{eq:def_sin_theta_dis}
    \end{align} 
\end{definition}
Originating from \cite{jordan1875essai}, the principal angle distance measures the separation between the column spaces of $B$ and $B^\star$, invariant to their rotations. Using this metric, we aim to learn an estimator $\widehat B$ that minimizes $\|\sin\Theta(B, B^\star)\|$ over $B\in\cO^{d\times k}$, so that its column space closely aligns with that of $B^\star$. We will analyze the statistical rate of the error $\|\sin\Theta(\widehat B, B^\star)\|$ in learning $B^\star$ from data $\{(x_{ij}, y_{ij})\}$ generated by the model in \eqref{eq:model-sup}. 


\section{Estimator and Error Upper Bound}\label{sec:up}
We propose a new estimator of $B^\star$, which achieves an improved error upper bound. 

\subsection{Our Estimator}  
\label{subsect:estimator} 
We review the limitations of existing estimators and introduce our proposed estimator.

\subsubsection{Limitations of Existing Estimators} Many recent works have designed estimators with provable error bounds~\citep{du2021fewshot,tripuraneni2021provable,thekumparampil2021statistically,collins2021exploiting,duchi2022subspace}. 
We review existing estimators and discuss their limitations.
The method-of-moments estimator (MoM) in \cite{tripuraneni2021provable} is formed by the top-$k$ eigenvectors of matrix,
\#\label{eq:z1}
Z_T = \sum_{i=1}^M \sum_{j=1}^{n_i} y_{ij}^2 x_{ij}x_{ij}^\intercal \, .
\#
Their analysis is limited to scenarios where $x_{ij} \iiddistr N(0, I_d)$, with the corresponding error upper bound provided in Table \ref{tab:bounds_comparison}. However, this upper bound is suboptimal compared to the lower bound 
\cite[Theorem 5]{tripuraneni2021provable}. 
A subsequent work \citep{duchi2022subspace}
assumes $n_i\ge 2$ and introduces another spectral estimator using the matrix, 
\#\label{eq:z2}
Z_D = \sum_{i=1}^M \frac{w_i}{n_i(n_i-1)}\sum_{j_1\neq j_2} y_{ij_1}y_{ij_2}x_{ij_1}x_{ij_2}^\intercal  \, ,
\#
where $w_i>0$ are weight parameters satisfying $\sum_{i=1}^M w_i=1$, chosen as $w_i \equiv 1/M$ in the work.
By excluding the diagonal terms $j_1=j_2$ in the summation, their estimator 
is designed to handle scenarios where the noise 
$\xi_{ij}$ may depend on 
$x_{ij}$ and shown to achieve the same suboptimal error bound of \cite{tripuraneni2021provable}. However, whether this estimator improves the error rates remains unclear.
Several works~\citep{thekumparampil2021statistically,collins2021exploiting,zhang2024sample} study the alternating minimization methods. 
However, their results rely on initialization via MoM 
and thus still suffer from the suboptimality inherent in MoM. 
In fact, the suboptimality of
MoM has been acknowledged in many works~\citep{tripuraneni2021provable,thekumparampil2021statistically,thaker2023leveraging,tian2023learning} and closing this gap has remained a well-recognized open problem.

\subsubsection{A Warm-up Example: Mean Estimation Problems} We will introduce our estimator to address the limitations of these existing ones and improve the error rate. To illustrate our ideas, we begin with a simpler mean estimation problem and show that a local averaging estimator is an optimal solution to the least squares problem. Specifically, 
suppose that each client $i$ observes $n_i$ data vectors $u_{ij}\in \R^{d}$ for $j\in [n_i]$, where
\$
u_{ij}= \theta_i^\star + \xi_{ij} = B^\star\alpha_i^\star + \xi_{ij}.
\$
Here $\xi_{ij}\in \R^d$ is an additive noise for the $j$-th sample and $\theta_i^\star = B^\star \alpha_i^\star$ is the ground-truth, where $B^\star \in \cO^{d\times k}$ with $k\le d$ and $\alpha_i^\star \in \R^k$. 
We can directly solve this mean estimation problem, given the observed data, by minimizing the non-convex least squares loss,
\#\label{eq:me-least-sqr}
\min_{B\in\cO^{d\times k},\{\alpha_i\}} \sum_{i=1}^M\sum_{j=1}^{n_i}\|u_{ij} - B\alpha_i\|^2.
\#
Let $\overline u_i = (\sum_{j=1}^{n_i} u_{ij})/n_i$ be the local average at client $i$, and $\widetilde B$ be the  top-$k$ eigenvectors of 
$\sum_{i=1}^M n_i \overline u_i \overline u_i^\intercal$. The next proposition shows that $\widetilde B$ solves this least squares problem.\footnote{Proofs of Proposition \ref{thm:me}-\ref{prop:x-gen} and Example \ref{lem:well-cond} are straightforward, and we omit them here. Interested readers may refer to \cite{niu2024collaborative} for the formal proofs.}
\begin{proposition}\label{thm:me}
After first optimizing over $\alpha_i$, the problem in \eqref{eq:me-least-sqr} is equivalent to 
\#\label{eq:pca-me-1}
\max_{B\in\cO^{d\times k}} \sum_{i=1}^M n_i \overline u_i^\intercal B B^\intercal \overline u_i.
\#
In addition, the estimator $\widetilde B$ formed by the  top-$k$ eigenvectors of the matrix $\sum_{i=1}^M n_i \overline u_i \overline u_i^\intercal$ is an optimal solution to problems in \eqref{eq:me-least-sqr} and \eqref{eq:pca-me-1}.
\end{proposition}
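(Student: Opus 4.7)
The plan is to establish the two claims in sequence by (i) eliminating the client-specific variables $\alpha_i$ in closed form using the orthonormality of $B$, and then (ii) recognizing the resulting optimization as a standard PCA problem solvable via Ky Fan / Rayleigh-Ritz.

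First I would fix $B \in \cO^{d\times k}$ and solve the inner problem $\min_{\alpha_i} \sum_{j=1}^{n_i}\|u_{ij}-B\alpha_i\|^2$ for each $i$ independently. Expanding the square and using $B^\intercal B = I_k$ gives
\[
\sum_{j=1}^{n_i}\|u_{ij}-B\alpha_i\|^2 = \sum_{j=1}^{n_i}\|u_{ij}\|^2 - 2\,\alpha_i^\intercal B^\intercal \!\!\sum_{j=1}^{n_i} u_{ij} + n_i \|\alpha_i\|^2,
\]
which is strictly convex in $\alpha_i$. Setting the gradient to zero yields the unique minimizer $\alpha_i^{\,\star}(B) = B^\intercal \overline u_i$. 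Substituting back and simplifying produces
\[
\sum_{i=1}^M\sum_{j=1}^{n_i}\|u_{ij}-B\alpha_i^{\,\star}(B)\|^2 = \sum_{i=1}^M\sum_{j=1}^{n_i}\|u_{ij}\|^2 \;-\; \sum_{i=1}^M n_i\,\overline u_i^\intercal B B^\intercal \overline u_i,
\]
where the first sum is independent of $B$. Minimizing the left-hand side over $B$ is therefore equivalent to the maximization in \eqref{eq:pca-me-1}, establishing the first claim.

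For the second claim I would rewrite the objective in trace form: $\sum_{i=1}^M n_i\,\overline u_i^\intercal B B^\intercal \overline u_i = \Tr\!\bigl(B^\intercal A B\bigr)$, where $A \triangleq \sum_{i=1}^M n_i \overline u_i \overline u_i^\intercal$ is symmetric positive semidefinite. By the Ky Fan maximum principle (equivalently, the trace form of the Rayleigh-Ritz theorem),
\[
\max_{B \in \cO^{d\times k}} \Tr\!\bigl(B^\intercal A B\bigr) \;=\; \sum_{r=1}^k \lambda_r(A),
\]
and this maximum is attained by any $B$ whose columns are a set of orthonormal top-$k$ eigenvectors of $A$. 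Choosing $\widetilde B$ in this way therefore solves \eqref{eq:pca-me-1}, and combining with the reduction above shows $\widetilde B$ together with $\alpha_i = \widetilde B^\intercal \overline u_i$ solves \eqref{eq:me-least-sqr}.

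I do not anticipate any real obstacle here; the argument is essentially a textbook PCA reduction. The only point requiring mild care is ensuring the inner quadratic in $\alpha_i$ is handled cleanly (which uses $B^\intercal B = I_k$, not $BB^\intercal = I_d$), and citing the Ky Fan / trace-Rayleigh-Ritz result in the correct form for rectangular orthonormal $B$ rather than the more familiar single-vector Rayleigh quotient.
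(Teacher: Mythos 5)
Your proposal is correct and follows essentially the same route as the paper: profile out $\alpha_i$ via $\widehat\alpha_i = B^\intercal \overline u_i$ (using $B^\intercal B = I_k$), reduce the residual objective to $\max_{B\in\cO^{d\times k}} \Tr(B^\intercal (\sum_i n_i \overline u_i \overline u_i^\intercal) B)$, and solve by taking the top-$k$ eigenvectors. The only cosmetic difference is that you expand the square directly while the paper decomposes $u_{ij}-BB^\intercal\overline u_i$ through $\overline u_i$ and notes the cross-terms vanish; the conclusions are identical.
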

Proposition \ref{thm:me} demonstrates that $\widetilde B$, utilizing local averaging, is an optimal solution to the least squares problem for mean estimation. 

\subsubsection{Introducing Our Estimator}

Now, we return to tackle the original problem by leveraging the idea of local averaging discussed above.  Similar to mean estimation, we consider the non-convex least squares problem for linear regression, with $\theta_i = \Gamma_i^{-1} B\alpha_i$,
\#\label{eq:least-squares}
\min_{\{\theta_i\}}\sum_{i=1}^M\sum_{j=1}^{n_i}\big(y_{ij} - x_{ij}^\intercal \theta_i\big)^2 = \min_{B, \{\alpha_i\}}\sum_{i=1}^M\sum_{j=1}^{n_i}\big(y_{ij} - x_{ij}^\intercal \Gamma_i^{-1} B\alpha_i\big)^2.
\#
Let $\widehat z_i = \sum_{j=1}^{n_i}y_{ij} x_{ij}/n_i$ be the $i$-th local average, and $A^\dag$ be the pseudoinverse of matrix $A$.
\begin{proposition}\label{claim:app}
After first optimizing over $\alpha_i$, the problem in \eqref{eq:least-squares} is equivalent to the following one, with $\Lambda_i = \Gamma_i^{-1} B(B^\intercal\Gamma_i^{-1}\widehat \Gamma_i \Gamma_i^{-1} B)^{\dag} B^\intercal\Gamma_i^{-1}$ and $\widehat \Gamma_i = (\sum_{j=1}^{n_i}x_{ij} x_{ij}^\intercal)/n_i$, 
\begin{align}
\max_{B\in\cO^{d\times k}} \sum_{i=1}^M n_i \widehat z_i^\intercal \Lambda_i \widehat z_i. \label{eq:regression_pca_true}
\end{align}
\end{proposition}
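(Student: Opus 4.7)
The plan is to carry out a standard profile-likelihood reduction: fix $B \in \cO^{d\times k}$, solve the inner optimization over $\{\alpha_i\}$ in closed form, substitute back into the objective, and verify that what remains is (up to a $B$-independent constant and a sign flip) the quadratic form in \eqref{eq:regression_pca_true}. For each fixed $B$, the inner problem decouples across clients, and for client $i$ it is a classical linear least squares problem in $\alpha_i$. Setting $v_i \triangleq \Gamma_i^{-1} B \in \reals^{d\times k}$, the $i$-th client's loss reads
\[
\sum_{j=1}^{n_i}\bigl(y_{ij} - (v_i^\intercal x_{ij})^\intercal \alpha_i\bigr)^2 = \bigl\| y_i - X_i^\intercal v_i \alpha_i \bigr\|^2,
\]
where $y_i = (y_{i1},\ldots, y_{in_i})^\intercal$ and $X_i = [x_{i1}\;\cdots\; x_{in_i}] \in \reals^{d\times n_i}$.

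Differentiating gives the normal equations $(v_i^\intercal \widehat\Gamma_i v_i)\,\alpha_i = v_i^\intercal \widehat z_i$, i.e.\ $(B^\intercal \Gamma_i^{-1}\widehat\Gamma_i \Gamma_i^{-1} B)\,\alpha_i = B^\intercal \Gamma_i^{-1}\widehat z_i$. I would take the minimum-norm solution via the Moore--Penrose pseudoinverse,
\[
\alpha_i^\star(B) \;=\; \bigl(B^\intercal \Gamma_i^{-1}\widehat\Gamma_i \Gamma_i^{-1} B\bigr)^{\dag} B^\intercal \Gamma_i^{-1}\widehat z_i,
\]
and substitute back. Expanding the squared residual and using the identity $(v_i^\intercal \widehat \Gamma_i v_i)\alpha_i^\star(B) = v_i^\intercal \widehat z_i$, the cross and quadratic terms in $\alpha_i^\star(B)$ combine as
\[
-2n_i\, \widehat z_i^\intercal v_i \alpha_i^\star(B) + n_i\,\alpha_i^\star(B)^\intercal (v_i^\intercal \widehat \Gamma_i v_i) \alpha_i^\star(B) = -n_i\, \widehat z_i^\intercal v_i \alpha_i^\star(B) = -n_i\, \widehat z_i^\intercal \Lambda_i \widehat z_i.
\]
Since $\sum_{i,j} y_{ij}^2$ does not depend on $B$, minimizing \eqref{eq:least-squares} over $\{\alpha_i\}$ followed by $B$ is equivalent to maximizing $\sum_i n_i\, \widehat z_i^\intercal \Lambda_i \widehat z_i$ over $B\in\cO^{d\times k}$, which is \eqref{eq:regression_pca_true}.

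The only step that requires care, and the main obstacle in the write-up, is the use of the pseudoinverse when the $k\times k$ Gram matrix $A_i \triangleq B^\intercal \Gamma_i^{-1}\widehat\Gamma_i \Gamma_i^{-1} B$ is rank-deficient (which can happen whenever $n_i < k$). To invoke $A_i A_i^{\dag} b_i = b_i$ with $b_i = B^\intercal \Gamma_i^{-1}\widehat z_i$, I would verify the range condition $b_i \in \mathrm{range}(A_i)$. This follows from the factorizations $n_i \widehat \Gamma_i = X_i X_i^\intercal$ and $n_i \widehat z_i = X_i y_i$: both $A_i = \tfrac{1}{n_i}(v_i^\intercal X_i)(v_i^\intercal X_i)^\intercal$ and $b_i = v_i^\intercal X_i y_i/n_i$ live in $\mathrm{range}(v_i^\intercal X_i)$, and these ranges coincide. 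Hence $\alpha_i^\star(B)$ is a bona fide minimizer and all the simplifications above go through for every $B$, completing the equivalence.
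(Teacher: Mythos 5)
Your proposal is correct and follows essentially the same route as the paper's proof: profile out $\alpha_i$ via the normal equations with the pseudoinverse solution, substitute back, and drop the $B$-independent term $\sum_{i,j} y_{ij}^2$ to turn the minimization into \eqref{eq:regression_pca_true}; your use of the normal-equation identity to merge the cross and quadratic terms is algebraically the same as the paper's identity $\Lambda_i \widehat\Gamma_i \Lambda_i = \Lambda_i$. Your explicit verification that $B^\intercal\Gamma_i^{-1}\widehat z_i \in \mathrm{range}(B^\intercal\Gamma_i^{-1}\widehat\Gamma_i\Gamma_i^{-1}B)$ (needed so the pseudoinverse solution genuinely solves the rank-deficient normal equations, e.g.\ when $n_i<k$) is a worthwhile detail the paper leaves implicit.
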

Unfortunately, unlike \eqref{eq:pca-me-1} in Proposition \ref{thm:me}, the problem in Proposition \ref{claim:app} lacks a closed-form solution since the matrix $\Lambda_i$  involves the unknown covariance $\Gamma_i$ and the decision variable $B$. 
If we pretend $\widehat{\Gamma}_i \approx \Gamma_i \approx I_d$,
then  $\Lambda_i \approx  B B^\intercal$. Thus, 
we approximate $\Lambda_i$ using $B B^\intercal$ in \eqref{eq:regression_pca_true} and instead solve the approximated problem: 
\#\label{eq:approx}
\max_{B\in\cO^{d\times k}} \sum_{i=1}^M n_i \widehat z_i^\intercal BB^\intercal \widehat z_i.
\#
The problem \eqref{eq:approx} has the same form as \eqref{eq:pca-me-1} 
and therefore the matrix formed by the top-$k$ eigenvectors of $\sum_{i=1}^M n_i \widehat z_i \widehat z_i^\intercal$
is an optimal solution to \eqref{eq:approx}.
As a result, it is tempting to estimate $B^\star$ based on the top-$k$ eigenvectors of $\sum_{i=1}^M n_i \widehat z_i \widehat z_i^\intercal$. 

However, since $\{x_{ij}\}$ follows general sub-gaussian distributions with non-identity covariance $\Gamma_i$, 
Proposition \ref{prop:x-gen} shows that, without additional assumptions on the covariance matrix $\Gamma_i$ and the fourth-order moments, $\E[x_{ij}^\intercal \theta_i^\star (\theta_i^\star)^\intercal x_{ij} x_{ij}x_{ij}^\intercal]$, it is impossible to construct the column space of $B^\star$ solely by using the eigenvectors of $\sum_{i=1}^M n_i \widehat z_i \widehat z_i^\intercal$. 
\begin{proposition}\label{prop:x-gen}
    Under Assumptions \ref{ass:sub-gaussian-noise}-\ref{ass:sup-low-rank}, the matrix $\sum_{i=1}^M n_i \widehat z_i \widehat z_i^\intercal$ satisfies
    \$
    \E \Big[\sum_{i=1}^M n_i \widehat z_i \widehat z_i^\intercal\Big] & = B^\star\Big(\sum_{i=1}^M (n_i-1) \alpha_i^\star (\alpha_i^\star )^\intercal\Big) (B^\star)^\intercal \\
    & \qquad + \sum_{i=1}^M\frac{1}{n_i} \sum_{j=1}^{n_i} \E[x_{ij}^\intercal \theta_i^\star (\theta_i^\star)^\intercal x_{ij} x_{ij}x_{ij}^\intercal] + \sum_{i=1}^M \frac{1}{n_i} \sum_{j=1}^{n_i} \E[\xi_{ij}^2]\Gamma_i.
    \$ 
\end{proposition}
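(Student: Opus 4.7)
The plan is to compute $\E[n_i \widehat z_i \widehat z_i^\intercal]$ by expanding the double sum defining $\widehat z_i$ and splitting it into diagonal and off-diagonal contributions, then summing over clients $i$. First I would write
\$
n_i\widehat z_i\widehat z_i^\intercal = \frac{1}{n_i}\sum_{j_1=1}^{n_i}\sum_{j_2=1}^{n_i} y_{ij_1}y_{ij_2}\, x_{ij_1}x_{ij_2}^\intercal,
\$
and substitute $y_{ij}=x_{ij}^\intercal\theta_i^\star+\xi_{ij}$, so that each summand expands into four pieces: a pure signal piece, two cross terms carrying a single noise factor, and a pure noise piece. By Assumption \ref{ass:sub-gaussian-noise}, the $\xi_{ij}$ are independent of the covariates and of each other with zero mean; by Assumption \ref{ass:covariates}, the $x_{ij}$ across $j$ are independent with zero mean.

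For the off-diagonal terms ($j_1\neq j_2$), these independence and zero-mean properties eliminate every piece containing an isolated $\xi$ or an isolated linear factor of $x$, leaving only
\$
\E[(x_{ij_1}^\intercal\theta_i^\star)(x_{ij_2}^\intercal\theta_i^\star)\, x_{ij_1}x_{ij_2}^\intercal] = \E[x_{ij_1}x_{ij_1}^\intercal]\,\theta_i^\star(\theta_i^\star)^\intercal\,\E[x_{ij_2}x_{ij_2}^\intercal] = \Gamma_i\theta_i^\star(\theta_i^\star)^\intercal\Gamma_i,
\$
which equals $B^\star\alpha_i^\star(\alpha_i^\star)^\intercal(B^\star)^\intercal$ by Assumption \ref{ass:sup-low-rank}. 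Since there are $n_i(n_i-1)$ off-diagonal pairs, dividing by $n_i$ gives a total off-diagonal contribution of $(n_i-1)B^\star\alpha_i^\star(\alpha_i^\star)^\intercal(B^\star)^\intercal$, which after summing over $i$ yields the first term in the claim.

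For the diagonal terms ($j_1=j_2=j$), the zero-mean independence of $\xi_{ij}$ again kills the two cross terms, leaving
\$
\E[y_{ij}^2\,x_{ij}x_{ij}^\intercal] = \E\!\left[x_{ij}^\intercal\theta_i^\star(\theta_i^\star)^\intercal x_{ij}\cdot x_{ij}x_{ij}^\intercal\right] + \E[\xi_{ij}^2]\,\Gamma_i,
\$
where the second summand uses $\xi_{ij}\!\indep\! x_{ij}$. Dividing by $n_i$, summing over $j$ and over $i$, and combining with the off-diagonal total produces exactly the identity stated in the proposition.

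There is no real obstacle here; the proof reduces to routine second-moment bookkeeping, with the only care needed in counting $n_i(n_i-1)$ off-diagonal versus $n_i$ diagonal pairs. The conceptual point worth emphasizing is that the fourth-order piece $\E[x_{ij}^\intercal\theta_i^\star(\theta_i^\star)^\intercal x_{ij}\cdot x_{ij}x_{ij}^\intercal]$ resists further simplification without an isotropic Gaussian assumption on $x_{ij}$. This residual bias contaminates the eigenstructure of $\sum_i n_i\widehat z_i\widehat z_i^\intercal$ and explains why a direct spectral estimator on this single matrix cannot recover the column space of $B^\star$ in general, motivating the independent-replica construction that follows.
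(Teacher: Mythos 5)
Your proposal is correct and follows essentially the same route as the paper's proof: split $\sum_{j_1,j_2}$ into off-diagonal terms, which reduce to $\E[y_{ij_1}x_{ij_1}]\E[y_{ij_2}x_{ij_2}^\intercal]=B^\star\alpha_i^\star(\alpha_i^\star)^\intercal(B^\star)^\intercal$ (counted $n_i(n_i-1)$ times and divided by $n_i$), and diagonal terms $\E[y_{ij}^2x_{ij}x_{ij}^\intercal]$, which by independence of $\xi_{ij}$ and $x_{ij}$ yield the fourth-moment term plus $\E[\xi_{ij}^2]\Gamma_i$. The only cosmetic difference is that you expand $y_{ij}$ inside the off-diagonal terms while the paper uses $\E[y_{ij}x_{ij}]=\Gamma_i\theta_i^\star=B^\star\alpha_i^\star$ directly, which is equivalent.
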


To resolve this issue, we construct two independent replicas, $\overline z_i$ and $\widetilde z_i$, in replace of the local average $\widehat z_i$. For convenience, suppose that $n_i \ge 2$ is an even number. For $i\in[M]$, let $\overline z_i =(2/n_i) \cdot\sum_{j=1}^{n_i/2}y_{ij}x_{ij}$ and $\widetilde z_i = (2/n_i) \cdot\sum_{j=n_i/2 + 1}^{n_i}y_{ij}x_{ij}$ be two independent replicas of local averages at client $i$. We consider the following matrix,
\#\label{eq:estimator}
Z = \sum_{i=1}^M n_i \overline z_i \widetilde z_i^\intercal = \sum_{i=1}^M n_i\bigg(\frac{2}{n_i}\sum_{j=1}^{n_i/2}y_{ij}x_{ij}\bigg)\bigg(\frac{2}{n_i}\sum_{j=n_i/2 + 1}^{n_i}y_{ij}x_{ij}^\intercal\bigg).
\#
 We take $\widehat B$ as the matrix formed by the right (or left) top-$k$ singular vectors of $Z$. 






Now, with two independent replicas, it is easy to see that 
    \$
    \E [Z] = \E \Big[\sum_{i=1}^M n_i \overline z_i \widetilde z_i^\intercal\Big] = 
    \sum_{i=1}^M n_i \E [\overline z_i] \E[\widetilde z_i^\intercal] =B^\star\Big(\sum_{i=1}^M n_i \alpha_i^\star (\alpha_i^\star )^\intercal\Big) (B^\star)^\intercal, 
  \$
and the column space of $\E Z$ recovers that of $B^\star$; thus, $\widehat B$, formed by the singular vectors of $Z$, provides a good estimate for $B^\star$, ensured by the classic perturbation theory for singular vectors \citep{wedin1972perturbation}. This highlights the benefits of using two replicas. Similar replica ideas have appeared in other related problems of mixed linear regression~\citep{kong2020meta,su2024global}, but the motivations and results therein are different from ours.

In summary, compared to $Z_T$ \eqref{eq:z1}, our estimator applies independent replicas of local averaging of $x_{ij}y_{ij}$ at each client, which reduces noise and achieves a tighter bound. Independent replicas allow us to handle general
covariate distributions, unlike \cite{tripuraneni2021provable}, which is restricted to standard Gaussian covariates. This is because $\E Z$ avoids fourth-order moments or $\Gamma_i$. In addition, local averaging effectively reduces noise. For example, in Lemma \ref{lem:noise-sup}, the error involves $M$ outer products of sub-exponential vectors with constant variance, giving an overall error of order $\sqrt{Md} + d$. In contrast, the corresponding error for $Z_T$ is $\sum_{i=1}^M\sum_{j=1}^{n_i} \xi_{ij}^2x_{ij}x_{ij}^\intercal$, which consists of $N$  outer products of the sub-exponential vector $\xi_{ij}x_{ij}$ with itself, also with constant variance. This leads to a larger error of order $\sqrt{Nd} + d$. 

For $Z_D$ in \eqref{eq:z2}, while excluding diagonal terms can be viewed as an alternative to our use of independent replicas, our approach provides significant advantages in privacy-sensitive settings, such as federated learning. To compute our estimator, each client can send only vectors of their local averages $\overline z_i$ and $\widetilde z_i$, or their variants with added noise, to the server, rather than transmitting any raw data $y_{ij}x_{ij}$. Thus, our design effectively prevents the leakage of local data. In addition, since our estimator approximates the least squares solution in \eqref{eq:least-squares}, we will show that it achieves the optimal statistical rate without the need for further refinement via alternating minimization.
\footnote{When the noise variance $\sigma^2$ is vanishing, further refinement via alternating minimization may improve the dependence on $\sigma^2$ and thus achieve a smaller estimation error, as shown in~\cite{thekumparampil2021statistically} for sufficiently fast diminishing $\sigma$  and~\cite{collins2021exploiting} for $\sigma=0$.}

\subsection{Error Upper Bound}


The following theorem establishes an error upper bound of our estimator $\widehat{B}$. 
\begin{theorem}\label{thm:upper-bound-sup}
Suppose that Assumptions \ref{ass:sub-gaussian-noise}-\ref{ass:client-diverse} hold and $n_i\ge 2$. For the estimator $\widehat{B}$ obtained in \eqref{eq:estimator}, with probability at least $1-O((d+N)^{-10})$,
\#\label{eq:main-up-error} 
    \|\sin\Theta(\widehat B, B^\star)\| = O\bigg(\bigg(\sqrt{\frac{d\lambda_{1}}{N\lambda_k^2}}  + \sqrt{\frac{Md}{N^2\lambda_k^2}}\bigg)\cdot\log^3(d+N)\bigg).
\#
\end{theorem}
Here the condition number $\lambda_1/\lambda_k$ and the smallest eigenvalue $\lambda_k$ appear in the numerator and denominator, respectively. This results aligns with our intuition that a larger $\lambda_1/\lambda_k$ or a smaller $\lambda_k$ causes more difficulty in estimating $B^\star$, since the client diversity matrix $D = \sum_{i=1}^M n_i \alpha_i^\star(\alpha_i^\star)^\intercal/N$ lacks information in certain directions.

Our error bound significantly improves over the previously best-known rate in the literature. Specifically, under the additional assumption $x_{ij} \iiddistr N(0, I_d)$, \cite{tripuraneni2021provable} shows that the method-of-moments estimator given in~\eqref{eq:z1} achieves an estimation error rate 
$\widetilde O(\sqrt{d
/(N\lambda_k^2)})$. However, their analysis crucially relies on the isotropy property of standard Gaussian vectors (see, e.g., the proof of Lemma 4 in~\cite{tripuraneni2021provable}). If $x_{ij}$'s were instead sub-gaussian, their error bound 
would become $\widetilde O(\sqrt{d\max\{1, k\lambda_1\}/(N\lambda_k^2)})$.
Subsequent work by~\cite{duchi2022subspace} analyzes a different spectral estimator in~\eqref{eq:z2} and  shows an error rate 
$\widetilde O(\sqrt{d
/(M\lambda_k^2\min_{i\in[M]} n_i)})$ for sub-gaussian $x_{ij}$'s. However, their bound matches the rate in~\cite{tripuraneni2021provable}) only under equal data partitioning, where $n_i \equiv n$. In cases with unequal data partitions, where $\min_{i\in[M]} n_i$ can be very small, their bound becomes less effective. In contrast, our result improves over the rate in \cite{tripuraneni2021provable} by a factor of $\sqrt{\min\{\lambda_1, M/N\}}$ and holds for any unequal data partitions $n_i$. 

In addition, our analysis requires only $n_i\ge 2$, which significantly relaxes the strict assumption $n_i\gg d$ imposed by \cite{du2021fewshot,chua2021fine,duan2023adaptive,tian2023learning,zhang2024sample}, where clients can already independently learn $\theta_i^\star$ accurately. 
This highlights the benefits of learning shared representations, especially when individual clients have limited data.

\subsubsection{Optimal Rates in Well-Represented Cases}
Our improvement is particularly significant in well-represented cases, where $\lambda_1/\lambda_k=\Theta(1)$. 
Before presenting the corollary, we note that this requirement is not overly restrictive and holds for many cases with unequal data partitions. As shown in the next example, it holds with high probability, so long as the data partition is not excessively unbalanced. 
\begin{example}\label{lem:well-cond}
Suppose that $\alpha_i$'s are i.i.d.\ sub-gaussian with $\E [\alpha_i\alpha_i^\intercal] = I_k/k$ for all $i$. Let $\overline{n} = (\sum_{i=1}^M n_i)/M = N/M$ be the average number of data per client. If ${\max_{i\in[M]} n_i}/{\overline{n}} \le c\sqrt{{M}/{k}}$ for a constant $c>0$,
then with probability at least $1-2\exp(-k)$, the matrix $D= \sum_{i=1}^M n_i \alpha_i\alpha_i^\intercal/N$ satisfies $\lambda_1/\lambda_k = \Theta(1)$.
\end{example}
When $M=\Theta(k)$, the condition in Example \ref{lem:well-cond} requires $\max_{i\in[M]} n_i= \Theta(\overline{n})$, allowing heterogeneous data partitions with a constant relative ratio. When $M\gg k$, Example \ref{lem:well-cond} shows that well-represented cases allow significantly greater heterogeneity on data partitions. 

\begin{corollary}\label{cor:up-sup}
    Suppose Assumptions \ref{ass:sub-gaussian-noise}-\ref{ass:client-diverse} hold, $n_i\ge 2$, and  $\lambda_1=\Theta(\lambda_k) = \Theta(1/k)$. For the estimator $\widehat{B}$ obtained in \eqref{eq:estimator}, with probability at least $1-O((d+N)^{-10})$,
    \$ 
    \|\sin\Theta(\widehat B, B^\star)\| = O\bigg(\bigg(\sqrt{\frac{dk}{N}} + \sqrt{\frac{Mdk^2}{N^2}}\bigg)\cdot\log^3(d+N)\bigg). 
    \$
\end{corollary}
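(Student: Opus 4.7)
The plan is to derive this corollary as a direct specialization of Theorem \ref{thm:upper-bound-sup} to the well-represented regime. The hypotheses of Corollary \ref{cor:up-sup} are precisely those of Theorem \ref{thm:upper-bound-sup} (Assumptions \ref{ass:sub-gaussian-noise}--\ref{ass:client-diverse} together with $n_i \ge 2$), with the additional quantitative information that $\lambda_1 = \Theta(\lambda_k) = \Theta(1/k)$. Since Theorem \ref{thm:upper-bound-sup} is stated for arbitrary eigenvalue profiles, I can invoke it directly and then substitute the well-represented scaling into the two terms of its bound.

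Concretely, I would begin by writing
\[
\|\sin\Theta(\widehat B, B^\star)\| = O\!\left(\left(\sqrt{\tfrac{d\lambda_1}{N\lambda_k^2}} + \sqrt{\tfrac{Md}{N^2\lambda_k^2}}\right)\log^3(d+N)\right)
\]
on the event of probability at least $1 - O((d+N)^{-10})$ guaranteed by Theorem \ref{thm:upper-bound-sup}. I would then handle the two terms separately. For the first term, substituting $\lambda_1 = \Theta(1/k)$ in the numerator and $\lambda_k = \Theta(1/k)$ in the denominator gives $d\lambda_1/(N\lambda_k^2) = \Theta(dk/N)$. For the second term, only $\lambda_k$ appears, and substituting $\lambda_k = \Theta(1/k)$ yields $Md/(N^2\lambda_k^2) = \Theta(Mdk^2/N^2)$. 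Combining and absorbing constants into the $O(\cdot)$ produces the stated bound $O\big((\sqrt{dk/N} + \sqrt{Mdk^2/N^2})\log^3(d+N)\big)$ on the same high-probability event.

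There is no real obstacle here beyond bookkeeping: the entire content of Corollary \ref{cor:up-sup} is a reparameterization of Theorem \ref{thm:upper-bound-sup} under the well-represented scaling, and the constants hidden in $\Theta(\cdot)$ for $\lambda_1$ and $\lambda_k$ are absorbed into the absolute constants hidden in the $O(\cdot)$ notation. The only minor point worth noting is internal consistency: the normalization in Assumption \ref{ass:client-diverse} gives $\sum_{r=1}^k \lambda_r = O(1)$, so the scaling $\lambda_1 = \Theta(\lambda_k) = \Theta(1/k)$ is self-consistent (each eigenvalue is $\Theta(1/k)$, and the trace is $\Theta(1)$), meaning the regime of the corollary is genuinely non-empty and compatible with all hypotheses of Theorem \ref{thm:upper-bound-sup}. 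No additional probabilistic or spectral arguments are required.
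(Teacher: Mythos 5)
Your proposal is correct and follows exactly the paper's (implicit) argument: the corollary is obtained by plugging $\lambda_1=\Theta(1/k)$ and $\lambda_k=\Theta(1/k)$ into the two terms of Theorem \ref{thm:upper-bound-sup}, on the same high-probability event, with the $\Theta$-constants absorbed into the $O(\cdot)$. Nothing further is needed.
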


This rate applies to unequal data partitions and improves the rate $\widetilde O(\sqrt{dk^2/N})$ given by \cite{tripuraneni2021provable,duchi2022subspace}. More strikingly,  our rate is order-wise optimal, matching the minimax lower bound up to a polylogarithmic factor, as shown in the next section. This resolves the challenging open problem of characterizing the optimal estimation error rate, and our estimator is the first in the literature to achieve this optimal rate. Figure \ref{fig:diag} plots the phase diagram for Corollary \ref{cor:up-sup} under a specific scaling parameterization.

\section{Minimax Lower Bound}\label{sec:low}
This section establishes an information-theoretic lower bound for any estimator  $\widehat{B}.$
For fixed $M$ and $N$, we use the eigenvalues $\lambda_1$ and $\lambda_k$ of the client diversity matrix $\sum_{i=1}^M n_i \alpha_i^\star(\alpha_i^\star)^\intercal/N$ to capture the complexity of the estimation problem. In particular, we analyze the minimax estimation error against the worst possible choice of the model parameters $B$, $\{\alpha_i\}_{i=1}^M$, and $\{n_i\}_{i=1}^M$ from a parameter space.  The problem of estimating $B$ can then be represented as the following Markov chain, 
\$
\left( B, \{\alpha_i\}_{i=1}^M, \{n_i\}_{i=1}^M \right) \to \{\{(x_{ij}, y_{ij})\}_{j=1}^{n_i}\}_{i=1}^M \to \widehat{B}.
\$
Here the data volumes $\{n_i\}_{i=1}^M$, satisfying $\sum_{i=1}^M n_i = N$, can be observed from the data and hence are nuisance parameters. 

We now define the parameter space. We take $B\in \cO^{d\times k}$ to be any $d\times k$ orthogonal matrix. 
Let $\alpha= (\alpha_1,\cdots,\alpha_M)$ be the matrix whose columns are the client-specific parameters $\alpha_i$ and $\vec{n} = (n_1, \cdots, n_M)^\intercal$ be the vector with entries $n_i$.
For any $\lambda_1\ge \lambda_k>0$ and a fixed $\Vec{n}=(n_1, \ldots, n_M)$, we define the parameter space of $\alpha$ satisfying Assumption \ref{ass:client-diverse} as follows:
\$ 
\Psi_{\lambda_1, \lambda_k}^{n_1, \cdots, n_M} = \Big\{\alpha\in\R^{k\times M} \colon \|\alpha_i\|=O(1) \ \forall i \in [M], \Omega(\lambda_k) I_k \preceq \frac{1}{N} \sum_{i=1}^M n_i \alpha_i \alpha_i^\intercal \preceq O(\lambda_1) I_k \Big\}.
\$
We consider only $\lambda_k >0$, since otherwise $B^\star$ is not identifiable.\footnote{If $\lambda_k =0$, $\{\alpha_i^\star\}$ spans only an $r$-dimensional subspace of $\R^k$ with $r<k$, so the parameters $\{\theta_i^\star\}$ and data $\{(x_{ij}, y_{ij})\}$ contain information only about an $r$-dimensional subspace of $B^\star$'s columns. The remaining $k-r$ columns of $B^\star$ may be any vectors in the $(d-r)$-dimensional complement, making $B^\star$ unidentifiable.} This also implies $M\ge k$. Recall that Assumption \ref{ass:client-diverse} yields $k\lambda_k=O(1)$
and $\lambda_1=O(1) $; otherwise, the parameter space is empty. 
Henceforth, we assume $\lambda_k>0$, 
$M\ge k$,
$k\lambda_k=O(1)$, and 
$\lambda_1=O(1)$. 
The following theorem presents the minimax error lower bound. Here
$\wedge$ is a shorthand notation for the minimum operation. 
\begin{theorem}\label{thm:lb-sup}
Consider a system with $M$ clients and $N$ samples. 
Assume $x_{ij}\sim N(0, I_d)$ and $\xi_{ij}\sim N(0, 1)$ independently for all $i,j$, and Assumptions \ref{ass:sup-low-rank} and \ref{ass:client-diverse} hold. When 
$k=\Omega(\log M)$, $d\ge (1+\rho_1)k$, and $M\ge(1+\rho_2)k$ for constants $\rho_1,\rho_2>0$, we have
    \$
    \inf_{\widehat{B}\in\cO^{d\times k}}
    \sup_{B\in\cO^{d\times k}} 
    \sup_{\substack{n_1,\cdots, n_M\\ \sum_{i=1}^M n_i = N} } \sup_{\alpha\in\Psi_{\lambda_1, \lambda_k}^{n_1, \cdots, n_M}} \E\Big[\big\|\sin\Theta(\widehat B, B)\big\|\Big] = \Omega\bigg(\bigg(\sqrt{\frac{d}{N\lambda_k}} + \sqrt{\frac{Md}{N^2\lambda_k^2}} \bigg)\wedge1\bigg).
    \$   
\end{theorem}

Theorem \ref{thm:lb-sup} establishes an error lower bound, improving the state-of-the-art result from \cite{tripuraneni2021provable}, which is of order $\Omega(\sqrt{1/(N\lambda_k)} + \sqrt{dk/N})$. Specifically, we achieve a $\sqrt{d}$ improvement in the first term 
and derive a new second term. 
Our lower bound matches the upper bound presented in Theorem \ref{thm:upper-bound-sup}, differing only by a condition number $\sqrt{\lambda_1/\lambda_k}$ in the first term and a logarithmic factor. This gap in the condition number also arises in other related problems 
in statistical learning. 
For example, for the Lasso estimator in sparse linear regression, a gap between existing upper and lower bounds remains in the condition number of the data design matrix $X$ \citep{raskutti2011minimax,zhang2014lower,chen2016bayes}. Similarly, for the spectral estimator in the principal component analysis, a gap remains in the condition number of the low-dimensional factor matrix $B^\star$ under a well-conditioned client diversity matrix $D$ 
\citep[Theorem 3.6]{chen2021spectral}.

Notably, when $\lambda_1/\lambda_k =\Theta(1)$, our upper and lower bounds match up to a polylogarithmic factor, as shown in the following corollary. 


\begin{corollary}\label{cor:lb-sup}
Under the conditions in Theorem \ref{thm:lb-sup}, when $\lambda_1=\Theta(\lambda_k) = \Theta(1/k)$, 
we have 
    \$
    \inf_{\widehat{B}\in\cO^{d\times k}}\sup_{B\in\cO^{d\times k}}  
    \sup_{\substack{n_1,\cdots, n_M\\ \sum_{i=1}^M n_i = N} } \sup_{\alpha\in\Psi_{\lambda_1, \lambda_k}^{n_1, \cdots, n_M}} 
    \E\Big[\big\|\sin\Theta(\widehat B, B)\big\|\Big] = \Omega\bigg(\bigg(\sqrt{\frac{dk}{N}} + \sqrt{\frac{Mdk^2}{N^2}}\bigg)\wedge 1 \bigg).
        \$
    \end{corollary}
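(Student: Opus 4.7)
The plan is to derive Corollary \ref{cor:lb-sup} as a direct specialization of Theorem \ref{thm:lb-sup}. Since the theorem already expresses the minimax lower bound as a function of the client-diversity eigenvalues $\lambda_1$ and $\lambda_k$, the corollary reduces to two routine tasks: (i) verify that the restricted parameter space $\Psi_{\lambda_1,\lambda_k}$ with $\lambda_1 = \Theta(\lambda_k) = \Theta(1/k)$ is nonempty, so that the inner supremum is meaningful, and (ii) substitute $\lambda_k = \Theta(1/k)$ into the bound from Theorem \ref{thm:lb-sup}.

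For step (i), I would exhibit an explicit element of the well-represented regime. Using the hypothesis $M \geq (1+\rho_2)k$ inherited from Theorem \ref{thm:lb-sup}, I take equal data partitions $n_i \equiv N/M$ and choose $\alpha_1, \ldots, \alpha_M \in \reals^k$ to be a unit-norm tight frame, so that $\sum_{i=1}^M \alpha_i \alpha_i^\intercal = (M/k)\,I_k$. Then
\begin{equation*}
\frac{1}{N}\sum_{i=1}^M n_i \alpha_i \alpha_i^\intercal \;=\; \frac{1}{M}\cdot\frac{M}{k}\,I_k \;=\; \frac{1}{k}\,I_k,
\end{equation*}
so both $\lambda_1$ and $\lambda_k$ equal $1/k$ exactly. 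The normalization $\|\alpha_i\| = 1 = O(1)$ and the aggregate constraint $\sum_i n_i = N$ are immediate, placing $(\alpha,\vec{n})$ inside $\Psi_{\lambda_1,\lambda_k}$. A concrete construction of such a frame when $M \geq k$ is to use the standard basis of $\reals^k$ for the first $k$ indices and cycle through the basis vectors (with a uniform rescaling to restore tightness) for the remaining $M-k$ indices.

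For step (ii), I would plug $\lambda_k = \Theta(1/k)$ into the lower bound of Theorem \ref{thm:lb-sup} to obtain
\begin{equation*}
\sqrt{\frac{d}{N\lambda_k}} \;=\; \Theta\!\left(\sqrt{\frac{dk}{N}}\right)
\quad\text{and}\quad
\sqrt{\frac{Md}{N^2 \lambda_k^2}} \;=\; \Theta\!\left(\sqrt{\frac{Mdk^2}{N^2}}\right),
\end{equation*}
which, combined with the $\wedge\,1$ truncation carried over from Theorem \ref{thm:lb-sup}, reproduces exactly the bound asserted in the corollary. Because the argument is essentially a clean substitution with no new probabilistic content, there is no genuine obstacle; the only subtlety is nonemptiness of the restricted parameter space, and the tight-frame construction above settles this in one line.
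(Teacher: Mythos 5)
Your proposal is correct and matches the paper's treatment: the corollary is obtained by plugging $\lambda_1=\Theta(\lambda_k)=\Theta(1/k)$ (which satisfies the theorem's constraints $k\lambda_k=O(1)$, $\lambda_1=O(1)$) directly into Theorem \ref{thm:lb-sup}. Your nonemptiness check via a tight-frame choice of $\alpha_i$ is a harmless extra; the paper's own lower-bound constructions (basis-vector $\alpha_i$'s with $\sum_i n\alpha_i\alpha_i^\intercal = N\lambda_k I_k$, and Gaussian $\alpha_i$'s) already exhibit such elements of $\Psi_{\lambda_1,\lambda_k}$.
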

    Corollary \ref{cor:lb-sup} shows the error lower bound for well-represented cases and improves that from \cite{tripuraneni2021provable} of order $\Omega(\sqrt{dk/N})$. This result matches the upper bound in Corollary \ref{cor:up-sup} up to a polylogarithmic factor, thereby giving the optimal statistical rate.

\section{Applications}\label{sect:applications}
Having identified the statistical rate for estimating $B^\star$, we now apply this result to learn the model parameters for a newly joined client or an unseen private task and provide a more precise characterization of when collaboration benefits a new client. 

\subsection{Transferring Representations to New Clients}
Consider a new client $M+1$ with $n_{M+1}$ independent samples $\{(x_{M+1,j}, y_{M+1,j})\}_{j=1}^{n_{M+1}}$ generated from model \eqref{eq:model-sup} with 
$\Gamma_{M+1} = I_d$ and local parameter $\theta_{M+1}^\star$. There exists $\alpha_{M+1}^\star\in \R^k$ with $\|\alpha_{M+1}^\star\| = O(1)$, such that $\theta_{M+1}^\star = B^\star \alpha_{M+1}^\star$. The new client aims to learn 
$\theta_{M+1}^\star$. 

If we substitute an estimator $\widehat B$, learned from clients $1$ to $M$, in place of  the shared $B^\star$, the problem reduces to 
\#\label{eq:alpha-M1}
\widehat\alpha_{M+1} =\argmin_{\widehat \alpha} \sum_{j=1}^{n_{M+1}}\|x_{M+1, j}^\intercal \widehat{B} \widehat \alpha - y_{M+1, j}\|^2.
\#
Equivalently, we first project $x_{M+1, j}$ onto the $k$-dimensional subspace to obtain $\widehat{B}^\intercal x_{M+1, j}$, and then estimate $\alpha_{M+1}^\star$ by the standard least-squares on the projected data. 
\cite{tripuraneni2021provable} provides an error upper bound for $\widehat B\widehat\alpha_{M+1}$ when $\|\sin\Theta(\widehat{B}, B^\star)\| \le \delta^2$. Since Corollary \ref{cor:up-sup} establishes an error bound for our estimator $\widehat B$ obtained by \eqref{eq:estimator} 
in well-represented 
cases, 
directly applying their results yields the following result.\footnote{Proofs of Corollaries \ref{cor:transfer}-\ref{cor:private} follow directly by applying results from \cite{tripuraneni2020theory} and \cite{thaker2023leveraging}, and we omit them here.} 
\begin{corollary}[Transfer learning]\label{cor:transfer}
Suppose that Assumptions \ref{ass:sub-gaussian-noise}-\ref{ass:client-diverse} hold and $\lambda_1=\Theta(\lambda_k) =\Theta(1/k)$. For $\widehat{B}$ obtained by \eqref{eq:estimator} and $\widehat\alpha_{M+1}$ given by \eqref{eq:alpha-M1}, with probability at least $1-O((d+N)^{-10})-O(n_{M+1}^{-100})$, 
    \$    
    \big\|\widehat B\widehat\alpha_{M+1} - B^\star\alpha_{M+1}^\star\big\| = \widetilde O\bigg(\sqrt{\frac{dk}{N}} + \sqrt{\frac{Mdk^2}{N^2}} + \sqrt{\frac{k}{n_{M+1}}}\bigg). 
    \$
\end{corollary}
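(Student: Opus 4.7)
The plan is to leverage the independence of $\widehat{B}$ from the new client's data and decompose the total error into a representation-learning piece (controlled by $\|\sin\Theta(\widehat{B},B^\star)\|$) and a personalization piece (the standard least-squares error for $\alpha_{M+1}^\star$ on a known $k$-dimensional subspace). Since $\widehat{B}$ depends only on $\{(x_{ij},y_{ij})\}_{i\le M}$, I would condition on $\widehat{B}$ and treat it as deterministic when analyzing $\widehat{\alpha}_{M+1}$, invoking the sin-theta bound from Corollary \ref{cor:up-sup} only at the very end via a union bound.

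Write $X\in\R^{n_{M+1}\times d}$, $Y\in\R^{n_{M+1}}$, and $\xi\in\R^{n_{M+1}}$ for the stacked design, response, and noise of the new client. The normal equations for \eqref{eq:alpha-M1} give
\[
\widehat{B}\widehat{\alpha}_{M+1} - B^\star\alpha_{M+1}^\star = \widehat{B}(\widehat{B}^\intercal X^\intercal X\widehat{B})^{\dag}\widehat{B}^\intercal X^\intercal X\, B^\star\alpha_{M+1}^\star - B^\star\alpha_{M+1}^\star + \widehat{B}(\widehat{B}^\intercal X^\intercal X\widehat{B})^{\dag}\widehat{B}^\intercal X^\intercal \xi.
\]
Adding and subtracting $\widehat{B}\widehat{B}^\intercal B^\star\alpha_{M+1}^\star$ decomposes the right-hand side into three pieces: a \emph{projection} term $(\widehat{B}\widehat{B}^\intercal - I)B^\star\alpha_{M+1}^\star$, a \emph{design-distortion} term in which $\widehat{B}(\widehat{B}^\intercal X^\intercal X\widehat{B})^{\dag}\widehat{B}^\intercal X^\intercal X - \widehat{B}\widehat{B}^\intercal$ acts on $B^\star\alpha_{M+1}^\star$, and a \emph{noise} term $\widehat{B}(\widehat{B}^\intercal X^\intercal X\widehat{B})^{\dag}\widehat{B}^\intercal X^\intercal \xi$. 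The projection piece has norm at most $\|\sin\Theta(\widehat{B},B^\star)\|\cdot\|\alpha_{M+1}^\star\|=O(\|\sin\Theta(\widehat{B},B^\star)\|)$ by Definition \ref{def:principal_angle} and Assumption \ref{ass:client-diverse}.

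For the remaining two pieces I would use sub-Gaussian matrix concentration: conditionally on the independent $\widehat{B}$, the rows of $X\widehat{B}\in\R^{n_{M+1}\times k}$ are i.i.d.\ sub-Gaussian with identity covariance (since $\Gamma_{M+1}=I_d$), so a Wishart-type bound gives $\widehat{B}^\intercal X^\intercal X\widehat{B}/n_{M+1}=I_k+\Delta$ with $\|\Delta\|=\widetilde{O}(\sqrt{k/n_{M+1}})$, and a sub-Gaussian vector tail gives $\|\widehat{B}^\intercal X^\intercal \xi\|=\widetilde{O}(\sigma\sqrt{k\,n_{M+1}})$, each with probability at least $1-O(n_{M+1}^{-100})$. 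Combining these with the projection bound and using $\|\alpha_{M+1}^\star\|=O(1)$ and $\sigma=\Theta(1)$ yields
\[
\big\|\widehat{B}\widehat{\alpha}_{M+1}-B^\star\alpha_{M+1}^\star\big\|=\widetilde{O}\big(\|\sin\Theta(\widehat{B},B^\star)\|+\sqrt{k/n_{M+1}}\big).
\]

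Invoking Corollary \ref{cor:up-sup} to bound $\|\sin\Theta(\widehat{B},B^\star)\|$ with probability at least $1-O((d+N)^{-10})$ and union-bounding over the two independent events then delivers the stated result. The main obstacle is the design-distortion piece, which requires controlling $(\widehat{B}^\intercal X^\intercal X\widehat{B})^{\dag}$: when $n_{M+1}\lesssim k$ this matrix can be singular or ill-conditioned, but in that regime the target bound $\sqrt{k/n_{M+1}}$ is already $\Omega(1)$ and hence vacuous (the error may be bounded trivially by $\|\widehat{B}\widehat{\alpha}_{M+1}\|+\|B^\star\alpha_{M+1}^\star\|$), so the substantive analysis reduces to the well-conditioned regime $n_{M+1}\gtrsim k\log n_{M+1}$ in which the above concentration bounds are sharp.
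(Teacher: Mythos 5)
Your proposal is correct in substance but takes a genuinely different route from the paper. The paper's proof of Corollary \ref{cor:transfer} is a two-line citation argument: it invokes Theorem 4 of \cite{tripuraneni2021provable} (restated as Theorem \ref{thm:tri} in the appendix), which says that if $\|\sin\Theta(\widehat B, B^\star)\|\le\delta$ and $n_{M+1}\ge k\log n_{M+1}$, then $\|\widehat B\widehat\alpha_{M+1}-B^\star\alpha_{M+1}^\star\|^2=\widetilde O(\delta^2+k/n_{M+1})$ with probability $1-O(n_{M+1}^{-100})$, and then substitutes $\delta$ from Corollary \ref{cor:up-sup}. You instead re-derive that transfer step from first principles: conditioning on $\widehat B$ (independent of the new client's data), writing out the normal equations, splitting into projection, design-distortion, and noise terms, and controlling the latter two by Wishart-type concentration of $\widehat B^\intercal X^\intercal X\widehat B/n_{M+1}$ and a sub-Gaussian tail for $\widehat B^\intercal X^\intercal\xi$. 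What your route buys is a self-contained argument that makes explicit where the independence of $\widehat B$ and the sample-size requirement enter; what the paper's route buys is brevity and a clean separation between the representation bound (their contribution) and the fine-tuning bound (imported). One caveat: the cited theorem, and hence the corollary implicitly, assumes $n_{M+1}\ge k\log n_{M+1}$, and your dismissal of the regime $n_{M+1}\lesssim k$ as "trivial" is a bit loose --- $\|\widehat B\widehat\alpha_{M+1}\|=\|\widehat\alpha_{M+1}\|$ is not a priori $O(1)$ when the projected Gram matrix is near-singular, so the error is not automatically bounded by the (vacuous) right-hand side there; the honest fix is simply to state the same sample-size condition the paper inherits from Theorem \ref{thm:tri}, under which your concentration bounds hold and the union bound over the two events completes the proof exactly as you describe.
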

Corollary \ref{cor:transfer} decomposes the estimation error into two parts: $\widetilde{O}(\sqrt{{dk}/{N}} + \sqrt{{Mdk^2}/{N^2}})$ captures the error for estimating $B^\star$, and  $\widetilde{O}(\sqrt{k/n_{M+1}})$ is the error for estimating $\alpha_{M+1}^\star$ given $\widehat{B}$. 
Compared to the previous literature~\citep{tripuraneni2021provable}, our optimal rate provides a more precise characterization of when collaboration benefits the new client. If the new client estimates $\theta_{M+1}^\star\in\R^d$ from scratch based on the local data, the resulting error rate will be $\widetilde{O}(\sqrt{d/n_{M+1}})$. Therefore, it is advantageous to first learn the shared representation when $dk/N \ll d/n_{M+1}$ and $Mdk^2/N^2 \ll d/n_{M+1}$. Such conditions are satisfied when $n_{M+1} \ll \min\{N/k, N^2/(Mk^2)\}$. Conversely, if $n_{M+1} \gg \min\{N/k, N^2/(Mk^2)\}$, then the collaboration is unhelpful and the new client would be better off learning individually.

\subsection{Private Fine-tuning for New Clients}
\cite{thaker2023leveraging} studies a differentially private variant of learning $\alpha_{M+1}^\star$ in the same setting further with $x_{ij} \iiddistr N(0,I_d)$. 
We present an additional corollary under $(\varepsilon,\delta)$-differential privacy (see the formal definition in \citep{dwork2006our,dwork2006calibrating}), building upon \cite
{thaker2023leveraging} and derived using our estimator.
\begin{corollary}[Private transfer learning]\label{cor:private}
Under Assumptions \ref{ass:sub-gaussian-noise}-\ref{ass:client-diverse}, suppose $\lambda_1=\Theta(\lambda_k) =\Theta(1/k)$, and $x_{ij} \iiddistr N(0,I_d)$. Given $\widehat{B}$ from \eqref{eq:estimator}, there exists an $(\varepsilon,\delta)$-differentially private estimator $\widehat \alpha_{M+1}^{\varepsilon,\delta}$ such that, with probability at least $1-O((d+N)^{-10})-O(n_{M+1}^{-100})$,
\$
\big\| \widehat B\widehat\alpha_{M+1}^{\varepsilon,\delta}  - B^\star\alpha_{M+1}^\star\big\|= \widetilde{O}\bigg(\sqrt{\frac{dk}{N}} + \sqrt{\frac{Mdk^2}{N^2}} + \sqrt{\frac{k}{n_{M+1}}} + \frac{k\sqrt{\log(1/\delta)}}{n_{M+1}\varepsilon}\bigg).
\$
\end{corollary}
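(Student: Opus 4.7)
The plan is to combine the representation error bound from Corollary~\ref{cor:up-sup} with the private fine-tuning guarantee of \cite[Theorem 5.4]{thaker2023leveraging} via a triangle-type decomposition. First, I would invoke Corollary~\ref{cor:up-sup} on the pooled data of clients $1,\ldots,M$: under the stated assumptions, the spectral estimator $\widehat B$ defined in~\eqref{eq:estimator} satisfies $\|\sin\Theta(\widehat B, B^\star)\| = \widetilde{O}\bigl(\sqrt{dk/N} + \sqrt{Mdk^2/N^2}\bigr)$ with probability at least $1-O((d+N)^{-10})$. Crucially, $\widehat B$ is a function only of clients $1,\ldots,M$'s datasets, so it is statistically independent of the new client's data $\{(x_{M+1,j}, y_{M+1,j})\}_{j=1}^{n_{M+1}}$. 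This independence is the hypothesis that allows the private fine-tuning result to be applied with $\widehat B$ treated as an (almost) deterministic subspace.

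Next, I would condition on a realization of $\widehat B$ satisfying the above accuracy and invoke \cite[Theorem 5.4]{thaker2023leveraging}. Their result, specialized to our setting with $x_{M+1,j}\iiddistr N(0,I_d)$ and sub-gaussian response noise, produces an $(\varepsilon,\delta)$-differentially private estimator $\widehat\alpha_{M+1}^{\varepsilon,\delta}$ (a noisy-projected-least-squares variant of~\eqref{eq:alpha-M1}) such that, with probability $1-O(n_{M+1}^{-100})$,
\[
\bigl\|\widehat B \widehat\alpha_{M+1}^{\varepsilon,\delta} - B^\star \alpha_{M+1}^\star\bigr\| = \widetilde{O}\!\left(\bigl\|\sin\Theta(\widehat B, B^\star)\bigr\|\cdot\|\alpha_{M+1}^\star\| + \sqrt{\tfrac{k}{n_{M+1}}} + \tfrac{k\sqrt{\log(1/\delta)}}{n_{M+1}\varepsilon}\right).
\]
Substituting the sin-$\Theta$ bound from Corollary~\ref{cor:up-sup}, using $\|\alpha_{M+1}^\star\| = O(1)$, and taking a union bound over the two failure events gives the claimed inequality. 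The first two summands arise from the representation error transferred through $\widehat B$, the third from estimating the $k$-dimensional $\alpha_{M+1}^\star$ on the projected regression, and the last from the Gaussian-mechanism noise required for $(\varepsilon,\delta)$-differential privacy.

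The main obstacle is verifying that \cite[Theorem 5.4]{thaker2023leveraging} can be invoked with \emph{our} spectral $\widehat B$ rather than the method-of-moments estimator used in their paper. The fix is to observe that their analysis uses $\widehat B$ only through the single scalar quantity $\|\sin\Theta(\widehat B, B^\star)\|$ and through its independence from the fine-tuning dataset; both properties are guaranteed by our construction because $\widehat B$ depends solely on clients $1,\ldots,M$. A secondary technicality is bookkeeping the two high-probability events---one at level $1-O((d+N)^{-10})$ for the representation error and one at level $1-O(n_{M+1}^{-100})$ for the private regression---which are combined by a simple union bound and absorbed into the stated failure probability.
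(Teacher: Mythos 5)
Your proposal is correct and follows essentially the same route as the paper: substitute the $\|\sin\Theta(\widehat B,B^\star)\|$ bound from Corollary~\ref{cor:up-sup} into Theorem 5.4 of \cite{thaker2023leveraging} (valid since $\widehat B$ depends only on clients $1,\ldots,M$ and enters that result only through its principal angle distance), then union bound the two failure events. The only cosmetic difference is that the paper quotes the cited theorem as an excess-risk bound and adds the short conversion $L(\theta)-L(\theta^\star_{M+1})=\|\theta-\theta^\star_{M+1}\|^2/2$ for isotropic Gaussian covariates before taking square roots, whereas you state the borrowed guarantee directly in parameter-norm form.
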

For comparison, if the client privately estimates its $d$-dimensional parameter $\theta_{M+1}^\star$ from scratch,  the resulting tight error rate is $\widetilde{O}(\sqrt{d/n_{M+1}} + {d\sqrt{\log(1/\delta)}}/(n_{M+1}\varepsilon))$ \citep{pmlr-v178-varshney22a,cai2021cost}. 
Thus, when the estimation error of $\widehat{B}$, $\widetilde{O}(\sqrt{{dk}/{N}} + \sqrt{{Mdk^2}/{N^2}})$, is smaller, learning $\widehat{B}$ first will reduce the error rates.

\section{Extensions to General Models}\label{sect:non}
Moving beyond linear regression, we extend our analysis to settings where the responses $y_{ij}$ are generated by general non-linear models. Let $B^\star \in \cO^{d\times k}$ be the shared low-dimension representation. For client $i$, let $h_i \colon \R^k \to \R$ be a local function. Given a covariate $x_{ij}\in\R^d$ for the $j$-th sample, the response $y_{ij}$ in conditional expectation satisfies
\#\label{model:glm}
\E\big[y_{ij}\given x_{ij}\big] = h_i\big((B^\star)^\intercal x_{ij}\big).
\#
This formulation captures many canonical models, as outlined below. 

\paragraph{Generalized linear models (GLMs).} Let $\theta_i^\star\in\R^d$ be the ground-truth parameter for client $i$. In GLMs \citep{nelder1972generalized}, given $x\in\R^d$, the response $y_i$ follows a conditional distribution from the exponential family, 
\#\label{model:exp}
p(y_{i} \given x) = g(y_{i})\exp\Big(\frac{y_{i}\cdot x^\intercal \theta_i^\star - \psi(x^\intercal \theta_i^\star)}{c(\sigma)}\Big),
\#
where $g\colon \R\to \R_+$ is a carrier function, $\sigma$ is a scale parameter, and $\psi$ is the cumulant generating function defined by $\psi(t) = \log \int_{y\in\R} \exp(ty) g(y) dy$.
Standard results for exponential families \citep{mccullagh1989generalized} yield
$
\E[y_{i}\given x] = \psi^\prime(x^\intercal \theta_i^\star)$,
where $\psi^\prime$ is also known as the inverse link function. 
When $\theta_i^\star = B^\star\alpha_i^\star$ share a low-dimensional representation, 
we have $\E[y_{i}\given x] = \psi^\prime(x^\intercal \theta_i^\star) = \psi^\prime(x^\intercal B^\star\alpha_i^\star)$, in the general form with $h_i(u) = \psi^\prime(u^\intercal \alpha_i^\star)$.

Notable special cases of GLMs include linear regression, where $y_{i} \given x\sim N(x^\intercal \theta_i^\star, \sigma^2)$, and logistic regression, where $
    p(y_{i} \given x) = \exp(y_{i}x^\intercal \theta_i^\star - \log(1 + e^{x^\intercal \theta_i^\star}))$.

\paragraph{One-Hidden-Layer Neural networks.} One-hidden-layer neural networks also fit the framework of \eqref{model:glm}. We consider two-layer models $f_i(x) = \phi(B^\intercal x)^\intercal \alpha_i$, where clients share the first hidden layer with weights $B$ 
and activation function $\phi$, 
while each client has a personalized head $\alpha_i$ on top of the shared output $\phi(B^\intercal x)$. Assume the data are generated by a teacher network with parameters $B^\star \in \cO^{d\times k}$, $\alpha_i^\star\in\R^k$, and noise $\xi_{ij}$,
\$
y_{ij} = \phi((B^\star)^\intercal x_{ij})^\intercal \alpha_i^\star + \xi_{ij}.
\$
This corresponds to \eqref{model:glm} with $h_i(u) = \phi(u)^\intercal \alpha_i^\star$. For example, we will show that our analysis applies to the ReLU activation $(z)_+ = \max \{0, z\}$ despite its nonsmoothness.



For simplicity, we assume in this section the covariates $x_{ij}$ are i.i.d. standard Gaussian.
\begin{assumption}\label{ass:iid-x}
The covariates $x_{ij}$ are sampled i.i.d. from $N(0, I_d)$ for all $i$ and $j$.   
\end{assumption}

We now show that the same spectral estimator $\widehat B$ formed by the right (or left) top-$k$ singular vectors of matrix $Z$ in \eqref{eq:estimator} also recovers the column space of $B^\star$ for general models. 
The following lemma explains the reason.\footnote{Lemma \ref{lem:e-zi} follows by decomposing $x_{ij}$ into two orthogonal components,
one in the subspace $B$ and the other in its orthogonal complement. We omit the formal proof here; interested readers may refer to \cite{niu2024collaborative} for details.}  
Note that $(B^\star)^\intercal x_{i1} \sim N(0, I_k)$ when $x_{i1}\sim N(0, I_d)$.
\begin{lemma}\label{lem:e-zi}
Under the model \eqref{model:glm} and Assumption \ref{ass:iid-x}, for all $i$, we have $\E[h_i((B^\star)^\intercal x_{i1})x_{i1}] = B^\star \E_{U\sim N(0, I_k)}[h_i(U)U]$ and
    \$
    \E[\overline z_i] = \E[\widetilde z_i] = B^\star \E_{U\sim N(0, I_k)}\big[h_i(U)U\big].
    \$   
\end{lemma}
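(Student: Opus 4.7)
The plan is to establish both identities via a single orthogonal decomposition of the Gaussian covariate $x_{i1}$ combined with the tower property for conditional expectation. The main observation is that when $x_{i1} \sim N(0, I_d)$, the orthogonal projections $B^\star (B^\star)^\intercal x_{i1}$ and $(I_d - B^\star (B^\star)^\intercal) x_{i1}$ are jointly Gaussian with zero cross-covariance, hence independent. Since $h_i((B^\star)^\intercal x_{i1})$ depends on $x_{i1}$ only through the first projection, this independence lets us split the expectation cleanly.

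First I would prove the identity $\E[h_i((B^\star)^\intercal x_{i1}) x_{i1}] = B^\star \E_{U \sim N(0, I_k)}[h_i(U) U]$. Decompose $x_{i1} = B^\star (B^\star)^\intercal x_{i1} + (I_d - B^\star (B^\star)^\intercal) x_{i1}$ and use linearity of expectation. For the component along $B^\star$, write $U = (B^\star)^\intercal x_{i1}$; since $B^\star$ has orthonormal columns, $U \sim N(0, I_k)$, yielding $\E[h_i(U) B^\star U] = B^\star \E_{U \sim N(0, I_k)}[h_i(U) U]$. For the orthogonal complement, the independence noted above gives $\E[h_i((B^\star)^\intercal x_{i1}) (I_d - B^\star (B^\star)^\intercal) x_{i1}] = \E[h_i((B^\star)^\intercal x_{i1})] \cdot (I_d - B^\star (B^\star)^\intercal)\E[x_{i1}] = 0$ because $\E[x_{i1}] = 0$.

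Next I would pass to $\E[\overline z_i]$. By construction $\overline z_i = (2/n_i)\sum_{j=1}^{n_i/2} y_{ij} x_{ij}$, and the samples $\{(x_{ij}, y_{ij})\}_{j=1}^{n_i/2}$ are i.i.d., so linearity gives $\E[\overline z_i] = \E[y_{i1} x_{i1}]$. The tower property together with the model assumption $\E[y_{i1} \mid x_{i1}] = h_i((B^\star)^\intercal x_{i1})$ then yields $\E[y_{i1} x_{i1}] = \E[\E[y_{i1} \mid x_{i1}] \, x_{i1}] = \E[h_i((B^\star)^\intercal x_{i1}) x_{i1}]$, which by the first step equals $B^\star \E_{U \sim N(0, I_k)}[h_i(U) U]$. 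The identical argument with indices $j = n_i/2+1, \ldots, n_i$ handles $\widetilde z_i$.

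There is no real obstacle here; the statement is essentially Stein-type reasoning for Gaussian covariates, and the only subtlety is justifying the independence of the two orthogonal projections, which follows from the fact that uncorrelated jointly Gaussian vectors are independent. Integrability of $h_i(U) U$ would be needed to make the expectations well-defined, but this is implicit in the paper's assumptions (e.g.\ sub-gaussian responses via the model framework in \eqref{model:glm}), so I would simply invoke it without further comment.
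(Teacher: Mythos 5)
Your proposal is correct and follows essentially the same route as the paper's own proof: decompose $x_{i1}$ into the projections $B^\star(B^\star)^\intercal x_{i1}$ and $(I_d - B^\star(B^\star)^\intercal)x_{i1}$, use that uncorrelated jointly Gaussian vectors are independent so the orthogonal component contributes zero, identify $U=(B^\star)^\intercal x_{i1}\sim N(0,I_k)$, and then pass to $\E[\overline z_i]$, $\E[\widetilde z_i]$ via the tower property and linearity over the i.i.d.\ samples. No gaps worth noting.
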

The vector $\E_{U\sim N(0, I_k)}[h_i(U)U]\in\R^k$ acts as $\alpha_i^\star$ in the linear model. 
Let the client diversity matrix for general models be $D = \sum_{i=1}^Mn_i \E_{U\sim N(0, I_k)}[h_i(U)U]\E_{U\sim N(0, I_k)}[h_i(U)U^\intercal]/N$. Lemma \ref{lem:e-zi} yields that 
$\E[Z] =  B^\star D(B^\star)^\intercal$,
so our estimator $\widehat B$, formed by the leading singular vectors of $Z$ is a good estimate for $B^\star$, as in the linear case. 
To show the error upper bound, we impose the following assumptions on the model in \eqref{model:glm}.
    
\begin{assumption}\label{ass:non-normal}
In the model \eqref{model:glm}, the following conditions hold:
\begin{enumerate}
\item For all $i\in [M]$, the function $h_i$ satisfies $\|\E_{U\sim N(0, I_k)}[h_i(U)U]\| = O(1)$;
\item For all $u,v\in\R^k$ and $i\in [M]$, it holds that $\|h_i(u) - h_i(v)\| = O( \|u- v\|)$;
\item Errors $y_{ij}\given x_{ij} - \E[y_{ij}\given x_{ij}]$ are independent sub-gaussian with constant variance proxy.
\end{enumerate}
\end{assumption}
Assumption \ref{ass:non-normal} serves as the counterpart of Assumptions \ref{ass:sub-gaussian-noise}-\ref{ass:client-diverse} for the general model. 
Specifically, the Lipschitz continuity of $h_i$ implies that $h_i(U)$ is sub-gaussian for $U\sim N(0, I_k)$ \cite[Theorem 5.2.2]{vershynin2018high}, which is crucial for deriving the tail bounds. 
Several previous works on GLMs \citep{loh2013regularized} require similar assumptions like $\psi''(\cdot) \le c$ for a constant $c>0$.\footnote{For differentiable $h_i(u) = \psi^\prime(u^\intercal \alpha_i^\star)$, we have $\nabla h_i(u) = \psi''(u^\intercal \alpha_i^\star)\alpha_i^\star$; thus $\|\nabla h_i(u)\| = \psi''(u^\intercal \alpha_i^\star) \|\alpha_i^\star\|$ due to the convexity of the cumulant generating function $\psi$ such that $\psi''(\cdot)\ge 0$. Since $\|\alpha_i^\star\| = O(1)$, their condition $\psi''(\cdot) \le c$ implies the boundedness of $\|\nabla h_i(u)\|$ and thus the Lipschitz continuity of $h_i$. 
} This Lipschitz condition holds in several important cases, including linear regression,
logistic regression, and one-hidden-layer ReLU networks. 

The next theorem shows that the estimator achieves the same error rate as in the linear regression case (Theorem \ref{thm:upper-bound-sup}) for general models. It extends the upper bound to a broader class of models, including 
logistic regression and one-hidden-layer ReLU networks.

\begin{theorem}\label{thm:non-up}
Given the model \eqref{model:glm} and Assumptions \ref{ass:iid-x} and \ref{ass:non-normal}, the estimator $\widehat{B}$ from \eqref{eq:estimator} achieves the error bound in \eqref{eq:main-up-error} with probability at least $1-O((d+N)^{-10})$, where $\lambda_1$ and $\lambda_k$ are the largest and smallest eigenvalues of $D$ for model \eqref{model:glm}, respectively.
\end{theorem}

\section{Proof of the Upper Bounds in Theorems \ref{thm:upper-bound-sup} and \ref{thm:non-up}}\label{sec:proof-up}
To unify the analysis of both linear and non-linear models discussed in Sections \ref{sec:up} and \ref{sect:non}, we introduce a general model and derive the error rate. 
Theorems \ref{thm:upper-bound-sup} and \ref{thm:non-up} follow as special cases. 
Suppose we observe data $\{(x_{ij}, y_{ij})_{j=1}^{n_i}\}_{i=1}^M$ to estimate $B^\star$, where $x_{ij}\in\R^d$, $y_{ij} = \zeta_{ij} + \xi_{ij}\in\R$, and $\E[\zeta_{ij} x_{ij}] = B^\star\alpha_i^\star$. Moreover, we impose the following assumptions.
\begin{assumption}\label{ass:general}
The tuples $(x_{ij}, \zeta_{ij}, \xi_{ij})$ are mutually independent across different $i,j$.
For all $i,j$, $x_{ij}\in\R^d$ is sub-gaussian with a constant variance proxy; $\zeta_{ij}$ is sub-gaussian, where $\zeta_{ij}/\|\alpha_i^\star\|$ has a constant variance proxy; and $\xi_{ij}\given x_{ij}$ is zero-mean sub-gaussian with a constant variance proxy.
\end{assumption}
We will show that, given the general model above and Assumption \ref{ass:general}, 
the estimator $\widehat{B}$ obtained by \eqref{eq:estimator} achieves the error bound in \eqref{eq:main-up-error} with probability at least $1-O((d+N)^{-10})$. This result directly
implies Theorems \ref{thm:upper-bound-sup} and \ref{thm:non-up} as special cases. 

\medskip
\begin{proof}[Proof of Theorem \ref{thm:upper-bound-sup}]
Let $\zeta_{ij} = x_{ij}^\intercal \theta_i^\star$. Then $\E[\zeta_{ij} x_{ij}] = \E[x_{ij}x_{ij}^\intercal]\theta_i^\star = B^\star\alpha_i^\star$. Moreover, $\xi_{ij}$ is sub-gaussian by Assumption \ref{ass:sub-gaussian-noise}, and  $\zeta_{ij} = (x_{ij}^\intercal \theta_i^\star/\|\theta_i^\star\|)\cdot \|\theta_i^\star\|$ is sub-gaussian with variance proxy $O(\|\theta_i^\star\|^2) = O(\|\Gamma_i^{-1} B^\star\alpha_i^\star\|^2) = O(\|\alpha_i^\star\|^2) =O(1)$ by Assumptions \ref{ass:covariates}-\ref{ass:client-diverse}. 
\end{proof}

\begin{proof}[Proof of Theorem \ref{thm:non-up}]
Given the model in \eqref{model:glm}, let $\alpha_i^\star = \E_{U\sim N(0,I_k)}[h_i(U)U]$, $\zeta_{ij} = h_i((B^\star)^\intercal x_{ij})$, and $\xi_{ij} = y_{ij} - h_i((B^\star)^\intercal x_{ij})$. Note that $\E[\zeta_{ij} x_{ij}] =\E[h_i((B^\star)^\intercal x_{i1})x_{i1}] = B^\star \E_{U\sim N(0, I_k)}[h_i(U)U] = B^\star\alpha_i^\star$ by Lemma \ref{lem:e-zi}. We now verify that $\zeta_{ij}$ and $\xi_{ij}$ satisfy Assumption \ref{ass:general}. First, $\zeta_{ij} =h_i((B^\star)^\intercal x_{ij})$ is sub-gaussian due to the Lipschitz continuity of $h_i$ in Assumption \ref{ass:non-normal} and $(B^\star)^\intercal x_{ij}\sim N(0, I_k)$ \citep
{vershynin2018high}. Next, the conditional variable $\xi_{ij}\given x_{ij}$ is zero-mean since $\E[\xi_{ij}\given x_{ij}] = \E[y_{ij}- \E[y_{ij}\given x_{ij}] \given x_{ij}] =0$. Finally, $\xi_{ij}\given x_{ij}$ is sub-gaussian since both $y_{ij}\given x_{ij}$ (Assumption \ref{ass:non-normal}) and $\zeta_{ij} =h_i((B^\star)^\intercal x_{ij})$ are sub-gaussian. 
\end{proof}   

We now prove the error bound in \eqref{eq:main-up-error} under the general model. 
For all $i$, let 
\begin{align*}
a_i = \frac{2}{\sqrt{n_i}} \sum_{j=1}^{n_i/2} \zeta_{ij} x_{ij}, \
b_i = \frac{2}{\sqrt{n_i}} \sum_{j=1}^{n_i/2} \xi_{ij} x_{ij}, \ 
u_i = \frac{2}{\sqrt{n_i}} \sum_{j=n_i/2+1}^{n_i} \zeta_{ij} x_{ij}, \
v_i = \frac{2}{\sqrt{n_i}} \sum_{j=n_i/2+1}^{n_i} \xi_{ij} x_{ij}.
\end{align*}
Then $(a_i, b_i)$ is independent of $(u_i, v_i)$ since they are from independent data subsets. Moreover, $\sqrt{n_i}\overline z_i = 2/\sqrt{n_i} \cdot \sum_{j=1}^{n_i/2} y_{ij} x_{ij} = a_i + b_i$ and $\sqrt{n_i}\widetilde z_i = u_i + v_i$.

\medskip
\begin{proof}[Proof of the error bound in \eqref{eq:main-up-error}]
Recall that $\E[\zeta_{ij}x_{ij}] = B^\star\alpha_i^\star$ and $\E[\xi_{ij} x_{ij}] = \E[\E[\xi_{ij}\given x_{ij}] x_{ij}] = 0$ for all $i,j$. Thus, by $\sqrt{n_i}\overline z_i = a_i + b_i$ and $\sqrt{n_i}\widetilde z_i = u_i + v_i$, we have
\$
\E \overline z_i = \frac{1}{\sqrt{n_i}}\big(\E a_i + \E b_i\big) = \frac{2}{n_i}\sum_{i=1}^{n_i/2}\E[\zeta_{ij}x_{ij}] + \frac{2}{n_i}\sum_{i=1}^{n_i/2}\E[\xi_{ij}x_{ij}]=  B^\star\alpha_i^\star.
\$ 
Similarly, we have $\E \widetilde z_i = B^\star\alpha_i^\star$ and thus
    \$
    \E Z = \sum_{i=1}^M n_i (\E \overline z_i) (\E \widetilde z_i^\intercal)  = B^\star\Big(\sum_{i=1}^M n_i\alpha_i^\star(\alpha_i^\star)^\intercal \Big) (B^\star)^\intercal.
    \$
By substituting $\sqrt{n_i}\overline z_i = a_i + b_i$ and $\sqrt{n_i}\widetilde z_i = u_i + v_i$ into the estimator $Z$, we have
\$
& Z - \E Z = \sum_{i=1}^M n_i \overline z_i \widetilde z_i^\intercal - B^\star\Big(\sum_{i=1}^M n_i\alpha_i^\star(\alpha_i^\star)^\intercal \Big) (B^\star)^\intercal \\
&\qquad = \sum_{i=1}^M (a_i + b_i)(u_i + v_i)^\intercal - B^\star\Big(\sum_{i=1}^M n_i\alpha_i^\star(\alpha_i^\star)^\intercal \Big) (B^\star)^\intercal \\
&\qquad = \sum_{i=1}^M a_iu_i^\intercal -  B^\star\Big(\sum_{i=1}^M n_i\alpha_i^\star(\alpha_i^\star)^\intercal \Big) (B^\star)^\intercal
+ \sum_{i=1}^M a_i v_i^\intercal 
+ \sum_{i=1}^M b_i u_i^\intercal 
+ \sum_{i=1}^M b_i v_i^\intercal.
\$
Lemmas \ref{lem:main-sup}-\ref{lem:noise-sup} will bound the fluctuations of each term in spectral norm using random matrix tools. Since $b_iu_i^\intercal$ and $a_iv_i^\intercal$ are identically distributed, Lemma \ref{lem:inner-prod-sup} also applies to $\|\sum_{i=1}^M b_iu_i^\intercal \|$. Substituting these results, we have, with probability at least $1-O((d+N)^{-10})$,
\begin{align}
    \|Z - \E Z\| & \le \Big\|\sum_{i=1}^M a_iu_i^\intercal -  B^\star\Big(\sum_{i=1}^M n_i\alpha_i^\star(\alpha_i^\star)^\intercal \Big) (B^\star)^\intercal \Big\| 
    +  \Big\| \sum_{i=1}^M a_iv_i^\intercal\Big\| 
    + \Big\|\sum_{i=1}^M b_iu_i^\intercal \Big\| + \Big\|\sum_{i=1}^M b_iv_i^\intercal\Big\| \label{eq:error_decomp} \notag \\
& = O\big(\big(\sqrt{Md} + \sqrt{Nd\lambda_1} + d \big)\cdot\log^3(d+N) \big). 
\end{align}
Finally, we apply Wedin's $\sin\Theta$ theorem \citep{wedin1972perturbation} to bound the subspace error. Note that $\E Z= B^\star(\sum_{i=1}^M n_i\alpha_i^\star(\alpha_i^\star)^\intercal) (B^\star)^\intercal$ is rank-$k$ with $\lambda_{k}(\E Z) =N\lambda_{k}$ and $\lambda_{k+1}(\E Z)=0$. Observe that when $N \lambda_k =O(d)$, our upper bound trivializes. Thus, we assume $N\lambda_k =\Omega(d)$ henceforth and obtain 
\$
\|\sin\Theta(\widehat{B}, B^\star)\| & \le \frac{2\|Z - \E Z\|}{\lambda_{k}(\E Z)} = \frac{O((\sqrt{Md} + \sqrt{Nd\lambda_1} + d )\cdot\log^3(d+N))}{N\lambda_{k}} \\
& = O\bigg(\bigg(\sqrt{\frac{d\lambda_{1}}{N\lambda_k^2}}  + \sqrt{\frac{Md}{N^2\lambda_k^2}}\bigg)\cdot\log^3(d+N)\bigg),
\$
where the last line holds from $N\lambda_k=\Omega(d)$ and $d/(N\lambda_k) \le \sqrt{d/(N\lambda_k)} \le \sqrt{d\lambda_{1}/(N\lambda_k^2)}$. 
\end{proof}

It remains to bound each error term in \eqref{eq:error_decomp}, 
summarized below, which completes the proof of the error bound.
\begin{lemma}\label{lem:main-sup}
Under Assumption \ref{ass:general}, with probability at least $1-O((d+N)^{-10})$, we have
\$
\Big\|\sum_{i=1}^M a_iu_i^\intercal -  B^\star\Big(\sum_{i=1}^M n_i\alpha_i^\star(\alpha_i^\star)^\intercal \Big) (B^\star)^\intercal \Big\| = O\big(\big(\sqrt{Md} + \sqrt{Nd\lambda_1} + d \big)\cdot\log^3(d+N) \big).
\$
\end{lemma}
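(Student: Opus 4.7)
The plan is to apply the truncated matrix Bernstein inequality (Corollary \ref{cor:bernstein}) to a three-way centered decomposition of the summand, exploiting the independence of $a_i$ and $u_i$ (which come from disjoint halves of client $i$'s data). Writing $P_i = a_i - \E a_i$ and $Q_i = u_i - \E u_i$, I first expand
$$a_i u_i^\intercal - (\E a_i)(\E u_i)^\intercal = P_i Q_i^\intercal + P_i (\E u_i)^\intercal + (\E a_i) Q_i^\intercal,$$
and observe that $\sum_i (\E a_i)(\E u_i)^\intercal = B^\star\big(\sum_i n_i \alpha_i^\star (\alpha_i^\star)^\intercal\big)(B^\star)^\intercal$ by Lemma \ref{lem:var-1}. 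This reduces the task to bounding three mean-zero sums $S_1 = \sum_i P_i Q_i^\intercal$, $S_2 = \sum_i P_i (\E u_i)^\intercal$, and $S_3 = \sum_i (\E a_i) Q_i^\intercal$ separately.

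For the uniform tail controls needed for truncation, I would apply Lemma \ref{lem:sub-exp-bernstein} to $P_i$ and $Q_i$ (each a centered normalized sum of $n_i/2$ independent pairs $\zeta_{ij} x_{ij}$) and union-bound over $i$, giving $\|P_i\|, \|Q_i\| \le O(\sqrt{d}\log(d+N))$ uniformly with probability $1-O((d+N)^{-10})$. Combined with the deterministic bound $\|\E u_i\| = \sqrt{n_i}\|\alpha_i^\star\| \le \sqrt{N\lambda_1}$ from Lemma \ref{lem:var-1}, this yields truncation levels $\beta_1 = O(d\log^2(d+N))$ for $S_1$ and $\beta_2 = \beta_3 = O(\sqrt{Nd\lambda_1}\,\log(d+N))$ for $S_2, S_3$; each $\beta\log d$ contribution is then at most $O(d\log^3(d+N) + \sqrt{Nd\lambda_1}\log^2(d+N))$. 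The mean shift after truncation is polynomially small by Lemma \ref{lem:bound-mean-shift} together with the second-moment bounds in Lemmas \ref{lem:var-1}-\ref{lem:var-2}, so the $Mq$ correction is negligible.

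For the matrix variance statistics, the independence $P_i \perp Q_i$ gives $\E[P_i Q_i^\intercal Q_i P_i^\intercal] = \E[\|Q_i\|^2]\E[P_i P_i^\intercal]$ of norm $O(d)\cdot O(1)$; summing yields $v_1 = O(Md)$ and $\|S_1\| = O(\sqrt{Md\log d} + d\log^3(d+N))$. For $S_2$, writing $Y_i = \sqrt{n_i}\, P_i (\alpha_i^\star)^\intercal (B^\star)^\intercal$, the right-side variance $\sum_i \E[Y_i^\intercal Y_i] = O(d)\sum_i n_i B^\star \alpha_i^\star(\alpha_i^\star)^\intercal (B^\star)^\intercal$ has norm $O(Nd\lambda_1)$, dominating the left-side variance $\big\|\sum_i n_i\|\alpha_i^\star\|^2 \E[P_i P_i^\intercal]\big\| = O(N\tr(D)) = O(N)$. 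Hence $v_2 = O(Nd\lambda_1)$ and $\|S_2\| = O(\sqrt{Nd\lambda_1}\log^2(d+N))$; the bound for $S_3$ is symmetric. Combining the three pieces and absorbing logarithms gives the claimed $O((\sqrt{Md} + \sqrt{Nd\lambda_1} + d)\log^3(d+N))$ bound.

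The main obstacle is obtaining the $\sqrt{Nd\lambda_1}$ scaling in $S_2$ and $S_3$. The three-way decomposition is designed precisely to isolate the $\alpha_i^\star$ directions: viewing $S_2$ as a sum of operators whose right factor lives in the $k$-dimensional subspace spanned by the $\alpha_i^\star$ forces the relevant variance to pick up $\|\sum_i n_i \alpha_i^\star(\alpha_i^\star)^\intercal\| = N\lambda_1$ rather than the coarser $\sum_i n_i\|\alpha_i^\star\|^2 = N\tr(D)$. The truncation level $\beta_2$ similarly relies on the sharp bound $\|\E u_i\| \le \sqrt{N\lambda_1}$, which improves on $\sqrt{n_i}$ whenever $n_i \gg N\lambda_1$. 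A direct application of matrix Bernstein to $a_i u_i^\intercal - (\E a_i)(\E u_i)^\intercal$ without this decomposition would produce a truncation contribution of order $N\lambda_1\log(d+N)$ that can exceed $\sqrt{Nd\lambda_1}\log(d+N)$ in the sample-rich regime $N\lambda_1 > d$, so the decomposition is essential rather than cosmetic.
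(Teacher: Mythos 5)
Your argument is correct and follows essentially the same route as the paper: the paper also proves this lemma via the truncated matrix Bernstein inequality with exactly the same ingredients (Lemma \ref{lem:sub-exp-bernstein} for tails, Lemma \ref{lem:bound-mean-shift} for the post-truncation mean shift, the moment bounds of Lemma \ref{lem:var-1}, truncation level $\max\{d\log^2,\sqrt{Nd\lambda_1}\log\}$, and variance $O(Md+Nd\lambda_1)$), the only difference being that the paper applies Bernstein once to the centered matrices $a_iu_i^\intercal-(\E a_i)(\E u_i)^\intercal$ and uses your three-term expansion only to control their tails, whereas you apply it separately to the three centered sums. Two small points: your closing remark that a ``direct application'' would incur an $N\lambda_1\log(d+N)$ truncation term mischaracterizes that route, since centering already removes the mean product $(\E a_i)(\E u_i)^\intercal$ and the paper's single application attains the same $\sqrt{Nd\lambda_1}$ level; and the domination of the left-side variance of $S_2$ should be justified by $N\,\mathrm{tr}(D)\le Nk\lambda_1\le Nd\lambda_1$ (as in the paper) rather than by ``$O(N)\le O(Nd\lambda_1)$'', which would implicitly require $d\lambda_1=\Omega(1)$.
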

\begin{lemma}\label{lem:inner-prod-sup}
Under Assumption \ref{ass:general}, with probability at least $1-O((d+N)^{-10})$, we have
\$
\Big\| \sum_{i=1}^M a_i v_i^\intercal\Big\| = O\big(\big(\sqrt{Md} + \sqrt{Nd\lambda_1} + d \big)\cdot\log^3(d+N) \big).
\$
\end{lemma}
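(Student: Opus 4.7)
The plan is to mirror the proof of Lemma~\ref{lem:main-sup} almost verbatim via the truncated matrix Bernstein inequality (Corollary~\ref{cor:bernstein}), with $Z_i = a_i v_i^\intercal$. The key structural point is that $a_i$ and $v_i$ are built from disjoint sub-samples (indices $j \le n_i/2$ versus $j > n_i/2$), hence independent, and Lemma~\ref{lem:var-2} gives $\E v_i = 0$. Therefore $\E Z_i = (\E a_i)(\E v_i^\intercal) = 0$ and the target equals $\|\sum_i (Z_i - \E Z_i)\|$.

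First I would pick a truncation level. Writing $a_i v_i^\intercal = (a_i - \E a_i) v_i^\intercal + (\E a_i) v_i^\intercal$ and using $\|\E a_i\| \le \sqrt{N\lambda_1}$ from Lemma~\ref{lem:var-1}, it suffices to tail-bound $\|a_i - \E a_i\|$ and $\|v_i\|$. By Lemma~\ref{lem:sub-gaussian-xi}, $\xi_{ij}$ is sub-gaussian, so both $a_i$ and $v_i$ are normalized sums of products of sub-gaussians; Lemma~\ref{lem:sub-exp-bernstein} then yields
\[
\pr(\|a_i - \E a_i\| \ge t),\ \pr(\|v_i\| \ge t) \le 2d\exp\bigl(-\min\{c_1 t^2/d,\, c_2 t\sqrt{n_i/d}\}\bigr).
\]
Combining these with the splitting above and the bound $\|\E a_i\| \le \sqrt{N\lambda_1}$ gives, for $\beta = C\max\{d\log^2(d/\delta),\, \sqrt{Nd\lambda_1}\log(d/\delta)\}$, the tail $\pr(\|Z_i\| \ge \beta) \le \delta$, exactly matching~\eqref{main-error-truncate}. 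The mean-shift bound follows from Lemma~\ref{lem:bound-mean-shift} together with $\E[\|a_i\|^2] = O(d+N)$ and $\E[\|v_i\|^2] = O(d)$ from Lemmas~\ref{lem:var-1}--\ref{lem:var-2}, giving $q = O((d+N)\sqrt{\delta})$, the same order as~\eqref{main-error-q}.

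The main bookkeeping step is the variance statistic, where independence of $a_i$ and $v_i$ is decisive: it factors
\[
\E[Z_i Z_i^\intercal] = \E[\|v_i\|^2]\,\E[a_i a_i^\intercal], \qquad \E[Z_i^\intercal Z_i] = \E[\|a_i\|^2]\,\E[v_i v_i^\intercal].
\]
For any unit $s$, using $\E[\|v_i\|^2] = O(d)$ and $s^\intercal \E[a_i a_i^\intercal] s = O(1) + n_i s^\intercal B^\star \alpha_i^\star (\alpha_i^\star)^\intercal (B^\star)^\intercal s$, summing over $i$ and applying $\|(B^\star)^\intercal s\| \le 1$ together with $\lambda_1(\sum_i n_i \alpha_i^\star(\alpha_i^\star)^\intercal) \le N\lambda_1$ gives $\|\sum_i \E[Z_i Z_i^\intercal]\| = O(Md + Nd\lambda_1)$. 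For the other direction, $\|\E[v_i v_i^\intercal]\| = O(1)$ and $\sum_i \E[\|a_i\|^2] = O(Md + Nk\lambda_1)$ yield the same order. Hence $v = O(Md + Nd\lambda_1)$, matching~\eqref{main-error-v}.

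With identical $\beta$, $q$, and $v$ as in Lemma~\ref{lem:main-sup}, applying Corollary~\ref{cor:bernstein} with $c = 11$ and $\delta = (d+N)^{-11}$ yields, with probability $1 - O((d+N)^{-10})$, the claimed bound $O((\sqrt{Md} + \sqrt{Nd\lambda_1} + d)\log^3(d+N))$. I do not expect any real obstacle: since $v_i$ has zero mean the analysis is actually slightly simpler than for Lemma~\ref{lem:main-sup} (no cross term from $\E Z_i$), and the only point requiring care is verifying that the variance bounds for $v_i$ are strong enough that the factorized variance statistic does not exceed the $O(Md + Nd\lambda_1)$ rate — which is guaranteed by Lemma~\ref{lem:var-2}.
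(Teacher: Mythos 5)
Your proposal matches the paper's proof essentially step for step: the same decomposition $a_iv_i^\intercal=(a_i-\E a_i)v_i^\intercal+(\E a_i)v_i^\intercal$ with Lemma \ref{lem:sub-exp-bernstein} for the truncation level, Lemma \ref{lem:bound-mean-shift} with Lemmas \ref{lem:var-1}--\ref{lem:var-2} for the mean shift, the factorized variance statistic $v=O(Md+Nd\lambda_1)$ via independence of $a_i$ and $v_i$, and Corollary \ref{cor:bernstein} with $\delta=(d+N)^{-11}$. The only cosmetic difference is your slightly looser mean-shift bound $q=O((d+N)\sqrt{\delta})$ versus the paper's $O(\sqrt{d(d+N)}\sqrt{\delta})$, which is immaterial to the final rate.
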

\begin{lemma}\label{lem:noise-sup}
Under Assumption \ref{ass:general}, with probability at least $1-O((d+N)^{-10})$, we have
\$
\Big\|\sum_{i=1}^M b_iv_i^\intercal\Big\| = O\big(\big(\sqrt{Md} + d \big)\cdot\log^3(d+N) \big).
\$
\end{lemma}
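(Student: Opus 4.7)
The plan is to apply the truncated matrix Bernstein inequality (Corollary \ref{cor:bernstein}) to $Z_i = b_i v_i^\intercal$, exactly mirroring Lemmas \ref{lem:main-sup} and \ref{lem:inner-prod-sup}. The key simplification here is that both factors are mean-zero and independent: Lemma \ref{lem:var-2} gives $\E b_i = \E v_i = 0$, and $b_i$ (built from samples indexed by $j \in [n_i/2]$) is independent of $v_i$ (built from the second half), so $\E Z_i = (\E b_i)(\E v_i^\intercal) = 0$ and the target reduces to controlling $\|\sum_i Z_i\|$. I expect this mean-zero property to eliminate the $\sqrt{Nd\lambda_1}$ contribution that appeared in the previous two lemmas from the nonzero deterministic factor $\E a_i$, which is exactly the reason the bound here is cleaner.

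First I would determine the truncation level. Since $\|Z_i\| = \|b_i\|\|v_i\|$ with $b_i, v_i$ identically distributed, a union bound gives $\pr(\|Z_i\| \ge \beta) \le 2\pr(\|b_i\| \ge \sqrt{\beta})$. By Lemma \ref{lem:sub-gaussian-xi} the noise $\xi_{ij}$ is sub-gaussian with constant variance proxy, so the normalized sum $b_i = (2/\sqrt{n_i})\sum_j \xi_{ij} x_{ij}$ fits the hypothesis of Lemma \ref{lem:sub-exp-bernstein}, yielding the usual two-regime sub-exponential tail and, after picking $\beta = C\, d \log^2(d/\delta)$ for a sufficiently large absolute constant $C$, the truncation bound $\pr(\|Z_i\| \ge \beta) \le \delta$. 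Second, the mean shift after truncation is controlled by Lemma \ref{lem:bound-mean-shift}: combined with $\E \|b_i\|^2 = \E \|v_i\|^2 = O(d)$ from Lemma \ref{lem:var-2}, this gives $\|\E Z_i - \E[Z_i \mathds{1}\{\|Z_i\| < \beta\}]\| \le O(d)\sqrt{\delta} \eqqcolon q$.

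Third, the variance statistic factors cleanly by independence: $\E[Z_i Z_i^\intercal] = \E\|v_i\|^2 \cdot \E[b_i b_i^\intercal]$, and Lemma \ref{lem:var-2}'s bounds $\|\E[b_i b_i^\intercal]\| = O(1)$ and $\E \|v_i\|^2 = O(d)$ give $\|\E[Z_i Z_i^\intercal]\| = O(d)$; the symmetric calculation bounds $\|\E[Z_i^\intercal Z_i]\|$ analogously. Summing over $M$ clients yields $v = O(Md)$. Plugging these three quantities into Corollary \ref{cor:bernstein} with $c = 11$ and $\delta = (d+N)^{-11}$ produces, with probability at least $1 - O((d+N)^{-10})$,
\[
\Big\|\sum_{i=1}^M Z_i\Big\| = O\bigl(\sqrt{Md \log d} + d \log^3(d+N) + M d \sqrt{\delta}\bigr) = O\bigl((\sqrt{Md} + d)\log^3(d+N)\bigr),
\]
which is the claimed bound. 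I do not anticipate any substantive obstacle here; the only conceptual point to flag is that the disappearance of the $\sqrt{Nd\lambda_1}$ term precisely reflects $\E b_i = \E v_i = 0$ in both the truncation and variance steps, and everything else is routine bookkeeping already established in the two analogous proofs above.
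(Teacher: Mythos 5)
Your proposal is correct and follows essentially the same route as the paper's proof: the same truncation level $\beta = Cd\log^2(d/\delta)$ via Lemmas \ref{lem:sub-gaussian-xi} and \ref{lem:sub-exp-bernstein}, the same mean-shift bound $q = O(\sqrt{\delta}\,d)$ via Lemmas \ref{lem:bound-mean-shift} and \ref{lem:var-2}, the same variance statistic $v = O(Md)$ from the factorization $\E[Z_iZ_i^\intercal] = \E[\|v_i\|^2]\,\E[b_ib_i^\intercal]$, and the same application of Corollary \ref{cor:bernstein} with $c=11$ and $\delta = (d+N)^{-11}$. Your observation that the mean-zero factors are what eliminate the $\sqrt{Nd\lambda_1}$ term relative to Lemmas \ref{lem:main-sup} and \ref{lem:inner-prod-sup} is exactly the structural point, so nothing further is needed.
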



We prove these lemmas using the truncated matrix Bernstein inequality, and we defer their proofs to the appendix.

\section{Proof of the Lower Bound in Theorem \ref{thm:lb-sup}}\label{sec:proof-low}
This section proves the lower bound by an information-theoretic argument via Fano's method. We will establish the two terms, $\sqrt{d/(N\lambda_k)}$ and $\sqrt{Md}/(N\lambda_k)$, in Theorem \ref{thm:lb-sup} separately. While the proofs of these two
terms share a similar structure, they differ in the construction of $\alpha_i$ and the way to bound the mutual
information in Steps 2 and 4. 

We begin with a preliminary result that constructs a $(c\sqrt{\varepsilon})$-separated packing set $\cB\subset\cO^{d\times k}$ for a constant $c>0$ and any $0<\varepsilon<1/2$. Let $\{e_r\}_{r=1}^k$ be the standard basis in $\R^k$.
\begin{lemma}[Packing set]\label{lem:packing}
    There exists a constant $c>0$ such that one can construct a packing set $\widetilde\cB = \{b_1, \cdots, b_K\} \subset \cO^{(d-k)\times 1}$ with $K \ge 10^{(d-k)}$ that is $(\sqrt{2}c)$-separated, \ie, $\|\sin\Theta(b_r, b_s)\| \ge \sqrt{2}c$ for any $r\neq s$.
Moreover, given $0<\varepsilon<1/2$, we define $B_r\in\R^{d\times k}$ as
    \$
     B_r = \begin{pmatrix}
         & e_1, & \cdots, &e_{k-1}, & \sqrt{1-\varepsilon}e_k \\
         & 0, & \cdots, & 0, &\sqrt{\varepsilon} b_r
     \end{pmatrix},
     \$
     for all $r\in[K]$. 
     Then the set $\cB= \{B_1, \cdots, B_K\}$ forms a packing set of $\cO^{d\times k}$, with $\|\sin\Theta(B_r, B_s)\| \ge c\sqrt{\varepsilon}$ for any $r \neq s$.
\end{lemma}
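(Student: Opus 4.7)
My plan is to split the argument into two stages: first, construct the packing $\widetilde\cB$ of unit vectors in $\R^{d-k}$ with the claimed cardinality and separation; second, lift that packing into $\cO^{d\times k}$ via the given block embedding and verify that the separation scales by $\sqrt{\varepsilon}$. The only substantive step is the packing construction in stage one; stage two reduces to a direct block-matrix expansion followed by a one-line spectral-norm estimate, and the main thing to track is that a single absolute constant $c$ simultaneously produces $|\widetilde\cB|\ge 10^{d-k}$ and the $\sqrt{2}c$ separation.

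For the first stage, I would invoke a standard volume-ratio / greedy-packing argument on the projective unit sphere in $\R^{d-k}$. The classical bound furnishes, for any $\eta\in(0,1)$, an $\eta$-separated (in chordal distance, modulo antipodal identification) subset of $\mathbb{S}^{d-k-1}$ of cardinality at least $(c_1/\eta)^{d-k}$ for an absolute constant $c_1$; applying this with $\eta=\sqrt{2}c$ for a sufficiently small absolute constant $c>0$ yields $K\ge 10^{d-k}$ unit vectors satisfying $|b_r^\intercal b_s|\le\sqrt{1-2c^2}$. Since $b_r b_r^\intercal - b_s b_s^\intercal$ has rank at most $2$ with eigenvalues $\pm\sqrt{1-(b_r^\intercal b_s)^2}$, this gives $\|\sin\Theta(b_r,b_s)\|=\sqrt{1-(b_r^\intercal b_s)^2}\ge\sqrt{2}c$ for all distinct $r,s$, as required.

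For the second stage, I would first verify that $B_r\in\cO^{d\times k}$: its first $k-1$ columns, being the canonical basis vectors $(e_1,0),\ldots,(e_{k-1},0)\in\R^d$, are orthonormal; the last column has squared norm $(1-\varepsilon)+\varepsilon\|b_r\|^2=1$ and is orthogonal to the earlier ones because $e_i^\intercal e_k=0$ for $i<k$. Writing $B_r=[E,v_r]$ with the $d\times(k-1)$ matrix $E$ shared across $r$, one has $M\triangleq B_rB_r^\intercal-B_sB_s^\intercal=v_rv_r^\intercal-v_sv_s^\intercal$, and a direct block expansion under the splitting $d=k+(d-k)$ shows that applying $M$ to the $k$-th standard basis vector $f_k\in\R^d$ produces the vector whose first $k$ coordinates vanish and whose last $d-k$ coordinates equal $\sqrt{\varepsilon(1-\varepsilon)}(b_r-b_s)$, so that $\|Mf_k\|=\sqrt{\varepsilon(1-\varepsilon)}\|b_r-b_s\|$. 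Combining the trivial inequality $\|M\|\ge\|Mf_k\|$ with $\|b_r-b_s\|^2=2(1-b_r^\intercal b_s)\ge 2(1-\sqrt{1-2c^2})\ge 2c^2$ (the last step via $\sqrt{1-x}\le 1-x/2$) and $\varepsilon(1-\varepsilon)\ge\varepsilon/2$ (which holds because $\varepsilon\le 1/2$) delivers $\|\sin\Theta(B_r,B_s)\|\ge c\sqrt{\varepsilon}$, completing the plan.
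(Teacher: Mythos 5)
Your proof is correct and follows essentially the same route as the paper: the paper cites a known packing result for one-dimensional subspaces (where you instead sketch the standard volume argument), and for the lifted matrices both arguments reduce $\|B_rB_r^\intercal - B_sB_s^\intercal\|$ to $\sqrt{\varepsilon(1-\varepsilon)}\,\|b_r-b_s\|$ --- you by testing the rank-two difference against the $k$-th standard basis vector of $\R^d$, the paper by the submatrix-norm inequality applied to its explicit block formula for $B_rB_r^\intercal$ --- and then both use $\varepsilon<1/2$ together with the $\sqrt{2}c$ separation. The only remaining cosmetic difference is that you lower-bound $\|b_r-b_s\|$ via the inner-product bound supplied by your own packing construction, while the paper uses $\|b_r-b_s\|\ge\|\sin\Theta(b_r,b_s)\|$; both give the same conclusion.
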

\begin{proof}
    First, \cite{liu2022optimal} proves the first statement that there exists a $(\sqrt{2}c)$-separated packing set $\widetilde\cB\subset\cO^{(d-k)\times 1}$ with cardinality  $K=|\widetilde\cB|\ge 10^{(d-k)}$. 
    We now study the properties of $\cB$. For any $r\in[K]$, we have $B_r^\intercal B_r=I_k$; thus $\cB\subset\cO^{d\times k}$. In addition, we compute that
\$
B_rB_r^\intercal = \begin{pmatrix}
    I_k-\varepsilon e_ke_k^\intercal & \sqrt{\varepsilon(1-\varepsilon)} e_k b_r^\intercal \\
    \sqrt{\varepsilon(1-\varepsilon)} b_re_k^\intercal & \varepsilon b_rb_r^\intercal \end{pmatrix}.
\$
Fix any $r\neq s$ and recall $\|\sin\Theta(B_r, B_s)\| = \|B_rB_r^\intercal - B_sB_s^\intercal\|$. 
Then we have 
\$
\|\sin\Theta(B_r, B_s)\| &= \|B_rB_r^\intercal - B_sB_s^\intercal\|
        \ge  \sqrt{\varepsilon(1-\varepsilon)}\|e_k (b_r - b_s)^\intercal\| \\
        &= \sqrt{\varepsilon(1-\varepsilon)} \|b_r -b_s\| \ge \sqrt{\varepsilon(1-\varepsilon)} \|\sin\Theta(b_r, b_s)\| \\
& \ge \sqrt{\varepsilon(1-\varepsilon)} \times \sqrt{2}c \ge c\sqrt{\varepsilon},
\$
where the first inequality holds because the spectral norm of a matrix is no smaller than that of its submatrix, the second equality holds because $e_k(b_r-b_s)^\intercal$ is rank-one, and the second inequality follows from $\|b_r-b_s\|=2 \|\sin(\Theta(b_r, b_s)/2) \|\ge\|\sin\Theta(b_r, b_s)\|$. 
\end{proof}

We obtain the first term, $\sqrt{{d}/({N\lambda_k})}$, by considering deterministic $\alpha_i$'s. The key lies in Step 2, which we construct
hard-to-distinguish problem instances. 

\medskip
\begin{proof}[Proof of the first term]
\paragraph{Step 1: Reduction to a probability bound.}
    For any $\widehat{B}$, $B$, $\{n_i\}$, and $\alpha$, Markov's inequality gives that
\$
\E\big[\big\|\sin\Theta(\widehat B, B)\big\|\big] \ge \frac{c\sqrt{\varepsilon}}{2} \pr\Big(\big\|\sin\Theta(\widehat B, B)\big\| \ge \frac{c\sqrt{\varepsilon}}{2}\Big).
\$
Thus, to show the first term, it suffices to show the following for $\sqrt{\varepsilon} = \Theta(\sqrt{d/(N\lambda_k)})$,
\#\label{eq:to-prove}
\inf_{\widehat{B}\in\cO^{d\times k}}\sup_{B\in\cO^{d\times k}} \sup_{\substack{n_1,\cdots, n_M\\ \sum_{i=1}^M n_i = N} } \sup_{\alpha\in\Psi_{\lambda_1, \lambda_k}^{n_1, \cdots, n_M}} \pr\big(\big\|\sin\Theta(\widehat B, B)\big\| \ge c\sqrt{\varepsilon}/2 \big) \ge 1/2.
\#
\paragraph{Step 2: Constructing hard-to-distinguish problem instances.}
The learnability of $B$ is determined by the smallest eigenvalue $\lambda_k$ of the client-diversity matrix $D$, as a smaller $\lambda_k$ indicates less information in the data along its associated eigenvector direction. Accordingly, we choose $B$ and $\alpha_i$ to capture $\lambda_k$ in the lower bound. 

We fix $\alpha_i$ for $i\in[M]$ such that $\alpha_i=O(1)$ and $\sum_{i=1}^M n\alpha_i \alpha_i^\intercal = N \lambda_k I_k$. As an example, we write each $i$ as $i=rk+s$ with $r,s\in\Z^+$, $r \le \floor{M/k}$ and $s<k$. Let $\alpha_i =\sqrt{M\lambda_k/\floor{M/k}} e_{s+1}$ if $r \le \floor{M/k}$, and $\alpha_i=0$ otherwise. Given $0<k\lambda_k=O(1)$ and $M\ge k$, such $\alpha_i$'s satisfy the required conditions. 

We then construct $\cB= \{B_1, \cdots, B_K\}\subset\cO^{d\times k}$ as the $(c\sqrt{\varepsilon})$-separated packing set from Lemma \ref{lem:packing}, where we keep the first $k-1$ columns of $B_r$ fixed and vary only the last column corresponding to $\lambda_k$. Here $K\ge 10^{(d-k)}$, and $\log K \ge c_1(d-k) \ge c_2d$ when $d\ge (1+\rho_1)k$ for constants $c_1, c_2,\rho_1>0$. 
We sample $B \sim \text{Unif}(\cB)$ and set $n_i \equiv n = {N}/{M}$. 

Let $x_i = (x_{ij})_{j\in[n]}$ and $y_i = (y_{ij})_{j\in[n]}$ be the local data for client $i$, where $x_{ij}\sim N(0,I_d)$ and $\xi_{ij}\sim N(0,1)$ independently, and $y_{ij} = x_{ij}^\intercal B\alpha_i + \xi_{ij}$. Since $\alpha_i$ is deterministic, we have \$P_{y_i\given x_i, B} = N(x_i^\intercal B\alpha_i, I_{n}).\$ Let $X = (x_i)_{i\in[M]}$, $Y = (y_i)_{i\in[M]}$ be the entire dataset, and $\pr_{B, (X, Y)}(\cdot)$ denote the joint distribution of $(B, (X, Y))$. 
For any $\widehat{B}$, we lower-bound the supremum by an average:
\#\label{eq:fano-step1-2}
&\sup_{B\in\cO^{d\times k}} \sup_{\substack{n_1,\cdots, n_M\\ \sum_{i=1}^M n_i = N} } \sup_{\alpha\in\Psi_{\lambda_1, \lambda_k}^{n_1, \cdots, n_M}} \pr\big(\big\|\sin\Theta(\widehat B, B)\big\| \ge c\sqrt{\varepsilon}/2 \big) \notag\\
& \qquad \qquad \ge \pr_{B, (X, Y)}\big(\big\|\sin\Theta(\widehat B, B)\big\| \ge c\sqrt{\varepsilon}/2 \big).
\#
\paragraph{Step 3: Applying Fano’s inequality.}
Let $\phi\colon \cO^{d\times k} \to \cB$ be a quantizer that maps any $B\in \cO^{d\times k}$ to the closet point in $\cB$. Recall that $\|\sin\Theta(B_r, B_s)\| \ge c\sqrt{\varepsilon}$ for any $r \neq s$. For any $B_i \in\cB$, if $\phi(\widehat B) \neq B_i$, then we have $\|\sin\Theta(\widehat B, B_i)\| \ge c\sqrt{\varepsilon}/2$. Thus, we obtain
\#\label{eq:fano-2}
    \pr_{B, (X, Y)}\big(\big\|\sin\Theta(\widehat B, B)\big\| \ge c\sqrt{\varepsilon}/2\big) & \ge \pr_{B, (X, Y)}\big(\phi(\widehat B) \neq B \big) \notag\\
    & \ge 1 - \frac{I(B; (X, Y)) + \log 2}{\log K},
\#
where the last inequality follows from Fano's inequality. 

\paragraph{Step 4: Bounding the mutual information.}
It remains to establish an upper bound for $I(B; (X, Y))$. Since $(x_1, y_1), \cdots, (x_M, y_M)$ are independent conditioned on $B$, we have
$
I(B; (X, Y)) 
\le \sum_{i=1}^M I(B; (x_i, y_i)).$
Since $B$ and $x_i$ are independent, we have
\$
I\big(B; (x_i, y_i)\big) & = \E_B \E_{x_i}\big[\kl{P_{y_i\given x_i, B}}{P_{y_i\given x_i}}\big] \notag\\
& = \E_B \E_{x_i}\big[\kl{P_{y_i\given x_i, B}}{\E_{B^\prime}[P_{y_i\given x_i, B^\prime}]}\big] \notag\\
&\le \E_{B^\prime}\E_B \E_{x_i}\big[\kl{P_{y_i\given x_i, B}}{P_{y_i\given x_i, B^\prime}}\big],
\$
where the inequality follows from the convexity of KL-divergence.
Combining the above two inequalities, we obtain
\#\label{eq:I-B-xy}
I\big(B; (X,Y)\big) & \le \E_{B^\prime}\E_B \Big(\sum_{i=1}^M\E_{x_i}\big[\kl{P_{y_i\given x_i, B}}{P_{y_i\given x_i, B^\prime}}\big]\Big) \notag\\
& \le \max_{B_r, B_s\in \cB} \sum_{i=1}^M\E_{x_i}\big[\kl{P_{y_i\given x_i, B_r}}{P_{y_i\given x_i, B_s}}\big].
\#
We now compute the divergence for fixed $B_r \neq B_s$. Recall that $P_{y_i\given x_i, B_r} = N(x_i^\intercal B_r\alpha_i, I_{n})$. The KL-divergence between multivariate Gaussian distributions, along with $\E[ x_ix_i^\intercal] = \sum_{j=1}^{n} \E[ x_{ij}x_{ij}^\intercal] = nI_d$, yields that
\$
&\E_{x_i}\big[\kl{P_{y_i\given x_i, B_r}}{P_{y_i\given x_i, B_s}}\big] = \frac{1}{2} \alpha_i^\intercal(B_r-B_s)^\intercal\E[ x_ix_i^\intercal](B_r-B_s)\alpha_i  \\
& \qquad \qquad\qquad = \frac{1}{2} n\alpha_i^\intercal(B_r-B_s)^\intercal (B_r-B_s)\alpha_i  = \frac{1}{2}\tr\big((B_r-B_s)^\intercal (B_r-B_s)\cdot n\alpha_i\alpha_i^\intercal\big).
\$ 
Let $\Delta_{r,s} = (B_r-B_s)^\intercal (B_r-B_s)/2 = I_k - B_r^\intercal B_s/2 - B_s^\intercal B_r/2$. For $B_r,B_s$ defined in Lemma \ref{lem:packing}, we compute that
$
B_r^\intercal B_s = B_s^\intercal B_r = \diag{1,\cdots,1,1-\varepsilon+\varepsilon b_r^\intercal b_s}$. Thus, $\Delta_{r,s} = \diag{0,\cdots,0,\varepsilon(1 - b_r^\intercal b_s)}$. Substituting this into the above equation and recalling that $\sum_{i=1}^{M} n\alpha_i\alpha_i^\intercal = N\lambda_k I_k$, we have
\$
& \sum_{i=1}^M\E_{x_i}\big[\kl{P_{y_i\given x_i, B_r}}{P_{y_i\given x_i, B_s}}\big] = \tr\Big(\Delta_{r,s}\cdot\Big(\sum_{i=1}^{M} n\alpha_i\alpha_i^\intercal\Big)\Big) \\
& \qquad \qquad \qquad = N\lambda_k \cdot\tr\big(\diag{0,\cdots,0,\varepsilon(1 - b_r^\intercal b_s)}\big)  = \varepsilon(1 - b_r^\intercal b_s)N\lambda_k \le 2\varepsilon N\lambda_k,
\$
where the last inequality holds since $- b_r^\intercal b_s \le \|b_r\|\|b_s\|=1$.
Substituting the above into \eqref{eq:I-B-xy}, we obtain $I\big(B; (X,Y)\big) \le 2\varepsilon N\lambda_k$. Thus, when $\varepsilon >0$ satisfies
\$
\varepsilon =  \frac{c_1d}{6N\lambda_k} \wedge 1 = \Theta\Big(\frac{d}{N\lambda_k} \wedge 1 \Big),
\$
we have $I(B; (X, Y))\le 2\varepsilon N\lambda_k = c_1d/3$.
Thus, \eqref{eq:fano-2} implies that $\pr_{B,\alpha, X}(\|\sin\Theta(\widehat B, B)\| \ge c\sqrt{\varepsilon}/2)\ge 1/2$ since $\log K\ge c_1d$. Finally, substituting \eqref{eq:fano-2} into \eqref{eq:fano-step1-2} completes the proof. 
\end{proof} 

The proof of the second term, $\sqrt{Md}/(N\lambda_k)$, differs from that of the first term in the choice of $\alpha_i$ and the way to bound $I(B; (x_i,y_i))$. Instead of deterministic $\alpha_i$, we will consider random $\alpha_i$. The next lemma shows the concentration of $\alpha$ with Gaussian-generated columns.
\begin{lemma}\label{lem:bound-a}
Assume  $k=\Omega(\log M)$ and $M\ge(1+\rho_2)k$ for a constant $\rho_2>0$, and fix $n_i =n =N/M$ for $i\in[M]$. 
When generating $\alpha_i\sim N(0, \lambda_k I_k)$ independently, we have
\$
\pr\big( \alpha\in\Psi_{\lambda_1, \lambda_k}^{n,\cdots, n} \big) \ge 3/4.
\$
\end{lemma}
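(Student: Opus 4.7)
The plan is to verify both conditions defining $\Psi_{\lambda_1,\lambda_k}^{n,\ldots,n}$ separately, then combine via a union bound to reach probability at least $3/4$.

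First I would handle the norm condition $\|\alpha_i\|=O(1)$ for every $i\in[M]$. Since $\alpha_i \sim N(0,\lambda_k I_k)$, the quantity $\|\alpha_i\|^2/\lambda_k$ is a $\chi^2_k$ random variable, with mean $k$. Standard $\chi^2$-concentration (Laurent--Massart) gives $\pr(\|\alpha_i\|^2 \ge \lambda_k(k + 2\sqrt{kt} + 2t)) \le e^{-t}$. Choosing $t = C\log M$ and taking a union bound over $i\in[M]$, I get $\max_i \|\alpha_i\|^2 \le \lambda_k(k + O(\sqrt{k\log M} + \log M))$ with probability at least $7/8$. Because we assume $k=\Omega(\log M)$, this simplifies to $\max_i\|\alpha_i\|^2 = O(k\lambda_k) = O(1)$ thanks to the standing normalization $k\lambda_k = O(1)$.

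Next I would handle the spectral condition on $\frac{1}{N}\sum_i n_i\alpha_i\alpha_i^\intercal = \frac{1}{M}\sum_i\alpha_i\alpha_i^\intercal$, whose expectation is exactly $\lambda_k I_k$. Writing $\alpha_i = \sqrt{\lambda_k}\,g_i$ with $g_i\sim N(0,I_k)$ i.i.d., the matrix becomes $\lambda_k \cdot (\tfrac{1}{M}\sum_i g_ig_i^\intercal)$, i.e., $\lambda_k$ times the sample covariance of $M$ standard Gaussian vectors in $\R^k$. By the standard non-asymptotic bound for Gaussian sample covariance matrices (e.g., Vershynin, Theorem 4.6.1), for any $t>0$,
\[
\Big\|\frac{1}{M}\sum_{i=1}^M g_ig_i^\intercal - I_k\Big\| \le C\Big(\sqrt{\tfrac{k}{M}} + \tfrac{k}{M} + \sqrt{\tfrac{t}{M}} + \tfrac{t}{M}\Big)
\]
with probability at least $1 - 2e^{-t}$. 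Taking $t$ to be a large enough constant and using the hypothesis $M\ge(1+\rho_2)k$, the right-hand side is bounded by a constant $\kappa=\kappa(\rho_2)<1$ (chosen small enough) with probability at least $7/8$. Consequently all eigenvalues of $\frac{1}{M}\sum_i\alpha_i\alpha_i^\intercal$ lie in the interval $[\lambda_k(1-\kappa),\,\lambda_k(1+\kappa)]$, which is of the form $[\Omega(\lambda_k),\,O(\lambda_k)] \subseteq [\Omega(\lambda_k),\,O(\lambda_1)]$ since $\lambda_1\ge\lambda_k$.

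Finally, combining the two high-probability events via a union bound yields $\pr(\alpha\in\Psi_{\lambda_1,\lambda_k}^{n,\ldots,n}) \ge 1 - 1/8 - 1/8 = 3/4$, as claimed. No step should pose a serious obstacle: the norm bound is a one-line $\chi^2$ tail plus union bound, and the spectral bound is a textbook sample-covariance concentration inequality. The only care needed is to exploit $k=\Omega(\log M)$ to absorb the union-bound $\log M$ factor into $k\lambda_k = O(1)$, and to use $M\ge(1+\rho_2)k$ to make the sample-covariance deviation a fixed fraction of $\lambda_k$ rather than merely $o(\lambda_k)$.
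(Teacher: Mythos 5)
Your first step (the norm condition) is fine and is essentially the paper's argument in different clothing: a per-client tail bound for $\|\alpha_i\|$ plus a union bound over $[M]$, with the $\log M$ from the union bound absorbed via $k=\Omega(\log M)$ and $k\lambda_k=O(1)$. Whether one quotes Laurent--Massart for $\chi^2_k$ or Gaussian norm concentration is immaterial.

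The spectral step, however, has a genuine gap. You invoke a generic sample-covariance concentration bound of the form $\bigl\|\tfrac{1}{M}\sum_i g_ig_i^\intercal - I_k\bigr\| \le C\bigl(\sqrt{k/M}+k/M+\sqrt{t/M}+t/M\bigr)$ and claim that, because $M\ge(1+\rho_2)k$, the right-hand side can be made a constant $\kappa(\rho_2)<1$. That does not follow: the hypothesis only gives $\sqrt{k/M}\le 1/\sqrt{1+\rho_2}$, which is arbitrarily close to $1$ for small $\rho_2$, and the inequality carries an unspecified absolute constant $C$ (not equal to $1$). So $C\sqrt{k/M}$ can exceed $1$, the interval $[\lambda_k(1-\kappa),\lambda_k(1+\kappa)]$ can have a nonpositive left endpoint, and you lose exactly the part of the conclusion that matters, namely $\lambda_{\min}\bigl(\tfrac1M\sum_i\alpha_i\alpha_i^\intercal\bigr)=\Omega(\lambda_k)$. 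The two-sided operator-norm route inherently needs $M\gg k$ by a large constant factor, which is not assumed. The paper avoids this by working with singular values of the Gaussian matrix $\alpha/\sqrt{\lambda_k}$ directly and using the sharp-constant bound $\sigma_{\min}\ge\sqrt{M}-\sqrt{k}-t$ and $\sigma_{\max}\le\sqrt{M}+\sqrt{k}+t$ (Davidson--Szarek type, Theorem 7.3.3 in Vershynin): since the coefficient of $\sqrt{k}$ is exactly $1$, $M\ge(1+\rho_2)k$ yields $\sqrt{M}-\sqrt{k}\ge(1-1/\sqrt{1+\rho_2})\sqrt{M}=\Omega(\sqrt{M})$, hence $\lambda_{\min}\bigl(\tfrac1M\sum_i\alpha_i\alpha_i^\intercal\bigr)=\sigma_{\min}^2(\alpha)/M=\Omega(\lambda_k)$ and similarly $\lambda_{\max}=O(\lambda_k)=O(\lambda_1)$. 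Your argument would be repaired by replacing the covariance inequality with this sharp singular-value bound (or any bound with explicit constant $1$ in front of $\sqrt{k/M}$); as written, the lower-eigenvalue claim is not established for small $\rho_2$.
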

\begin{proof}
By the definition of the parameter space $\Psi_{\lambda_1, \lambda_k}^{n,\cdots, n}$, 
we have
\$
\pr\big( \alpha \in \Psi_{\lambda_1, \lambda_k}^{n,\cdots, n} \big) \ge 
\pr\big(\|\alpha_i\|= O(1), \forall i \big) + \pr\Big(\Omega(\lambda_k) I_k \preceq \frac{1}{N}\sum_{i=1}^M n \alpha_i\alpha_i^\intercal  \preceq O(\lambda_1) I_k\Big) -1.
\$
We first show that $\pr(\|\alpha_i\|= O(1), \forall i ) \ge 7/8$.
By the union bound, 
$
\pr(\exists i,\, \|\alpha_i\|\neq O(1) ) 
\le \sum_{i=1}^M 
\pr(\|\alpha_i\| \neq  O(1))
= M \pr(\|\alpha_1\| \neq  O(1))$.
Thus, it suffices to show $\pr(\|\alpha_1\| \neq O(1) )\le 1/(8M)$. Since $\alpha_1/\sqrt{\lambda_k}$ is a $k\times 1$ standard Gaussian matrix, the concentration inequality \citep
{vershynin2018high} gives, for any $t 
\ge 0$, $
\pr(\|\alpha_1\| \le \sqrt{\lambda_k}(\sqrt{k} + 1 +t)) \ge 1- 2\exp(-t^2/2) 
$.
Taking $t = \sqrt{2\log(16M)}$ yields, with probability at least $1-1/(8M)$,
$
\|\alpha_1\| \le  \sqrt{ \lambda_k}(\sqrt{k}+1+\sqrt{2\log(16M)})
\le O(1)$,
since $k\lambda_k=O(1)$  and $k=\Omega(\log M)$ together imply $\lambda_k \log M = O(\log(M)/k)=O(1)$. 
Thus, $
\pr(\|\alpha_1\| \neq O(1) )
\le 1/(8M)$. It remains to show that
$$
\pr \Big( \Omega(\lambda_k) I_k \preceq \frac{1}{N}\sum_{i=1}^M n \alpha_i\alpha_i^\intercal  \preceq O(\lambda_1) I_k \Big) \ge 7/8.
$$
Let $\sigma_r(\cdot)$ be the $r$-th largest singular value of a matrix. Since $\lambda_r(\sum_{i=1}^M n \alpha_i\alpha_i^\intercal) = n\sigma_r^2(\alpha)$, it reduces to proving that 
$
\pr ( \Omega(\sqrt{M\lambda_k}) \le
\sigma_k(\alpha) \le \sigma_1(\alpha) \le O(\sqrt{M\lambda_1}) ) \ge 7/8$.
We now bound $\sigma_1(\alpha)$ and $\sigma_k(\alpha)$. 
The concentration properties of the standard Gaussian matrix $\alpha/
\sqrt{\lambda_k}$ \cite
{vershynin2018high} yields that, for any $t 
\ge 0$, with probability at least $1-2\exp(-t^2/2)$,
$
\sqrt{\lambda_k}(\sqrt{M} - \sqrt{k} - t)\le \sigma_k(\alpha) \le \sigma_1(\alpha) \le \sqrt{\lambda_k}(\sqrt{M} + \sqrt{k} +t)$.
Thus, by picking $t=\sqrt{2\log(16)}$, with probability at least $7/8$,
\$
\sqrt{\lambda_k}(\sqrt{M} - \sqrt{k} - \sqrt{2\log(16)})\le \sigma_k(\alpha) \le \sigma_1(\alpha) \le \sqrt{ \lambda_k}(\sqrt{M} + \sqrt{k} + \sqrt{2\log(16)}).  
\$
Finally, since $M\ge(1+\rho_2)k$ for a constant $\rho_2>0$, we have
$$
\sqrt{M}-\sqrt{k}-\sqrt{2\log(16)} \ge \Omega(\sqrt{M}), \quad
\sqrt{M}+\sqrt{k}+\sqrt{2\log(16)} \le O(\sqrt{M}). \qedhere
$$
\end{proof}

We now prove the second term $\sqrt{Md}/(N\lambda_k)$ similarly, 
but with Gaussian-generated $\alpha$.

\medskip
\begin{proof}[Proof of the second term]

\paragraph{Step 1: Reduction to a probability bound.}
Similar to the proof of the first term, we only need to show that the following holds for $\varepsilon = \Theta({Md}/(N^2\lambda_k^2))$,
\#\label{eq:to-prove-term2}
\inf_{\widehat{B}\in\cO^{d\times k}}\sup_{B\in\cO^{d\times k}} \sup_{\substack{n_1,\cdots, n_M\\ \sum_{i=1}^M n_i = N} } \sup_{\alpha\in\Psi_{\lambda_1, \lambda_k}^{n_1, \cdots, n_M}} \pr\big(\big\|\sin\Theta(\widehat B, B)\big\| \ge c\sqrt{\varepsilon}/2 \big) \ge 1/2.
\#

\paragraph{Step 2: Constructing hard-to-distinguish problem instances.} Similarly, we choose $B$ and $\alpha_i$ to capture $\lambda_k$ in the lower bound. We use the same packing set $\cB$ from Lemma \ref{lem:packing} with $\log K\ge c_2d$. We sample $B \sim \text{Unif}(\cB)$.
But here we generate $\alpha_i\sim N(0, \lambda_k I_k)$ independently for $i\in[M]$. Given $B_r\in\cB$ and $\alpha_i\sim N(0, \lambda_k I_k)$, the model $y_i = x_i^\intercal B_r\alpha_i + \xi_i$ implies that $P_{y_i\given x_i, B_r} = N(0, \Sigma_{ir})$ with $$
\Sigma_{ir} = \lambda_k x_i^\intercal B_r B_r^\intercal x_i + I_n, 
\quad \text{ where }
B_r B_r^\intercal = \begin{pmatrix}
    I_k-\varepsilon e_ke_k^\intercal & \sqrt{\varepsilon(1-\varepsilon)} e_k b_r^\intercal \\
    \sqrt{\varepsilon(1-\varepsilon)} b_re_k^\intercal & \varepsilon b_rb_r^\intercal \end{pmatrix}.
$$
Let $\pr_{B, \alpha, (X, Y)}(\cdot)$ denote the joint distribution of $(B,\alpha, (X, Y))$. 
For any $\widehat{B}$ and $\varepsilon>0$, we lower-bound the supremum and obtain
\#\label{eq:fano-step1}
    & \sup_{B\in\cO^{d\times k}} \sup_{\substack{n_1,\cdots, n_M\\ \sum_{i=1}^M n_i = N} } \sup_{\alpha\in\Psi_{\lambda_1, \lambda_k}^{n_1, \cdots, n_M}} \pr\big(\big\|\sin\Theta(\widehat B, B)\big\| \ge c\sqrt{\varepsilon}/2 \big) \notag\\
    &\qquad \qquad\ge \pr_{B,\alpha, (X, Y)}\big(\big\|\sin\Theta(\widehat B, B)\big\| \ge c\sqrt{\varepsilon}/2 \biggiven \alpha\in\Omega_{\lambda_1, \lambda_k}^{n, \cdots, n} \big) \notag\\
    &\qquad \qquad\ge \pr_{B,\alpha, (X, Y)}\big(\big\|\sin\Theta(\widehat B, B)\big\| \ge c\sqrt{\varepsilon}/2, \alpha\in\Omega_{\lambda_1, \lambda_k}^{n, \cdots, n} \big) \notag\\
    &\qquad \qquad \ge \pr_{B,\alpha, (X, Y)}\big(\big\|\sin\Theta(\widehat B, B)\big\| \ge c\sqrt{\varepsilon}/2\big) + \pr\big( \alpha\in\Omega_{\lambda_1, \lambda_k}^{n, \cdots, n} \big) - 1,
\#
where the inequalities hold since for any events $\cE$ and $\cA$, we have $\pr(\cE\given\cA)=\pr(\cE\cap\cA)/\pr(\cA) \ge \pr(\cE\cap\cA)$ and $\pr(\cE\cap\cA) \ge \pr(\cE) + \pr(\cA) - 1$. Lemma \ref{lem:bound-a} gives $ \pr\big( \alpha\in\Omega_{\lambda_1, \lambda_k}^{n, \cdots, n} \big) - 1 \ge -1/4$. Thus, it remains to lower bound the first term of~\eqref{eq:fano-step1}. 

\paragraph{Step 3: Applying Fano’s inequality.}
Fano's inequality yields that
\#\label{eq:fano}
    \pr_{B,\alpha, (X, Y)}\big(\big\|\sin\Theta(\widehat B, B)\big\| \ge c\sqrt{\varepsilon}/2\big) \ge 1 - \frac{I(B; (X, Y)) + \log 2}{\log K}.
\#
\paragraph{Step 4: Bounding the mutual information.}
Since $(x_1, y_1), \cdots, (x_M, y_M)$ are independent, we have $
I(B; (X, Y)) \le \sum_{i=1}^M I(B; (x_i, y_i))$.
It remains to upper bound $I\big(B; (x_i, y_i)\big)$ for a fixed $i$. 
 We define $Q_{\cdot\given x_i} = N(0, \Sigma_{Q_i})$ with $\Sigma_{Q_i} = \lambda_k  x_i^\intercal \begin{pmatrix}
    I_k & 0\\
    0 & 0
\end{pmatrix}_{d\times d} x_i + I_n$.
Since $B$ and $x_i$ are independent, we can bound $I(B; (x_i, y_i))$ as follows,
\#\label{eq:mI-B-xi}
I\big(B; (x_i, y_i)\big) & = \E_B \E_{x_i}\big[\kl{P_{y_i\given x_i, B}}{P_{y_i\given x_i}}\big] \notag\\
& = \E_B \E_{x_i}\big[\kl{P_{y_i\given x_i, B}}{Q_{\cdot\given x_i}}\big] - \kl{P_{y_i\given x_i}}{Q_{\cdot\given x_i}} \notag\\
&\le \E_B \E_{x_i}\big[\kl{P_{y_i\given x_i, B}}{Q_{\cdot\given x_i}}\big] \notag\\
&\le \max_{B_r\in \cB}\E_{x_i}\big[\kl{P_{y_i\given x_i, B_r}}{Q_{\cdot\given x_i}}\big].
\#
We now compute $\E_{x_i}[\kl{P_{y_i\given x_i, B_r}}{Q_{\cdot\given x_i}}]$ for a fixed $B_r$. The KL-divergence between multivariate Gaussian distributions yields that
\$
\E_{x_i}\big[\kl{P_{y_i\given x_i, B_r}}{Q_{\cdot\given x_i}}\big] = \frac{1}{2}\E_{x_i}\Big[\log\frac{|\Sigma_{Q_i}|}{|\Sigma_{ir}|} + \tr\big(\Sigma_{Q_i}^{-1}\Sigma_{ir} - I_n \big) \Big].
\$
The non-negativity of the KL divergence: 
$
2\kl{Q_{\cdot\given x_i}}{P_{y_i\given x_i, B_r}} = \log\frac{|\Sigma_{ir}|}{|\Sigma_{Q_i}|} + \tr\big(\Sigma_{ir}^{-1}\Sigma_{Q_i} - I_n \big)\ge 0,
$
gives 
$\log\frac{|\Sigma_{Q_i}|}{|\Sigma_{ir}|} \le \tr\big(\Sigma_{ir}^{-1}\Sigma_{Q_i} - I_n \big).
$ Thus, we can bound the above equation as
\$
\E_{x_i}\big[\kl{P_{y_i\given x_i, B_r}}{Q_{\cdot\given x_i}}\big] \le \frac{1}{2}\E_{x_i}\Big[\tr\big(\Sigma_{ir}^{-1}\Sigma_{Q_i} + \Sigma_{Q_i}^{-1}\Sigma_{ir} - 2I_n \big) \Big].
\$
Let $\Delta = \Sigma_{ir} - \Sigma_{Q_i}$. Since $\Sigma_{ir} \succeq I_n$ and $\Sigma_{Q_i} \succeq I_n$, we have
\$
& \tr\big(\Sigma_{ir}^{-1}\Sigma_{Q_i} + \Sigma_{Q_i}^{-1}\Sigma_{ir} - 2I_n \big) = \tr\Big(\Sigma_{ir}^{-1}\big(\Sigma_{Q_i} + \Sigma_{ir}\Sigma_{Q_i}^{-1}\Sigma_{ir} - 2\Sigma_{ir} \big) \Big) \\
&\qquad \le \tr\big(\Sigma_{Q_i} + \Sigma_{ir}\Sigma_{Q_i}^{-1}\Sigma_{ir} - 2\Sigma_{ir} \big) = \tr\big(\Delta\Sigma_{Q_i}^{-1}\Delta \big) = \tr\big(\Sigma_{Q_i}^{-1}\Delta^2 \big) \le \tr(\Delta^2).
\$
Combining the above two equations, we have
\#\label{eq:bound-KL-sup}
\E_{x_i}\big[\kl{P_{y_i\given x_i, B_r}}{Q_{\cdot\given x_i}}\big] \le \frac{1}{2}\E_{x_i}\big[\tr(\Delta^2 )\big] = \frac{1}{2}\tr\big(\E_{x_i}(\Delta^2 )\big).
\#
Split $x_i = (a_i; u_i)$ with $a_i = (a_{i1}, \cdots, a_{in})\in\R^{k\times n}$ and $u_i = (u_{i1}, \cdots, u_{in})\in\R^{(d-k)\times n}$. Then $a_{ij}\sim N(0,I_k)$, $u_{ij}\sim N(0,I_{d-k})$, and all $a_{ij}$ and $u_{ij}$ are mutually independent. Note that
\$
\Delta & = \lambda_k \begin{pmatrix}
    a_i^\intercal & u_i^\intercal
\end{pmatrix}
\begin{pmatrix}
    -\varepsilon e_ke_k^\intercal & \sqrt{\varepsilon(1-\varepsilon)} e_k b_r^\intercal \\
    \sqrt{\varepsilon(1-\varepsilon)} b_re_k^\intercal & \varepsilon b_rb_r^\intercal \end{pmatrix}
\begin{pmatrix}
    a_i^\intercal \\
     u_i^\intercal
\end{pmatrix} \\
& = \lambda_k (-\varepsilon a_i^\intercal e_ke_k^\intercal a_i + \sqrt{\varepsilon(1-\varepsilon)}u_i^\intercal b_re_k^\intercal a_i + \sqrt{\varepsilon(1-\varepsilon)}a_i^\intercal e_kb_r^\intercal u_i + \varepsilon u_i^\intercal b_rb_r^\intercal u_i).
\$ 
Let $\widetilde{a}_i = a_i^\intercal e_k$ and $\widetilde{u}_i = u_i^\intercal b_r$; thus, $\widetilde{a}_i, \widetilde{u}_i\sim N(0, I_n)$. By the symmetry and independence, 
\#\label{eq:lb-sup-exp}
\E_{x_i}(\Delta^2) & = 2 \lambda_k^2 \E_{x_i}\big[\varepsilon^2\widetilde{a}_i\widetilde{a}_i^\intercal\widetilde{a}_i\widetilde{a}_i^\intercal - \varepsilon^2 \widetilde{a}_i\widetilde{a}_i^\intercal\widetilde{u}_i\widetilde{u}_i^\intercal + (1-\varepsilon)\varepsilon \widetilde{u}_i\widetilde{a}_i^\intercal\widetilde{u}_i\widetilde{a}_i^\intercal + (1-\varepsilon)\varepsilon \widetilde{u}_i\widetilde{a}_i^\intercal\widetilde{a}_i\widetilde{u}_i^\intercal \big].
\#
By the linearity, we compute the trace of each term above. Since $\widetilde{a}_i^\intercal\widetilde{a}_i\sim \chi^2(n)$, we have
\$
& \tr\big(\E_{x_i}\big(\widetilde{a}_i\widetilde{a}_i^\intercal\widetilde{a}_i\widetilde{a}_i^\intercal\big)\big) = \E_{a_i}\big[(\widetilde{a}_i^\intercal\widetilde{a}_i)^2\big] = \var\big(\widetilde{a}_i^\intercal\widetilde{a}_i\big) + \big(\E_{a_i}\big[\widetilde{a}_i^\intercal\widetilde{a}_i\big]\big)^2 = n^2 + 2n; \\
&\tr\big(\E_{x_i}\big(\widetilde{a}_i\widetilde{a}_i^\intercal\widetilde{u}_i\widetilde{u}_i^\intercal\big)\big) = \tr\big([\E_{a_i}(\widetilde{a}_i\widetilde{a}_i^\intercal)]\cdot[\E_{u_i}(\widetilde{u}_i\widetilde{u}_i^\intercal)]\big) = \tr(I_n) = n; \\
& \tr\big(\E_{x_i}\big(\widetilde{u}_i\widetilde{a}_i^\intercal\widetilde{u}_i\widetilde{a}_i^\intercal\big)\big) = \tr\big(\E_{a_i}\big(\E_{u_i}\big(\widetilde{a}_i^\intercal\widetilde{u}_i\widetilde{a}_i^\intercal\widetilde{u}_i\biggiven \widetilde a_i\big)\big)\big) \\
& \qquad \qquad \qquad \qquad  = \tr\big(\E_{a_i}\big(\var_{u_i}\big(\widetilde{a}_i^\intercal\widetilde{u}_i\biggiven \widetilde a_i\big)\big)\big) = \tr\big(\E_{\widetilde{a}_i}\big(\widetilde{a}_i^\intercal\widetilde{a}_i\big)\big) = n; \\
& \tr\big(\E_{x_i}\big(\widetilde{u}_i\widetilde{a}_i^\intercal\widetilde{a}_i\widetilde{u}_i^\intercal\big)\big) = \tr\big(\E_{u_i}\big(\E_{a_i}\big(\widetilde{u}_i\widetilde{a}_i^\intercal\widetilde{a}_i\widetilde{u}_i^\intercal\biggiven \widetilde u_i\big)\big)\big) = n \tr\big(\E_{u_i}\big(\widetilde{u}_i\widetilde{u}_i^\intercal\big)\big) =  n \tr(I_n) = n^2.
\$
Substituting the above four terms into \eqref{eq:lb-sup-exp}, we have
\$
\tr\big(\E_{x_i}(\Delta^2 )\big) = 2 \lambda_k^2\big[\varepsilon^2 (n^2 + 2n) - \varepsilon^2n + (1-\varepsilon)\varepsilon n + (1-\varepsilon)\varepsilon n^2\big] = 2  \varepsilon \lambda_k^2 (n^2 + n).
\$
Combining the above with \eqref{eq:mI-B-xi} and \eqref{eq:bound-KL-sup}, we have
\$
I\big(B; (x_i, y_i)\big) \le \varepsilon \lambda_k^2 (n^2 + n).
\$
Recall that $N = Mn$. When $\varepsilon>0$ satisfies
\$
    \varepsilon = \frac{c_1d}{5M\lambda_k^2(n^2 + n)}\wedge 1  = \Theta\Big(\frac{Md}{N^2\lambda_k^2}\wedge 1 \Big),
\$
we bound the mutual information by 
$ I(B; (X, Y)) \le \sum_{i=1}^M I(B; (x_i, y_i)) \le \varepsilon M \lambda_k^2 (n^2 + n) = c_1d/5$.
Recall that $\log K\ge c_1d$. Thus, \eqref{eq:fano} yields that 
\#\label{eq:xxx}
\pr_{B,\alpha, X}(\|\sin\Theta(\widehat B, B)\| \ge c\sqrt{\varepsilon}/2)\ge 3/4.
\# Finally, substituting Lemma \ref{lem:bound-a} and \eqref{eq:xxx} into \eqref{eq:fano-step1}, we conclude the proof. 
\end{proof}

\section{Numerical Experiments}\label{sect:exp}

We complement the theory with numerical studies to highlight the strength of our estimator and the benefit of learning good representations. Beyond the estimator based on $Z$ in \eqref{eq:estimator} with two independent local averages, we consider extensions using multiple averages. Given a splitting strategy $\vec{g} = (g_1, \cdots, g_M)^\intercal$, client $i$ with $n_i$ samples divides its local data into $g_i \le n_i$ groups $G_{i1}, \cdots G_{ig_i}$, each of size $\floor{n_i/g_i}$. For the $r$-th group, we define the local average $\overline z_{ir} = \sum_{j\in G_{ir}} x_{ij}y_{ij} / \sqrt{\floor{n_i/g_i}}$. Using these local averages, we construct the matrix
\#\label{eq:Z-g}
Z_{g} = \sum_{i=1}^M \frac{1}{\sqrt{g_i(g_i-1)}} \sum_{r\neq s\in [g_i]} \overline z_{ir} \overline z_{is}^\intercal, 
\#
and define the estimator $\widehat B_{g}$ as 
the top-$k$ eigenvectors of $Z_{g}$. The scaling factor 
$1/\sqrt{g_i(g_i-1)}$ normalizes the sum over $g_i(g_i-1)$ cross-group terms $\overline z_{ir} \overline z_{is}^\intercal$. When $g_i \equiv 2$, $Z_{g}$ reduces to the symmetric version of \eqref{eq:estimator}, $(Z+Z^\intercal)/2\sqrt{2}$, and when  $g_i \equiv n_i$, it yields new weights $w_i = \sqrt{n_i(n_i-1)}$ for \cite{duchi2022subspace} in \eqref{eq:z2}. As our experiments show, these new weights help handle unequal data partitions. The $Z_g$ framework also captures the privacy level of each client: smaller $g_i$ implies less data leakage and hence stronger privacy protection.

\subsection{Synthetic Data}
We consider an experimental setup in which $x_{ij}\sim N(0,\Gamma_i)$ has non-standard Gaussian covariates $\Gamma_i$,\footnote{We generate each covariance matrix $\Gamma_i$ by first sampling a random matrix $A_i$ with i.i.d. entries uniformly on $[0,1)$, forming $\Gamma_i = (A_i + A_i^\intercal)/2 + 6I_d$, and normalizing its trace so the average eigenvalue is one.} and the client sample sizes $n_i$ are unequal, sampled uniformly from $[2, 118]$. We set $d=120$ and $k=10$, and present results averaged over $10$ repetitions. Fixing parameters $B^\star$, $\alpha_i^\star\sim N(0, I_k/k)$, $\Gamma_i$, and $n_i$, each repetition generates new data $\{x_{ij}, y_{ij}\}$ with $y_{ij} = x_{ij}^\intercal  \Gamma_i^{-1} B^\star\alpha_i^\star + \epsilon_{ij}$ and $\epsilon_{ij}\sim N(0,1)$. We compare the performance of estimators $Z_g$ in \eqref{eq:Z-g} (with varying $g$), MoM \citep{tripuraneni2021provable}, and the estimator (DFHT) from \cite{duchi2022subspace}.



\begin{figure}[ht]
        \centering
        \begin{minipage}[b]{0.45\textwidth}
\includegraphics[width=\textwidth]{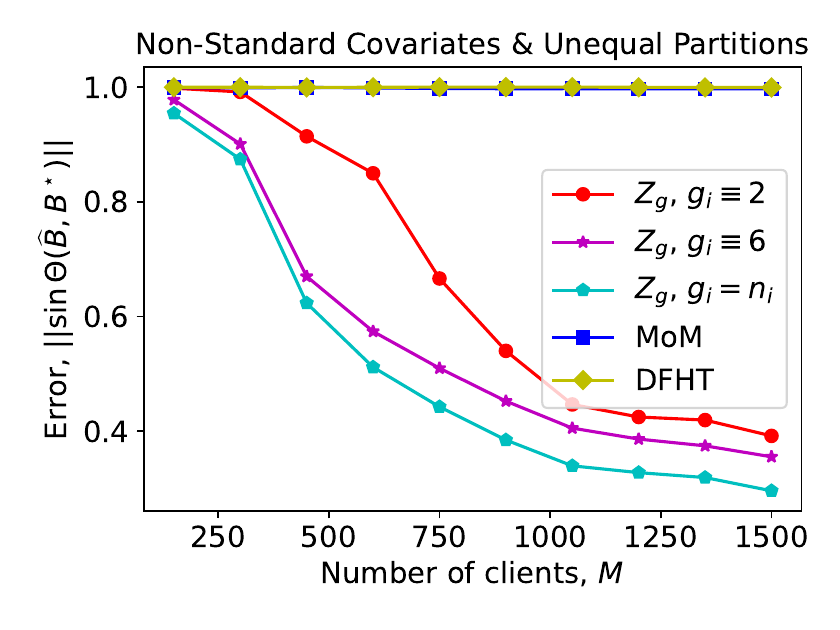}
        \caption{\small Subspace estimation error.}
        \label{fig:fig4}
    \end{minipage}
 \hfill
    \begin{minipage}[b]{0.45\textwidth}
        \centering
\includegraphics[width=\textwidth]{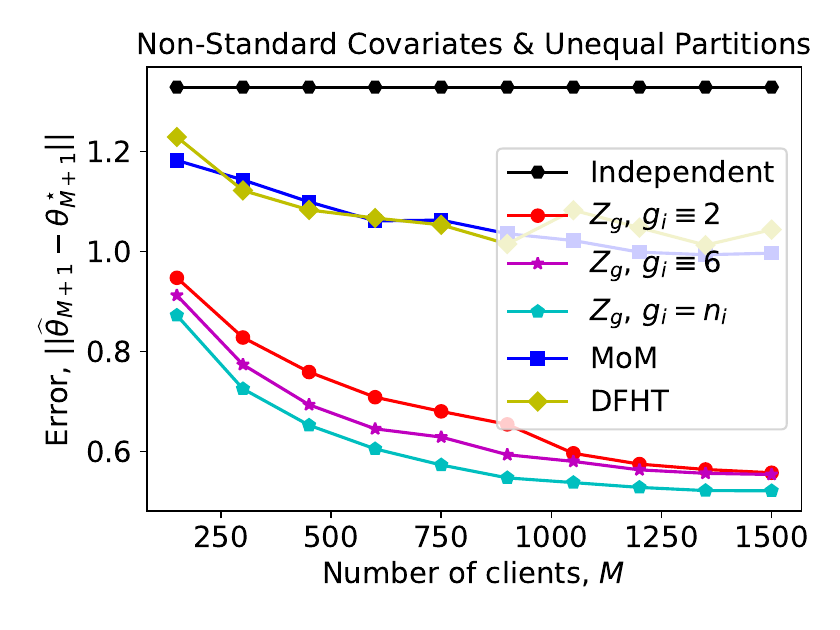}
        \caption{\small Estimation error at a new client.}
        \label{fig:fig6}
    \end{minipage}
\end{figure}


In Figure \ref{fig:fig4}, the y-axis shows the estimation error of $\widehat B$, and the x-axis is the number of clients $M$. 
As 
$M$ increases, the total number of samples grows, and consequently the subspace estimation error decreases for efficient estimators such as $Z_g$.
 Although both MoM and DFHT perform poorly, our estimator $Z_g$ remains robust and effective even in this heterogeneous setting with non-standard covariates and unequal local data sizes.
Moreover, there is a privacy-efficiency tradeoff for $Z_g$, where increasing $g_i$ improves model performance at the cost of greater data leakage. 
Fortunately, in practice, a small $g_i$ suffices to achieve performance comparable to transmitting all $n_i$ local data. For example, 
with a mean local dataset size of $60$, transmitting $g_i \equiv 6$ or even $g_i\equiv 2$ groups of local averages is enough.

Next, we learn 
$\theta_{M+1}^\star$ for a new client with $\Gamma_{M+1}=I_d$ and $n_{M+1} = 60$. 
Using $\widehat B$ learned from clients $1$-$M$, we estimate $\widehat\alpha_{M+1}$ 
via \eqref{eq:alpha-M1} and take
$\widehat\theta_{M+1} = \widehat B\widehat\alpha_{M+1}$. 
Figure \ref{fig:fig6} plots  $\|\widehat\theta_{M+1} - \theta_{M+1}^\star\|$ versus the number of clients $M$ used to learn $\widehat B$. We also compare with the baseline ``Independent'' that runs linear regression only on client $(M+1)$'s local data. Notably, well-estimated shared representations can provide better results than independent learning since the local data size is too small. While MoM and DFHT perform poorly, our estimators consistently outperform them and remain robust and effective.

\subsection{Risk Prediction in Healthcare}
We apply our method to a real-world dataset for early diabetes detection. Healthcare data is high-dimensional, and individual hospitals often lack sufficient local samples to train reliable models. Learning shared representations across hospitals can thus improve prediction. 

The original dataset consists of $9947$ 
medical records from $379$ hospitals. We use data from the $M = 103$ hospitals with at least $20$ local samples. 
Each record includes a response $y_{ij}\in\{0,1\}$ indicating diabetes diagnosis for patient $j$ at hospital $i$, 
and $d=185$ features after preprocessing. 
Although the ground-truth parameters are unavailable, Figure \ref{fig:fig7} plots the logarithm of the eigenvalues of the $d \times M$ matrix $[\widehat{\theta}_1, \ldots, \widehat{\theta}_M]$ as a proxy, where $\widehat{\theta}_i$ is the locally trained linear regression coefficients for 
hospital $i$. The rapid decay of eigenvalues around $20$ indicates that these parameters mainly lie in a shared low-dimensional subspace. 

 To identify the best linear classifier, 
 we 
 evaluate several methods using out-of-sample AUC at two hospitals: a large one ($n_A = 206$) and a small one ($n_B = 95$). 
Beyond MoM, DFHT, and $Z_g$, which are designed for the subspace model \eqref{eq:model-sup}, we implement additional methods for comparison. ``Global'' is trained using data from all hospitals; ``Local'' is trained only on the local data; ``RME'' \citep{xu2025multitask} is designed for a sparse heterogeneity model, which first computes a trimmed mean of locally trained parameters by removing the top and bottom $\omega$ quantiles, and then applies
LASSO locally to debias the estimate; and ``ARMUL'' \citep{duan2023adaptive} is an alternating minimization method with default parameters for a low-rank robust multi-task learning model, where clients share similar but non-identical subspaces. 
We plot the averaged results over $100$ trials, along with the confidence intervals. In each trial, we randomly split the datasets at each hospital into training ($0.8$) and testing ($0.2$) sets. 
   
\begin{figure}[ht]
    \centering
    
    \begin{minipage}[b]{0.32\textwidth}
        \centering
        \includegraphics[width=\textwidth]{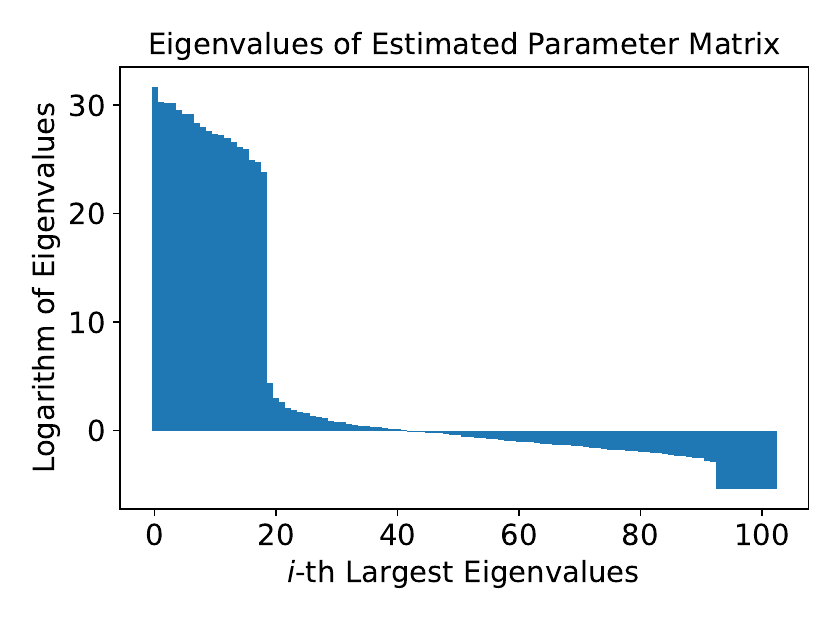}
        \caption{\small Subspace structure}
        \label{fig:fig7}
    \end{minipage}
    \hfill
    \begin{minipage}[b]{0.33\textwidth}
        \centering
        \includegraphics[width=\textwidth]{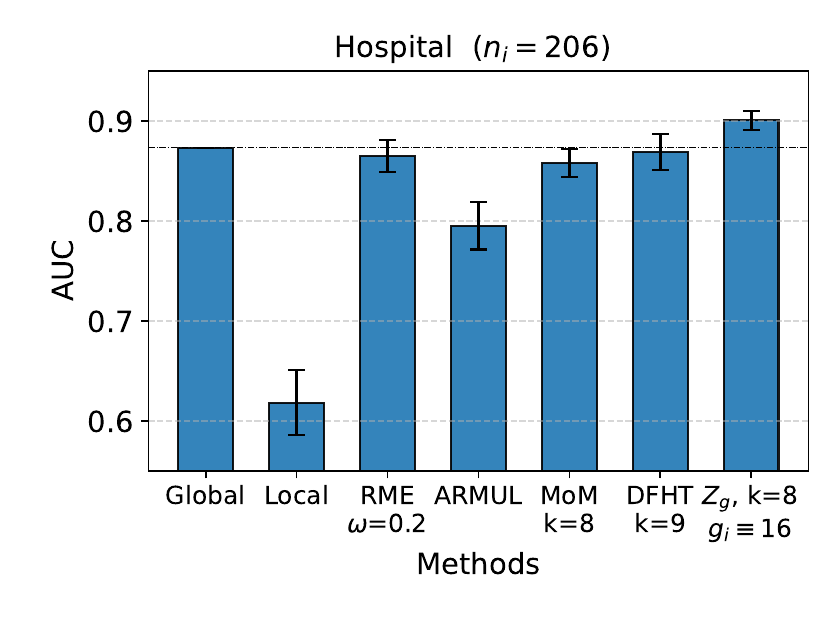}
        \caption{\small Large hospital}
        \label{fig:fig8}
    \end{minipage}
    \hfill
    \begin{minipage}[b]{0.33\textwidth}
        \centering
\includegraphics[width=\textwidth]{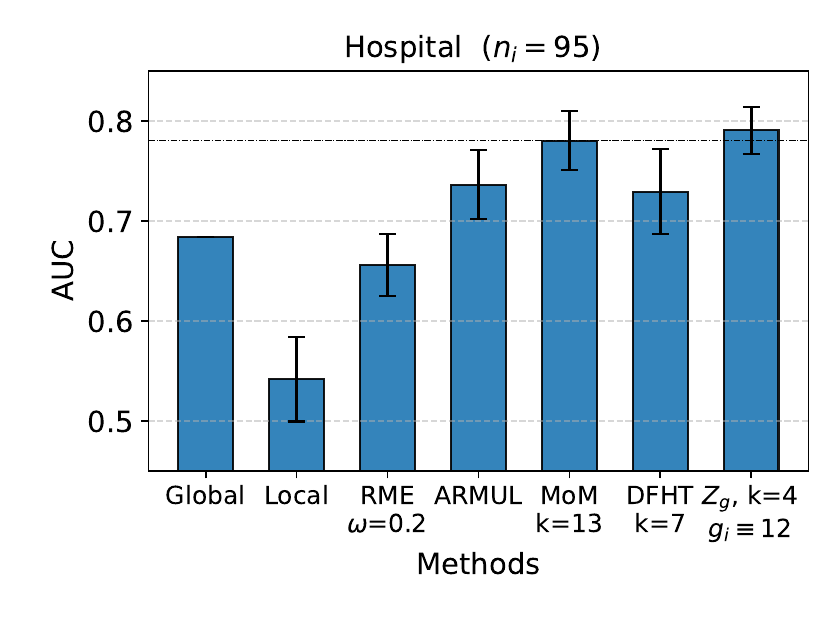}
        \caption{\small Small hospital}
        \label{fig:fig9}
    \end{minipage}
\end{figure}

Figures \ref{fig:fig8} and \ref{fig:fig9} show that our estimator $Z_g$ 
achieves the best performance at both hospitals. 
Local models perform the worst because the local datasets are too small: only around $206\times 0.8$ and $95\times 0.8$ training samples at the two hospitals.
At the large hospital, $Z_g$ improves upon the second-best method, Global, by $3.2\%$; at the small hospital, $Z_g$ improves upon the second-best method, MoM, by $1.3\%$.
Overall, our estimator $Z_g$ outperforms all others at both hospitals, highlighting its effectiveness. Moreover, it preserves patient privacy by transmitting only local averages to the server, which is particularly important in healthcare applications.

\section{Conclusion and Future Directions} \label{sect:conclusion}
We propose a spectral estimator with independent replicas of local averages, establish an improved error upper bound for learning shared representations under heterogeneous data partitions and covariate or model shifts, and extend the results to non-linear models.
We also tighten the minimax lower bound, identifying the optimal rate in well-represented cases. 

A future direction is to extend our analysis to more general scenarios where clients share similar but not identical low-dimensional representations. For example, can we obtain similarly tight results for the models in \cite{chua2021fine,tian2023learning,duan2023adaptive}? Moreover, 
it would be interesting to explore effective shared representation-learning methods for deep architectures, like transformers. 

\section*{Acknowledgments}
J. Xu is deeply grateful to Laurent Massouli\'{e} for highlighting the gaps between the existing upper and lower bounds in estimating the shared linear representation  $B^\star$, and for the insightful discussions on how to address these gaps. X. Niu and J. Xu are supported in part by an NSF CAREER award CCF-2144593. L. Su is supported in part by NSF CAREER award CCF-2340482. 
P. Yang is supported in part by an NSFC grant 12101353.

\bibliographystyle{apalike} 
\bibliography{reference}  

\begin{appendix}
\section{Proofs of the Lemmas in Section \ref{sec:proof-up}}
We present the proofs of Lemmas \ref{lem:main-sup}-\ref{lem:noise-sup} here.
\subsection{Auxiliary Results}
We begin with several auxiliary results, 
including the truncated matrix Bernstein inequality as follows, and properties of the random vectors $a_i$, $u_i$, $b_i$, and $v_i$.
\begin{theorem}[\citep{chen2021spectral}]\label{cor:bernstein}
Let $Z_1, \cdots, Z_M\in\R^{d_1\times d_2}$ be independent random matrices. Suppose there exist positive constants $\beta$, $q$, and $\delta \le 1$, such that for all $i\in[M]$,
\$
& \pr\big(\|Z_i - \E Z_i \| \ge \beta \big) \le \delta, \quad \big\|\E Z_i - \E[Z_i\mathds{1}\{\|Z_i\|<\beta\}] \big\| \le q.
\$
In addition, let $v$ be the matrix variance statistic defined as
\$
v = \max\Big\{\Big\|\sum_{i=1}^M \big(\E [Z_i Z_i^\intercal]- (\E Z_i)(\E Z_i^\intercal)\big)\Big\|, \Big\|\sum_{i=1}^M \big(\E [Z_i^\intercal Z_i]- (\E Z_i^\intercal)(\E Z_i)\big)\Big\|\Big\}.
\$
Set $d=\max\{d_1, d_2\}$. For any $c\ge2$, with probability at least $1-2d^{-c+1} - M\delta$, we have
\$
\Big\|\sum_{i=1}^M \big(Z_i - \E Z_i\big)\Big\| \le \sqrt{2cv\log d} + \frac{2c}{3}\beta\log d + Mq.
\$
\end{theorem}

Recall that the error terms in \eqref{eq:error_decomp} involve matrices formed by the vectors $a_i$, $b_i$, $u_i$, $v_i$, where $a_i = 2/\sqrt{n_i} \cdot \sum_{j=1}^{n_i/2} \zeta_{ij} x_{ij}$, $b_i = 2/\sqrt{n_i} \cdot \sum_{j=1}^{n_i/2} \xi_{ij} x_{ij}$, and where $a_i$ and $u_i$ are identically distributed, as are $b_i$ and $v_i$. We next record several key properties of these vectors.
The following lemma will help identify an appropriate truncation level of the random vector $a_i$.

\begin{lemma}\label{lem:sub-exp-bernstein}
Let $\{\zeta_j\}_{j\in[n]}\subset\R$ and $\{x_j\}_{j\in[n]}\subset\R^d$ be sequences of sub-gaussian random variables and vectors, respectively, with constant variance proxies. 
Assume the pairs $(\zeta_j, x_j)$ are mutually independent across $j$, while $\zeta_j$ and $x_j$ may be dependent.
Define the normalized sum $\eta = \sum_{j=1}^{n} \zeta_j x_j/\sqrt{n}$. Then there exist constants $c_1,c_2>0$ such that for any $t>0$,
\$
\pr\big(\|\eta - \E \eta\| \ge t\big)\le 2d\exp\big(- \min\big\{c_1 t^2/d, c_2 t\sqrt{n/d}\big\}\big).
\$
\end{lemma}
\begin{proof}
    Let $x_{j}^{(r)}$ denote the $r$-th entry of vector $x_{j}$, and define $\eta^{(r)} = \sum_{j=1}^{n}\zeta_{j} x_{j}^{(r)}/\sqrt{n}$ as the $r$-th entry of $\eta$. Then we have $\|\eta-\E \eta\|^2 = \sum_{r=1}^d(\eta^{(r)}-\E \eta^{(r)})^2$ and thus 
    \$
    \pr\big(\|\eta - \E \eta\| \ge t\big) = \pr\Big(\sum_{r=1}^d\big(\eta^{(r)} - \E \eta^{(r)}\big)^2 \ge t^2\Big) \le \sum_{r=1}^d\pr\Big(\big|\eta^{(r)} - \E \eta^{(r)}\big| \ge t/\sqrt{d}\Big). 
    \$
    For a fixed $r$, the product of sub-gaussians, $\zeta_{j}x_{j}^{(r)}$, is sub-exponential; $\{\zeta_{j}x_{j}^{(r)} - \E\zeta_{j}x_{j}^{(r)}\}_{j\in[n]}$ is a sequence of independent, mean zero, sub-exponential random variables. Thus, Bernstein's inequality~\cite
    {vershynin2018high} yields the existence of constants $c_1, c_2>0$ such that 
    \$
    \pr\Big(\big|\eta^{(r)} - \E \eta^{(r)}\big| \ge t/\sqrt{d}\Big) &= \pr\bigg(\Big |\frac{1}{\sqrt{n}}\sum_{j=1}^{n} \Big(\zeta_{j} x_{j}^{(r)} - \E\zeta_{j} x_{j}^{(r)}\Big)\Big | \ge t/\sqrt{d}\bigg) \notag\\
    & \le 2\exp\big(- \min\big\{c_1 t^2/d, c_2 t\sqrt{n/d}\big\}\big).
    \$
    We conclude the proof by combining the above two equations.
\end{proof}

We now show the sub-gaussian property of $\xi_{ij}$, which enables us to similarly bound the truncation level of $b_i$. 
\begin{lemma}\label{lem:sub-gaussian-xi}
Under Assumption \ref{ass:general}, 
$\xi_{ij}$ is sub-gaussian with a constant variance proxy.
\end{lemma}
\begin{proof}
Let constant $c>0$ be the variance proxy of $\xi_{ij} \given x_{ij}$.
Since $\xi_{ij} \given x_{ij} \sim \text{subG}(c)$ and $\E[\xi_{ij} \given x_{ij}] = 0$, we have for any $t\in\R$, $
\E[\exp(t\xi_{ij})\given x_{ij}] = \E[\exp(t(\xi_{ij}-\E[\xi_{ij} \given x_{ij}])) \given x_{ij}] \le \exp(ct^2/2)$.
Then we have $\E\xi_{ij} = \E[\E[\xi_{ij}\given x_{ij}]] = 0$; thus the sub-gaussian property of $\xi_{ij}$ is shown as 
$
\E[\exp(t(\xi_{ij} - \E\xi_{ij}))] = \E[\exp(t\xi_{ij})] =\E[\E[\exp(t\xi_{ij})\given x_{ij}]] \le \exp(ct^2/2)$.
\end{proof}

The following lemma assists in bounding the mean shift after truncation.
\begin{lemma}\label{lem:bound-mean-shift} Let $p_i,q_i\in\R^d$ be independent random vectors and $Z_i=p_iq_i^\intercal$. For any $\beta>0$,
\$
\big\|\E Z_i - \E[Z_i\mathds{1}\{\|Z_i\|<\beta\}] \big\| \le \sqrt{\E\big[\|p_i\|^2\big]\E\big[\|q_i\|^2\big]\pr\big(\|Z_i\|\ge\beta\big)}.
\$
\end{lemma}
\begin{proof}
We bound the mean shift after truncation as follows,
\$
\big\|\E Z_i - \E[Z_i\mathds{1}\{\|Z_i\|<\beta\}] \big\| & 
\le \E\big[\|Z_i\|\mathds{1}\{\|Z_i\|\ge\beta\}\big] \le \sqrt{\E\big[\|Z_i\|^2\big]\E\big[\mathds{1}\{\|Z_i\|\ge\beta\}\big]},
\$
where Cauchy--Schwarz inequality gives the last inequality. Since $a_i$ and $b_i$ are independent, 
$\E[\|Z_i\|^2] = \E[\|p_i\|^2\|q_i\|^2] = \E[\|p_i\|^2]\E[\|q_i\|^2]$. Also note $\E[\mathds{1}\{\|Z_i\|\ge\beta\}]=\pr(\|Z_i\| \ge \beta)$. 
\end{proof}

Finally, the next two lemmas bound the variance of random vectors $a_i$ and $b_i$.
\begin{lemma}\label{lem:var-1}
Under Assumption \ref{ass:general}, for all $i$, the vector $a_i = 2/\sqrt{n_i}\cdot\sum_{j=1}^{n_i/2}\zeta_{ij} x_{ij}$ satisfies 
\$
\E a_i = \sqrt{n_i}B^\star\alpha_i^\star, \quad \|\E a_i\| \le \sqrt{N\lambda_1}, \quad \E\big[\|a_i\|^2\big] = O(d) + n_i\|\alpha_i^\star\|^2.
\$
In addition, for any $s\in\mathbb{S}^{d-1}$, we have
$
s^\intercal \E[a_i a_i^\intercal]s = O(1) + n_i s^\intercal B^\star\alpha_i^\star(\alpha_i^\star)^\intercal (B^\star)^\intercal s$.
\end{lemma}
\begin{proof} We first note that $\E a_i = \sqrt{n_i}B^\star\alpha_i^\star$, and $\E a_i\E a_i^\intercal = n_i B^\star\alpha_i^\star(\alpha_i^\star)^\intercal (B^\star)^\intercal$.
For any $s\in\mathbb{S}^{d-1}$, we have $\|(B^\star)^\intercal s\|\le 1$ and thus
\$
s^\intercal \E a_i\E a_i^\intercal s = n_i s^\intercal B^\star\alpha_i^\star(\alpha_i^\star)^\intercal (B^\star)^\intercal s \le s^\intercal B^\star \Big(\sum_{i=1}^Mn_i\alpha_i^\star(\alpha_i^\star)^\intercal\Big) (B^\star)^\intercal s \le N\lambda_1\|(B^\star)^\intercal s\| \le N\lambda_1.
\$    
Then we have $\|\E a_i\E a_i^\intercal\| \le N\lambda_1$, which bounds the vector norm $
\|\E a_i\| = \sqrt{\|\E a_i\E a_i^\intercal\|} \le \sqrt{N\lambda_1}$. Next, we compute $\E[\|a_i\|^2]$ using its definition,
    \#\label{eq:var-ai-non}
    \E\big[\|a_i\|^2\big] & = \E\Big[\Big(\frac{2}{\sqrt{n_i}}\sum_{j=1}^{n_i/2} \zeta_{ij}x_{ij}^\intercal\Big)\Big(\frac{2}{\sqrt{n_i}}\sum_{j=1}^{n_i/2}\zeta_{ij} x_{ij}\Big)\Big] \notag\\
    & = \frac{4}{n_i}\Big[ \E\Big(\sum_{j=1}^{n_i/2} \zeta_{ij}^2 \|x_{ij}\|^2\Big) +  \E\Big(\sum_{j=1}^{n_i/2} \sum_{r\neq j} \zeta_{ij} x_{ij}^\intercal \cdot \zeta_{ir} x_{ir}\Big)\Big] \notag\\
    & \le  \frac{4}{n_i}\sum_{j=1}^{n_i/2}\E\big[\zeta_{ij}^2 \|x_{ij}\|^2\big] +  \frac{4}{n_i}\sum_{j=1}^{n_i/2} \sum_{r\neq j}\big\|\E\big[\zeta_{ij}x_{ij}\big] \big\|\cdot\big\|\E\big[\zeta_{ir}x_{ir}\big] \big\|. 
    \#
For the first term, by Cauchy--Schwarz inequality, $
\E[\zeta_{ij}^2 \|x_{ij}\|^2] \le \sqrt{\E[\zeta_{ij}^4] \E[\|x_{ij}\|^4]}$ for all $j$.
Since the moments of sub-gaussian variables are bounded by constants relying on the variance proxy, we have $\E[\zeta_{ij}^4] = O(\|\alpha_i^\star\|^4) =O(1)$.
Given bounded moments $\E[(x_{ij}^{(s)})^4]$, we have $\E[\|x_{ij}\|^4] = \E[(\sum_{s=1}^d(x_{i1}^{(s)})^2)^2] \le d \sum_{s=1}^d\E[(x_{ij}^{(s)})^4] = O(d^2)$. Thus, the first term in~\eqref{eq:var-ai-non} is of order, $
{4}/{n_i}\cdot \sum_{j=1}^{n_i/2}\E[\zeta_{ij}^2 \|x_{ij}\|^2] = O(d)$. 
In addition, each summand in the second term satisfies $\|\E[\zeta_{ij}x_{ij}] \| = \|\E[\zeta_{ir}x_{ir}] \|= \|B^\star \alpha_i^\star\| = \|\alpha_i^\star\|$ since $B\in \cO^{d\times k}$. As a summary, it follows from \eqref{eq:var-ai-non} that,
\$
\E\big[\|a_i\|^2\big] = O(d) + n_i\|\alpha_i^\star\|^2.
\$

Similarly, we compute $\E[a_i a_i^\intercal]$ as follows,
\#\label{eq:matrix-var-ai-non}
\E[a_i a_i^\intercal] & = \E\Big[\Big(\frac{2}{\sqrt{n_i}}\sum_{j=1}^{n_i/2} \zeta_{ij} x_{ij}\Big)\Big(\frac{2}{\sqrt{n_i}}\sum_{j=1}^{n_i/2}\zeta_{ij} x_{ij}^\intercal\Big)\Big] \notag\\
    & = \frac{4}{n_i}\sum_{j=1}^{n_i/2} \E \big[\zeta_{ij}^2 x_{ij}x_{ij}^\intercal\big] +  \frac{4}{n_i}\sum_{j=1}^{n_i/2}\sum_{r\neq j}\E\big[ \zeta_{ij} x_{ij}\big] \cdot \E\big[\zeta_{ir} x_{ir}^\intercal\big]. 
\#
In the first term, for any $j$ and $s\in\mathbb{S}^{d-1}$, Cauchy--Schwarz inequality gives
$
s^\intercal \E[\zeta_{ij}^2 x_{ij}x_{ij}^\intercal] s = \E[\zeta_{ij}^2 (s^\intercal x_{ij})^2]\le \sqrt{\E[\zeta_{ij}^4] \E[(s^\intercal x_{ij})^4]} = O(1)$,
where $\zeta_{ij}$ and $s^\intercal x_{ij}$ are both sub-gaussian with constant variance proxies. Recall that $\E[\zeta_{ij}x_{ij}] = \E[\zeta_{ir}x_{ir}] = B^\star \alpha_i^\star$ in the second term.
Thus, by \eqref{eq:matrix-var-ai-non}, we have for any $s\in\mathbb{S}^{d-1}$ that
$
s^\intercal \E[a_i a_i^\intercal] s = O(1) + n_is^\intercal B^\star\alpha_i^\star(\alpha_i^\star)^\intercal (B^\star)^\intercal s
$.
\end{proof}

\begin{lemma}\label{lem:var-2}
Under Assumption \ref{ass:general}, for any $i$, the vector $b_i = 2/\sqrt{n_i}\cdot\sum_{j=1}^{n_i/2}\xi_{ij}x_{ij}$ satisfies 
\$
\E b_i = 0, \quad \E\big[\|b_i\|^2\big] = O(d), \quad \big\|\E[b_ib_i^\intercal]\big\|= O(1).
\$   
\end{lemma}
\begin{proof}
Given $\E[\xi_{ij}\given x_{ij}] = 0$, $\E[\xi_{ij} x_{ij}] = \E[\E[\xi_{ij}\given x_{ij}] x_{ij}] = 0$ for any $i,j$. By definition, 
$\E b_i = ({2}/{\sqrt{n_i}})\cdot\sum_{j=1}^{n_i/2}\E\big[\xi_{ij}x_{ij}\big] = 0$.
In addition, we have 
\$
    \E\big[\|b_i\|^2\big] &= \E\Big[\Big(\frac{2}{\sqrt{n_i}}\sum_{j=1}^{n_i/2}\xi_{ij} x_{ij}\Big)^\intercal\Big(\frac{2}{\sqrt{n_i}}\sum_{j=1}^{n_i/2}\xi_{ij} x_{ij}\Big)\Big] \notag\\
    & = \frac{4}{n_i} \sum_{j=1}^{n_i/2}\sum_{r=1}^{n_i/2} \E\big[\xi_{ij} x_{ij}^\intercal \cdot \xi_{ir} x_{ir}\big] = \frac{4}{n_i} \sum_{j=1}^{n_i/2}\E[\xi_{ij}^2\|x_{ij}\|^2] = \frac{4}{n_i} \sum_{j=1}^{n_i/2}\E\big[\E[\xi_{ij}^2\given x_{ij}]\cdot\|x_{ij}\|^2\big] \\
    & = O(1)\cdot \frac{4}{n_i} \sum_{j=1}^{n_i/2}\E\big[\|x_{ij}\|^2\big] = O(d),
\$
where the last two equalities hold since $\E[\xi_{ij}^2\given x_{ij}] = O(1)$ and $\E[\|x_{ij}\|^2] = \sum_{r=1}^d\E(x_{ij}^{(r)})^2 = O(d)$. Similarly, it is easy to show that $\|\E[b_ib_i^\intercal]\| = O(1)$ since for any $s\in\mathbb{S}^{d-1}$,
$$
    s^\intercal\E[b_ib_i^\intercal]s= \frac{4}{n_i} \sum_{j=1}^{n_i/2}\E\big[\E[\xi_{ij}^2\given x_{ij}]\cdot (s^\intercal x_{ij})^2\big] = O(1). \qedhere
$$
\end{proof}
\vspace{-10pt}
\subsection{Proofs of Lemmas \ref{lem:main-sup}-\ref{lem:noise-sup}}
We are now ready to prove Lemmas \ref{lem:main-sup}-\ref{lem:noise-sup}.
\medskip
\begin{proof}[Proof of Lemma \ref{lem:main-sup}]
Let $Z_i = a_iu_i^\intercal$. Note that $a_i$ and $u_i$ are independent and identically distributed, with properties outlined in Lemma \ref{lem:var-1}. We aim to bound $\|\sum_{i=1}^M (a_iu_i^\intercal -  n_i B^\star\alpha_i^\star(\alpha_i^\star)^\intercal (B^\star)^\intercal) \| = \|\sum_{i=1}^M (Z_i -  \E Z_i)\|$ by the truncated matrix Bernstein's inequality. Since $\|Z_i-  \E Z_i\|$ might be unbounded, we first identify a truncation level. By adding and subtracting terms, we have
\$
\big\|Z_i -  \E Z_i\big\| & = \big\|a_iu_i^\intercal -  \E a_i \E u_i^\intercal \big\| \\
&\le \big\|(a_i - \E a_i) (u_i - \E u_i)^\intercal \big\| + \big\|(a_i - \E  a_i) \E u_i^\intercal \big\| + \big\|\E  a_i (u_i -\E u_i)^\intercal \big\|.
\$
Since $a_i$ and $u_i$ share identical distributions, the above equation yields that for any $\beta>0$,
\$
\pr\big(\|Z_i -  \E Z_i\| \ge \beta \big) & \le \pr\big(\|(a_i - \E a_i) (u_i - \E u_i)^\intercal \| \ge \beta/3\big) + 2\pr\big(\|(a_i - \E  a_i) \E u_i^\intercal \| \ge \beta/3\big)\\
& \le 2\pr\big(\|a_i - \E a_i\| \ge \sqrt{\beta/3}\big) + 2\pr\big(\|\E u_i\| \cdot \|a_i - \E a_i\| \ge \beta/3\big) \\
& \le 2\pr\big(\|a_i - \E a_i\| \ge \sqrt{\beta/3}\big) + 2\pr\big(\|a_i - \E a_i\| \ge \beta/\big(3\sqrt{N\lambda_1}\big)\big),
\$
where the last inequality holds since $\|\E u_i\| \le \sqrt{N\lambda_1}$ by Lemma \ref{lem:var-1}.  Note that $a_i = 2/\sqrt{n_i}\cdot\sum_{j=1}^{n_i/2}\zeta_{ij}x_{ij}$, where $\zeta_{ij}$ is a sub-gaussian variable and $x_{ij}$ is a sub-gaussian vector. Thus, following from the above equation, we apply the bound from Lemma \ref{lem:sub-exp-bernstein} to $\|a_i-\E a_i\|$, and have the existence of constants $c_1,c_2,c_3,c_4$ such that for any $\beta > 0$, 
\$
    \pr\big(\|Z_i-  \E Z_i\| \ge \beta\big) &\le 4d\exp\big(- \min\big\{c_1 \beta/d, c_2 \sqrt{n_i\beta/d}\big\}\big) \\
    &\qquad + 4d\exp\big(- \min\big\{c_3 \beta^2/(Nd\lambda_1), c_4\beta\sqrt{n_i/(Nd\lambda_1)}\big\}\big).
\$
For any $\delta >0$, we take a large enough $C$ and set $\beta = C\max\{d\log^2(d/\delta ), \sqrt{Nd\lambda_1}\log(d/\delta )\}$.
Thus, using $n_i\ge 1$, we obtain from the above equation,
\#\label{main-error-truncate}
\pr\big(\|Z_i -  \E Z_i\| \ge \beta\big) \le \delta .
\#

Next, we bound the mean shift after truncation by Lemma \ref{lem:bound-mean-shift}. For the above $\beta$, we have
$
\|\E Z_i - \E[Z_i\mathds{1}\{\|Z_i\|<\beta\}] \big\| \le \sqrt{\E[\|a_i\|^2]\E[\|u_i\|^2]}\sqrt{\delta }$.
Lemma \ref{lem:var-1} provides the bound that $\E[\|a_i\|^2]=\E[\|u_i\|^2]=O(d+N)$. Thus, we have
\#\label{main-error-q}
\big\|\E Z_i - \E[Z_i\mathds{1}\{\|Z_i\|<\beta\}] \big\| =  O\big((d+N)\sqrt{\delta}\big)\coloneqq q.
\#

It remains to determine the variance statistic. First, note that
$
\E[Z_iZ_i^\intercal] = \E[a_iu_i^\intercal u_ia_i^\intercal] = \E[\|u_i\|^2]\E[a_ia_i^\intercal]$, and $\E Z_i = n_i B^\star\alpha_i^\star(\alpha_i^\star)^\intercal (B^\star)^\intercal$. Lemma \ref{lem:var-1} yields that, for any $s\in \mathbb{S}^{d-1}$,
\$
& s^\intercal \big(\E[Z_iZ_i^\intercal] - (\E Z_i)(\E Z_i^\intercal) \big) s \\
&\quad = \E[\|u_i\|^2]\cdot s^\intercal\E[a_ia_i^\intercal]s  - n_i^2 \|B^\star\alpha_i^\star\|^2\cdot s^\intercal B^\star\alpha_i^\star(\alpha_i^\star)^\intercal (B^\star)^\intercal s  \\
&\quad = \big( O(d) + n_i\|\alpha_i^\star\|^2 \big)\cdot \big(O (1) + n_i s^\intercal B^\star\alpha_i^\star(\alpha_i^\star)^\intercal (B^\star)^\intercal s\big) - n_i^2 \|\alpha_i^\star\|^2\cdot s^\intercal B^\star\alpha_i^\star(\alpha_i^\star)^\intercal (B^\star)^\intercal s\\
&\quad = O\big(d + n_i \|\alpha_i^\star\|^2 + d n_is^\intercal B^\star\alpha_i^\star(\alpha_i^\star)^\intercal (B^\star)^\intercal s\big).
\$
Since $\sum_{i=1}^Mn_i \|\alpha_i^\star\|^2 = \tr(\sum_{i=1}^M n_i \alpha_i^\star(\alpha_i^\star)^\intercal)=O(Nk\lambda_1)$, summing over all $i\in[M]$ yields 
\#\label{eq:variance-stat-pre}
& s^\intercal \Big[\sum_{i=1}^M \big(\E[Z_iZ_i^\intercal] - (\E Z_i)(\E Z_i^\intercal) \big) \Big]s = O\Big(Md + \sum_{i=1}^M n_i\|\alpha_i^\star\|^2 + Nd \cdot s^\intercal B^\star D(B^\star)^\intercal s \Big)\notag\\
&\qquad = O\big(Md + Nk\lambda_1 + Nd\lambda_1\|(B^\star)^\intercal s\| \big)  = O(Md + Nd\lambda_1). 
\#
where $D=\sum_{i=1}^M n_i \alpha_i^\star(\alpha_i^\star)^\intercal/N$, the second equality follows from $\lambda_1(D)\le \lambda_1$, and the last one from $d\ge k$ and $\|(B^\star)^\intercal s\|\le 1$. In addition, note that $\E[Z_i^\intercal Z_i] = \E[u_ia_i^\intercal a_iu_i^\intercal ] = \E[\|a_i\|^2]\E[u_iu_i^\intercal]$. Since $a_i$ and $u_i$ are identically distributed, and $\E Z_i = \E Z_i^\intercal$, we observe $\E[Z_iZ_i^\intercal] - (\E Z_i)(\E Z_i^\intercal) = \E[Z_i^\intercal Z_i] - (\E Z_i^\intercal)(\E Z_i)$. Thus, following from \eqref{eq:variance-stat-pre}, the variance statistic is bounded such that
\#\label{main-error-v}
v &= \max\Big\{\Big\|\sum_{i=1}^M \big(\E [Z_i Z_i^\intercal]- (\E Z_i)(\E Z_i^\intercal)\big)\Big\|, \Big\|\sum_{i=1}^M \big(\E [Z_i^\intercal Z_i]- (\E Z_i^\intercal)(\E Z_i)\big)\Big\|\Big\} \notag\\
& = O(Md + Nd\lambda_1). 
\#
We apply Theorem \ref{cor:bernstein} with $c=11$, and $\beta$, $q$, and $v$ discussed in \eqref{main-error-truncate}, \eqref{main-error-q}, and \eqref{main-error-v}. Then with probability at least $1-2d^{-10} - M\delta$, we have
\$
& \Big\|\sum_{i=1}^M \big(Z_i - \E Z_i\big)\Big\| \le \sqrt{2cv\log d} + 2c\beta\log d/3 + Mq \\
&\qquad = O\big(\sqrt{(Md + Nd\lambda_1)\log d} + \big(d\log^2(d/\delta )+ \sqrt{Nd\lambda_1}\log(d/\delta )\big)\log d + \sqrt{\delta}M(d+N)\big).
\$
Note that $M\le N$. Taking $\delta = (d+N)^{-11}$, with probability at least $1-O((d+N)^{-10})$,
\[
\Big\|\sum_{i=1}^M \big(Z_i - \E Z_i\big)\Big\| = O\big(\big(\sqrt{Md} + \sqrt{Nd\lambda_1} + d \big)\cdot\log^3(d+N) \big).  \qedhere
\]
\end{proof}

\begin{proof}[Proof of Lemma \ref{lem:inner-prod-sup}]
Let $Z_i \triangleq a_iv_i^\intercal$. Lemma \ref{lem:var-2} summarizes the properties of $v_i$ and $b_i$. We have $\E v_i = 0$ and thus $\E Z_i = 0$. We first identify a truncation level of $\|Z_i\|$. Since $\|Z_i\| = \|a_iv_i^\intercal\| \le \|(a_i-\E a_i)v_i^\intercal\| + \|(\E a_i)v_i^\intercal\|$, we have for any $\beta\ge0$,
    \$
    \pr\big(\|Z_i\| \ge \beta\big) &\le \pr\big(\|(a_i-\E a_i)v_i^\intercal\| \ge \beta/2\big) + \pr\big(\|(\E a_i)v_i^\intercal\| \ge \beta/2\big) \notag\\
    & \le\pr\big(\|a_i-\E a_i\| \ge \sqrt{\beta/2}\big) + \pr\big(\|v_i\| \ge \sqrt{\beta/2}\big) + \pr\big(\|v_i\| \ge \beta/2\big(\sqrt{N\lambda_1}\big)\big), 
    \$
    where the last inequality holds since $\|\E a_i\| \le \sqrt{N\lambda_1}$ by Lemma \ref{lem:var-1}. Note that $a_i = 2/\sqrt{n_i}\cdot\sum_{j=1}^{n_i/2}\zeta_{ij} x_{ij}$ and $b_i= 2/\sqrt{n_i}\cdot\sum_{j=n_i/2 + 1}^{n_i}\xi_{ij} x_{ij}$ with sub-gaussian variables $\zeta_{ij}$ and $\xi_{ij}$ and sub-gaussian vector $x_{ij}$, following from Assumption \ref{ass:general} and Lemma \ref{lem:sub-gaussian-xi}. Thus, applying the bounds on $\|a_i-\E a_i\|$ and $\|v_i\|$ from Lemma \ref{lem:sub-exp-bernstein} to the above equation, there are constants $c_1,c_2,c_3,c_4$ such that for any $\beta > 0$, 
    \$
        \pr\big(\|Z_i\| \ge \beta\big) &\le 4d\exp\big(- \min\big\{c_1 \beta/d, c_2 \sqrt{n_i\beta/d}\big\}\big) \\
        &\qquad + 2d\exp\big(- \min\big\{c_3 \beta^2/(Nd\lambda_1), c_4\beta\sqrt{n_i/(Nd\lambda_1)}\big\}\big).
    \$
    For any $\delta >0$, we take a large enough $C$ and set $\beta = C\max\{d\log^2(d/\delta ), \sqrt{Nd\lambda_1}\log(d/\delta )\}$ 
    thereby obtaining (noting $n_i\ge1$)
    \#\label{inner-prod-truncate}
    \pr\big(\|Z_i\| \ge \beta\big) \le \delta .
    \#
    
    Next, we bound the mean shift after truncation by Lemma \ref{lem:bound-mean-shift}. For the above $\beta$, we have
    $
    \|\E Z_i - \E[Z_i\mathds{1}\{\|Z_i\|<\beta\}] \| \le \sqrt{\E[\|a_i\|^2]\E[\|v_i\|^2]}\sqrt{\delta }$.
    Lemmas \ref{lem:var-1} and \ref{lem:var-2} show that $\E[\|a_i\|^2]=O(d+N)$ and $\E[\|v_i\|^2]=O(d)$. Thus, we have
    \#\label{inner-prod-q}
    \big\|\E Z_i - \E[Z_i\mathds{1}\{\|Z_i\|<\beta\}] \big\| =  O\big(\sqrt{d(d+N)}\sqrt{\delta}\big)\coloneqq q.
    \#

    It remains to determine the variance statistic. We first have
    $
    \E[Z_iZ_i^\intercal] = \E[a_iv_i^\intercal v_ia_i^\intercal] = \E[\|v_i\|^2]\E[a_ia_i^\intercal]$. 
Then Lemmas \ref{lem:var-1} and \ref{lem:var-2} yield that, for any $s\in \mathbb{S}^{d-1}$, 
    \$
    s^\intercal \E[Z_iZ_i^\intercal]s & = \E[\|v_i\|^2]\cdot s^\intercal\E[a_ia_i^\intercal]s   
    = O(d) + O(d)\cdot n_i s^\intercal B^\star\alpha_i^\star(\alpha_i^\star)^\intercal (B^\star)^\intercal s.
    \$
    Therefore, when summing over $i\in [M]$, we have
    \#\label{eq:inner-prod-variance-stat-pre}
    s^\intercal \Big(\sum_{i=1}^M \E[Z_iZ_i^\intercal] \Big)s & = O(Md) + O(d)\cdot  s^\intercal B^\star \Big(\sum_{i=1}^M n_i \alpha_i^\star(\alpha_i^\star)^\intercal \Big) (B^\star)^\intercal s \notag\\
    & = O(Md) + O(Nd\lambda_1)\cdot\|(B^\star)^\intercal s\| = O(Md + Nd\lambda_1),
    \#
    where the second line follows from $\lambda_1(\sum_{i=1}^M n_i \alpha_i^\star(\alpha_i^\star)^\intercal)\le N\lambda_1$ and the last from $\|(B^\star)^\intercal s\|\le 1$. Moreover, $\E[Z_i^\intercal Z_i] = \E[v_ia_i^\intercal a_iv_i^\intercal ] = \E[\|a_i\|^2]\E[v_iv_i^\intercal]$. Then Lemmas \ref{lem:var-1} and \ref{lem:var-2} give that
    \$
    \big\|\E[Z_i^\intercal Z_i]\big\| \le \E[\|a_i\|^2]\cdot\big\|\E[v_iv_i^\intercal]\big\| = \big(O(d) + n_i\|\alpha_i^\star\|^2\big) \cdot O(1) = O(d + n_i\|\alpha_i^\star\|^2\big).
    \$
    Thus, since $\sum_{i=1}^M n_i\|\alpha_i^\star\|^2 = \tr(\sum_{i=1}^M n_i \alpha_i\alpha_i^\intercal) = O(Nk\lambda_1)$, we have
    \#\label{eq:inner-prod-variance-stat-1}
    \Big\|\sum_{i=1}^M \E [Z_i^\intercal Z_i]\Big\| & \le \sum_{i=1}^M\big\|\E[Z_i^\intercal Z_i]\big\| = O(Md) + O\Big(\sum_{i=1}^M n_i\|\alpha_i^\star\|^2\Big) = O(Md + Nk\lambda_1).
    \#
    By \eqref{eq:inner-prod-variance-stat-pre} and \eqref{eq:inner-prod-variance-stat-1}, and recalling that $\E Z_i=0$ and $k\le d$, we bound the variance statistic as 
    \#\label{inner-prod-v}
    v &= \max\Big\{\Big\|\sum_{i=1}^M \E [Z_i Z_i^\intercal]\Big\|, \Big\|\sum_{i=1}^M \E [Z_i^\intercal Z_i]\Big\|\Big\} 
    = O(Md + Nd\lambda_1).
    \#
    We apply Theorem \ref{cor:bernstein} with $c=11$, and $\beta$, $q$, and $v$ discussed in \eqref{inner-prod-truncate}, \eqref{inner-prod-q}, and \eqref{inner-prod-v}. Then with probability at least $1-2d^{-10} - M\delta$, we have
    \$
    & \Big\|\sum_{i=1}^M \big(Z_i - \E Z_i\big)\Big\| \le \sqrt{2cv\log d} + 2c\beta\log d/3 + Mq \\
    &\quad = O\big(\sqrt{(Md + Nd\lambda_1)\log d} + \big(d\log^2(d/\delta )+ \sqrt{Nd\lambda_1}\log(d/\delta )\big)\log d + \sqrt{\delta}\sqrt{d(d+N)}M\big).
    \$
    Note that $M\le N$. Taking $\delta = (d+N)^{-11}$, with probability at least $1-O((d+N)^{-10})$,
    $$
    \Big\|\sum_{i=1}^M \big(Z_i - \E Z_i\big)\Big\| = O\big(\big(\sqrt{Md} + \sqrt{Nd\lambda_1} + d \big)\cdot\log^3(d+N) \big). \qedhere
    $$
\end{proof}
\begin{proof}[Proof of Lemma \ref{lem:noise-sup}]
Let $Z_i = b_iv_i^\intercal$, where $\E b_i=\E v_i =0$ from Lemma \ref{lem:var-2} and thus $\E Z_i =0$. We first identify a truncation level of $\|Z_i\|$. 
Note that $\xi_{ij}$ is sub-gaussian from Lemma \ref{lem:sub-gaussian-xi}. By applying Lemma \ref{lem:sub-exp-bernstein} to bound the norms of $b_i = 2/\sqrt{n_i}\cdot\sum_{j=1}^{n_i/2}\xi_{ij} x_{ij}$ and $v_i=2/\sqrt{n_i}\cdot\sum_{j=n_i/2+1}^{n_i}\xi_{ij} x_{ij}$, there are $c_1$ and $c_2$ such that for any $\beta\ge0$,  
\$
\pr\big(\|Z_i\| \ge \beta\big) & \le \pr\big(\|b_i\| \|v_i\| \ge \beta\big) \le \pr\big(\|b_i\| \ge \sqrt{\beta}\big) + \pr\big(\|v_i\| \ge \sqrt{\beta}\big) \notag\\
& \le 4d\exp\big(- \min\big\{c_1 \beta/d, c_2 \sqrt{n_i\beta/d}\big\}\big). 
\$
For any $\delta >0$, we take $\beta = Cd\log^2(d/\delta)$ with a large enough $C$ in the above equation such that $\beta\ge C\max\{d\log(d/\delta ), d\log^2(d/\delta )/\min_i\{n_i\}\}$ and thus obtain
\#\label{eq:error-z}
\pr\big(\|Z_i\| \ge \beta\big) \le \delta .
\#

Next, for $\beta$ defined above, we establish bounds on the mean shift after truncation using Lemma \ref{lem:bound-mean-shift}. Here we substitute $\E[\|b_i\|^2]=\E[\|v_i\|^2]=O(d)$ from Lemma \ref{lem:var-2} to obtain,
    \#\label{eq:error-q}
    \big\|\E Z_i - \E[Z_i\mathds{1}\{\|Z_i\|<\beta\}] \big\| 
    \le \sqrt{\E\big[\|b_i\|^2\big]\E\big[\|v_i\|^2\big]}\sqrt{\delta }
    \le  O\big(\sqrt{\delta}d\big)\coloneqq q.
    \#

Then it remains to determine the variance statistic. By definition, we have $
\E[Z_iZ_i^\intercal] = \E[b_iv_i^\intercal v_ib_i^\intercal] = \E[\|v_i\|^2]\E[b_ib_i^\intercal]$.
Since $b_i$ and $v_i$ are identically distributed, it holds that $\E[Z_i^\intercal Z_i] = \E[v_ib_i^\intercal b_iv_i^\intercal ] = \E[\|b_i\|^2]\E[v_iv_i^\intercal] = \E[Z_i^\intercal Z_i]$. Then Lemma \ref{lem:var-2} yields
\$
\big\|\E[Z_iZ_i^\intercal]\big\| = \big\|\E[Z_i^\intercal Z_i]\big\| \le \E[\|v_i\|^2]\cdot\big\|\E[b_ib_i^\intercal]\big\| = O(d).
\$
Since $Z_i$ is mean zero, we further have
\#\label{eq:error-v}
v = \max\Big\{\Big\|\sum_{i=1}^M \E [Z_i Z_i^\intercal]\Big\|, \Big\|\sum_{i=1}^M \E [Z_i^\intercal Z_i]\Big\|\Big\} = O(Md).
\#
Applying Theorem \ref{cor:bernstein} with $c=11$, and $\beta$, $q$, and $v$ discussed in \eqref{eq:error-z}, \eqref{eq:error-q}, and \eqref{eq:error-v}, we obtain, with probability at least $1-2d^{-10} - M\delta$,
    \$
    \Big\|\sum_{i=1}^M \big(Z_i - \E Z_i\big)\Big\| &\le \sqrt{2cv\log d} + 2c\beta\log d/3 + Mq \\
    &= O\big(\sqrt{Md\log d} + d\log^2(d/\delta )\log d + \sqrt{\delta}Md\big).
    \$
    Note that $M\le N$. Taking $\delta = (d+N)^{-11}$, 
    with probability at least $1-O((d+N)^{-10})$,
    $$
    \Big\|\sum_{i=1}^M \big(Z_i - \E Z_i\big)\Big\| = O\big(\big(\sqrt{Md} + d \big)\cdot\log^3(d+N) \big). \qedhere
    $$
\end{proof}

\section{Proofs of the Remaining Results}
In this section, we prove the remaining results in the paper.
\subsection{Proofs of Propositions in Section \ref{subsect:estimator}}
We prove Proposition \ref{thm:me} by first fixing $B$ and optimizing for $\alpha_i$.
\medskip
\begin{proof}[Proof of Proposition \ref{thm:me}]
We first partially optimize for $\alpha_i$ given a fixed $B\in\cO^{d\times k}$. By taking partial derivatives and solving $\sum_{j=1}^{n_i}  B^\intercal(u_{ij} - B\widehat\alpha_i) = 0$ with $B^\intercal B = I_k$, we have
\$
\widehat\alpha_i = B^\intercal \overline u_i.
\$
Substituting the optimal $\widehat\alpha_i$ into the original problem, this leaves us to find $B\in\cO^{d\times k}$ to minimize:
\$
\sum_{i=1}^M\sum_{j=1}^{n_i}\big\|u_{ij} - B B^\intercal \overline u_i \big\|^2 & = \sum_{i=1}^M\sum_{j=1}^{n_i}\big\|u_{ij} - \overline u_i + \overline u_i - B B^\intercal \overline u_i \big\|^2 \\
& = \sum_{i=1}^M\sum_{j=1}^{n_i}\big\|u_{ij} - \overline u_i \big\|^2 + \sum_{i=1}^M n_i \big\|\overline u_i - B B^\intercal \overline u_i \big\|^2 \\
& = \sum_{i=1}^M\sum_{j=1}^{n_i}\big\|u_{ij} - \overline u_i \big\|^2 + \sum_{i=1}^M n_i \big\|\overline u_i \big\|^2 - \sum_{i=1}^M n_i (\overline u_i)^\intercal B B^\intercal \overline u_i, 
\$
where the second equality holds since $\sum_{j=1}^{n_i}(u_{ij} - \overline u_i)^\intercal (\overline u_i- B B^\intercal \overline u_i) = 0$ for $i\in[M]$ and the last equality holds due to $B^\intercal B = I_k$.
Thus, the least-squares problem is equivalent to the following one, 
\$
\max_{B\in\cO^{d\times k}} \sum_{i=1}^M n_i \overline u_i^\intercal B B^\intercal \overline u_i.
\$
In addition, we have
\$
\sum_{i=1}^M n_i \overline u_i^\intercal B B^\intercal \overline u_i = \sum_{i=1}^M n_i \tr\big(B^\intercal \overline u_i\overline u_i^\intercal B\big) = \tr\Big(B^\intercal \Big(\sum_{i=1}^M n_i \overline u_i\overline u_i^\intercal \Big) B\Big).
\$
We define $Z = \sum_{i=1}^M n_i \overline u_i\overline u_i^\intercal$. Solving the PCA problem $\max_{B\in \R^{d\times k}} \tr(BZ B)$ s.t. $B^\intercal B =I_k$, we obtain the optimal $\widetilde{B}$ as the top-$k$ eigenvectors of $Z$.    
\end{proof}

Similarly, we prove Proposition \ref{claim:app}.
\medskip
\begin{proof}[Proof of Proposition \ref{claim:app}]
We first partially optimize for $\alpha_i$ for a fixed $B\in\cO^{d\times k}$. 
By taking partial derivatives and solving $\sum_{j=1}^{n_i} B^\intercal\Gamma_i^{-1}x_{ij}(y_{ij} - x_{ij}^\intercal \Gamma_i^{-1} B\alpha_i) = 0$ with $B^\intercal B = I_k$, we have
\$
\widehat\alpha_i = \big(B^\intercal\Gamma_i^{-1}\widehat \Gamma_i \Gamma_i^{-1} B\big)^{\dag} B^\intercal\Gamma_i^{-1}\widehat z_i.
\$
Let $\Lambda_i = \Gamma_i^{-1} B\big(B^\intercal\Gamma_i^{-1}\widehat \Gamma_i \Gamma_i^{-1} B\big)^{\dag} B^\intercal\Gamma_i^{-1}$. Substituting the optimal $\widehat\alpha_i$ into the original problem, this leaves us to find $B\in\cO^{d\times k}$ to minimize:
\$
\sum_{i=1}^M\sum_{j=1}^{n_i}\big(y_{ij} - x_{ij}^\intercal \Gamma_i^{-1} B\widehat\alpha_i\big)^2 & = \sum_{i=1}^M\sum_{j=1}^{n_i}\big(y_{ij} - x_{ij}^\intercal \Lambda_i\widehat z_i\big)^2 \\
& = \sum_{i=1}^M\Big(\sum_{j=1}^{n_i} y_{ij}^2 - 2n_i\widehat z_i^\intercal \Lambda_i \widehat z_i+ n_i\widehat z_i^\intercal \Lambda_i \widehat\Gamma_i \Lambda_i\widehat z_i\Big) \\
& = \sum_{i=1}^M\Big(\sum_{j=1}^{n_i} y_{ij}^2 - n_i\widehat z_i^\intercal \Lambda_i \widehat z_i\Big),
\$
where the last equality holds since it is easy to compute that $\Lambda_i \widehat\Gamma_i \Lambda_i = \Lambda_i$. Thus, the least squares problem in \eqref{eq:least-squares} is equivalent to the following one,
$$
\max_{B\in\cO^{d\times k}} \sum_{i=1}^M n_i \widehat z_i^\intercal \Lambda_i \widehat z_i. \qedhere
$$
\end{proof}

We now compute $\E[\sum_{i=1}^M n_i \widehat z_i \widehat z_i^\intercal]$ in Proposition \ref{prop:x-gen}.
\medskip
\begin{proof}[Proof of Proposition \ref{prop:x-gen}]
We begin with $\E[\sum_{i=1}^M n_i \widehat z_i \widehat z_i^\intercal]$. By the definition of $\widehat z_i$, we have
\#\label{eq:e-gen}
\E \Big[\sum_{i=1}^M n_i \widehat z_i \widehat z_i^\intercal\Big] & = \E \Big[\sum_{i=1}^M n_i \Big(\frac{1}{n_i}\sum_{j=1}^{n_i}y_{ij} x_{ij}\Big) \Big(\frac{1}{n_i}\sum_{j=1}^{n_i}y_{ij} x_{ij}^\intercal\Big)\Big] \notag\\
& = \sum_{i=1}^M \frac{1}{n_i} \Big(\sum_{j_1\neq j_2}\E[y_{ij_1}x_{ij_1}]\E[ y_{ij_2} x_{ij_2}^\intercal ] + \sum_{j=1}^{n_i} \E[y_{ij}^2 x_{ij}x_{ij}^\intercal]\Big).
\#
For the first term, we have $\E[y_{ij_1}x_{ij_1}] = \E[x_{ij_1}x_{ij_1}^\intercal]\theta_i^\star = B^\star\alpha_i^\star$. It remains to compute the second term above. Since $\xi_{ij}$ and $x_{ij}$ are independent, we have
\$
\sum_{i=1}^M \frac{1}{n_i} \sum_{j=1}^{n_i} \E[y_{ij}^2 x_{ij}x_{ij}^\intercal] & = \sum_{i=1}^M\frac{1}{n_i} \sum_{j=1}^{n_i} \E[(x_{ij}^\intercal \theta_i^\star + \xi_{ij})^2 x_{ij}x_{ij}^\intercal] \\
& = \sum_{i=1}^M\frac{1}{n_i} \sum_{j=1}^{n_i} \Big(\E[x_{ij}^\intercal \theta_i^\star (\theta_i^\star)^\intercal x_{ij} x_{ij}x_{ij}^\intercal] + \E[\xi_{ij}^2] \E[x_{ij}x_{ij}^\intercal]\Big) \\
& = \sum_{i=1}^M\frac{1}{n_i} \sum_{j=1}^{n_i} \E[x_{ij}^\intercal \theta_i^\star (\theta_i^\star)^\intercal x_{ij} x_{ij}x_{ij}^\intercal] + \sum_{i=1}^M \frac{1}{n_i} \sum_{j=1}^{n_i} \E[\xi_{ij}^2]\Gamma_i.
\$
We conclude the proof by substituting these results into \eqref{eq:e-gen}.
\end{proof}

\subsection{Proof of Example \ref{lem:well-cond}}
Let $\|\cdot\|_{\psi_2}$ and $\|\cdot\|_{\psi_1}$ be the sub-gaussian and sub-exponential norms of random variables, respectively.
\medskip
\begin{proof}[Proof of Example \ref{lem:well-cond}]
Note that $\E D = I_k/k$ when $\E [\alpha_i\alpha_i^\intercal] = I_k/k$ for all $i$. We first use an $\varepsilon$-net argument to show that there exists a constant $c_1>0$ such that for any $t>0$, with probability at least $1-2\exp(-t^2)$,
\#\label{eq:D-eig}
\frac{N}{M} \big\| D- I_k/k\big\| \le c\max_{i\in[M]}n_i\cdot \big(\sqrt{k/M} + t/\sqrt{M}\big)/k,
\#
Let $\varepsilon = 1/4$. The covering number of the unit sphere $\mathbb{S}^{k-1}$ allows us to find an $\varepsilon$-net for the sphere, denoted by $\cU$, with cardinality $|\cU|\le 9^k$. With this $\varepsilon$-net, we can bound the matrix norm as follows,
\#\label{eq:inner-prod-main}
 \frac{N}{M} \big\| D- I_k/k\big\| &=  \Big\|\frac{1}{M} \sum_{i=1}^M n_i \alpha_i \alpha_i^\intercal - \frac{N}{Mk} I_k\Big\| = \max_{u\in \mathbb{S}^{k-1}}     \Big|\frac{1}{M}\sum_{i=1}^M n_i u^\intercal \alpha_i \alpha_i^\intercal u - \frac{N}{Mk}\Big| \notag\\
 & \le 2\max_{u\in \cU}     \Big|\frac{1}{M}\sum_{i=1}^M n_i u^\intercal \alpha_i \alpha_i^\intercal u - \frac{N}{Mk}\Big|.
\#    
Now we fix $u\in\cU$. Let $v_i = \sqrt{n_i}\alpha_i^\intercal u$. By assumption, $\sqrt{k}\alpha_i$ are independent, isotropic, and sub-gaussian random vectors with $\|\sqrt{k}\alpha_i\|_{\psi_2} \le c_2$ for a constant $c_2>0$. Thus, $v_i = \sqrt{n_i}\alpha_i^\intercal u$ are independent sub-gaussian random variables with $\E v_i^2 = n_i/k$ and $\|v_i\|_{\psi_2} \le c_1\sqrt{n_i/k}$. Therefore, $v_i^2 - n_i/k$ are independent, mean zero, and sub-exponential random variables with $\|v_i^2 - n_i/k\|_{\psi_1} \le w^2$, where $w^2 = c_3\max_{i\in[M]}n_i/k$ for a constant $c_3>0$. Let $\delta = C(\sqrt{k/M} + t/\sqrt{M})$ with a constant $C>0$ to be defined, and
$s = w^2\max\{\delta, \delta^2\}$. Then Bernstein's inequality implies that
\#\label{eq:bern1-s}
\pr\bigg(\Big|\frac{1}{M}\sum_{i=1}^M n_i u^\intercal \alpha_i \alpha_i^\intercal u - \frac{N}{Mk}\Big| \ge \frac{s}{2}\bigg) &= \pr\bigg(\Big|\frac{1}{M}\sum_{i=1}^M \Big(v_i^2 - \frac{n_i}{k}\Big)\Big| \ge \frac{s}{2}\bigg) \notag\\
& \le 2\exp\big[-c_4\min\big\{s^2/w^4, s/w^2\big\} M\big]. 
\#
We take $s$ as follows, with a sufficiently large constant $C$ to be chosen,
\# \label{eq:def-s}
s = Cw^2\big(\sqrt{k/M} + t/\sqrt{M}\big) = c_3C\max_{i\in[M]}n_i\cdot \big(\sqrt{k/M} + t/\sqrt{M}\big)/k,
\#
which implies that $\min\{s^2/w^4, s/w^2\} \ge C (k/M + t^2/M)$ since $k\le M$. Therefore, we have
\#\label{eq:bern1-2}
2\exp\big[-c_4\min\big\{s^2/w^4, s/w^2\big\} M\big] \le 2\exp\big[-c_4C(k + t^2)\big].
\#
Next, we take a union bound over $u\in\cU$. Recalling that $|\cU| \le 9^k$ and choosing the constant $C$ large enough, we obtain from \eqref{eq:bern1-s} and \eqref{eq:bern1-2} that
\$
\pr\bigg(\max_{u\in\cU}\Big|\frac{1}{M}\sum_{i=1}^M n_i u^\intercal \alpha_i \alpha_i^\intercal u - \frac{N}{Mk}\Big| \ge \frac{s}{2}\bigg) \le 9^k \cdot 2\exp\big[-c_4C(k + t^2)\big] \le 2\exp(-t^2).
\$ 
Combining the above with \eqref{eq:inner-prod-main}, we conclude the proof for \eqref{eq:D-eig}. The result in \eqref{eq:D-eig} further implies that with probability at least $1-2\exp(-t^2)$, we have $\lambda_1 \le 1/k + {Ms}/N$ and $\lambda_k \ge 1/k - {Ms}/N$. Thus, to satisfy the well-represented condition that there is a constant $C_1 > 0$ such that, 
\$
C_1\ge \frac{\lambda_1}{\lambda_k} = \frac{1/k + {Ms}/N}{1/k - {Ms}/N} = \frac{1 + {Mks}/N}{1 - {Mks}/N},
\$
we only need to ensure that $Mks/N \le c_5$ for a constant $c_5>0$. By the definition of $s$ in \eqref{eq:def-s}, this is equivalent to require
\$
\max_{i\in[M]} n_i\le c\frac{N}{M} \frac{1}{\sqrt{k/M} + t/\sqrt{M}}.
\$
We conclude the proof by choosing $t=\sqrt{k}$.
\end{proof}

\subsection{Proofs of Lemma \ref{lem:e-zi}}
We first prove Lemma \ref{lem:e-zi} under the assumption that $x_{ij}$ follows a standard Gaussian distribution.
\medskip
\begin{proof}[Proof of Lemma \ref{lem:e-zi}]
For fixed $i$ and $j$, we decompose the vector $x_{ij}$ into two orthogonal components, one in the subspace $B$ and the other in its orthogonal complement. Formally, we define $x_{ij}^B = BB^\intercal x_{ij}$ and $x_{ij}^{B_\perp} = (I-BB^\intercal)x_{ij}$ such that $x_{ij}^B + x_{ij}^{B_\perp} = x_{ij}$, $B^\intercal x_{ij}^B = B^\intercal x_{ij}$, and $(x_{ij}^B)^\intercal x_{ij}^{B_\perp} = 0$. It is easy to show that $\Cov(x_{ij}^B, x_{ij}^{B_\perp}) =0$, implying that the Gaussian vectors $x_{ij}^B$ and $x_{ij}^{B_\perp}$ are independent. Thus, given \eqref{model:glm}, we have
\$
    \E[y_{ij}x_{ij}] &= \E\big[\E[y_{ij}\given x_{ij}]\cdot x_{ij}\big] = \E\big[h_i(B^\intercal x_{ij})\cdot x_{ij}\big] \\
    & = \E\big[h_i(B^\intercal x_{ij}^B)\cdot (x_{ij}^B + x_{ij}^{B_\perp})\big] \\
    & = \E\big[h_i(B^\intercal x_{ij}^B)\cdot x_{ij}^B\big] \\
    & = BB^\intercal \E\big[h_i(B^\intercal x_{ij})x_{ij}\big],
\$
where the fourth equality follows from the independence of $x_{ij}^B$ and $x_{ij}^{B_\perp}$ and $\E[x_{ij}^{B_\perp}]=0$. Let $u = B^\intercal x_{ij}$ and note that $u\sim N(0, I_k)$. We conclude the proof by obtaining
\$
\E[y_{ij}x_{ij}] = BB^\intercal \E\big[h_i(B^\intercal x_{ij})x_{ij}\big] = B \E\big[h_i(u)u\big];
\$
thus $\E[\overline z_i] = 2/n_i \cdot \sum_{j=1}^{n_i/2}\E[y_{ij}x_{ij}] = B \E[h_i(u)u]$. A similar statement holds for $\E[\widetilde z_i]$.
\end{proof}

\end{appendix}

\end{document}